%% file: main.tex
\documentclass{article} %
\usepackage{iclr2027_conference,times}

\usepackage{xcolor}
\usepackage{soul}
\sethlcolor{yellow}

\usepackage[ruled]{algorithm2e}
\usepackage{amsmath,amssymb}
\usepackage{float}
\usepackage{bm}

\usepackage{hyperref}
\usepackage{url}

\input{macros.tex}

\title{Fine-tuning on self-generated and reward-weighted data: Learning dynamics, Convergence rates, and benefits of off-policyness}

\author{Zhiwei Wang \thanks{Work done during an internship at Alibaba Group.} \\
Department of Mathematical Sciences \\
Tsinghua University \\
\texttt{zhiweithu@gmail.com} \\
\And
Yanxi Chen \\
Alibaba Group \\
\texttt{yxchen0@outlook.com}
\And
Yaliang Li \& Bolin Ding \\
Alibaba Group \\
\texttt{\{yaliang.li, bolin.ding\}@alibaba-inc.com} \\
}

\definecolor{todo}{RGB}{200,0,20}

\iclrfinalcopy %
\begin{document}

\maketitle

\begin{abstract}
\input{sections/abstract}

\end{abstract}

\input{sections/introduction}

\input{sections/main_results_high_level}

\input{sections/main_results_in_depth.tex}

\input{sections/related_work}

\input{sections/conclusion}

\bibliography{reference}
\bibliographystyle{iclr2027_conference}

\clearpage
\newpage

\appendix
\input{appendix/convergence_proofs}
\input{appendix/comparison_proofs}

\end{document}

%% file: macros.tex
\usepackage[T1]{fontenc}
\usepackage[utf8]{inputenc}
\usepackage{amsmath,amssymb,amsthm,mathtools}
\usepackage{graphicx,booktabs,microtype,xcolor}
\usepackage{hyperref}
\usepackage{xurl}
\hypersetup{colorlinks=true,citecolor=blue,linkcolor=blue,urlcolor=blue}
\newtheorem{theorem}{Theorem}[section]
\newtheorem{lemma}[theorem]{Lemma}

\theoremstyle{definition}

\theoremstyle{remark}
\newtheorem{remark}[theorem]{Remark}

\DeclareMathOperator{\diag}{diag}
\DeclareMathOperator{\logit}{logit}

\newcommand{\R}{\mathbb{R}}

\newcommand{\E}{\mathbb{E}}

\newcommand{\btheta}{{\theta}}

\newcommand{\mumin}{\mu_{\text{min}}}
\newcommand{\mumax}{\mu_{\text{max}}}
\newcommand{\lo}{\theta}

\newcommand{\rg}{D}
\newcommand{\ir}{d}

\newcommand{\btran}{\bar{b}}
\newcommand{\ttran}{\bar{t}}
\newcommand{\pb}{x}

\newcommand{\res}{{RE}($S$)\xspace}
\newcommand{\reone}{{RE}(1)\xspace}

\newcommand{\DKL}{\text{D}_{\text{KL}}}
\newcommand{\Acal}{\mathcal{A}}

\newcommand{\bmu}{{\mu}}
\newcommand{\bpi}{{\pi}}

\newcommand{\pitheta}{\pi_{\btheta}} %
\newcommand{\pithetaindex}[2]{\pi_{\btheta_{#1, #2}}} %
\newcommand{\pithetaaction}[1]{\pi_{\btheta}(#1)} %
\newcommand{\Preward}{P_{R}}
\newcommand{\nablatheta}{\nabla_{\btheta}}
\newcommand{\pitildemub}{\tilde{\pi}^{\mu}_{b}}

\newcommand{\Teps}{T_{\epsilon}}

%% file: sections/abstract.tex
We study the learning dynamics of fine-tuning a policy model on self-generated and reward-weighted rollout data,
with particular focus on a generalized version of classic REINFORCE --- referred to as \res --- that updates the rollout distribution once every $S \ge 1$ gradient steps.
Prior work in bandits and reinforcement learning has developed rich theory for REINFORCE and policy gradient methods, and on-policy sampling (i.e., a small value of $S$, ideally $1$) is often viewed as crucial to their success;
yet in prominent application like post-training large language models, reward-guided self-training has proved to be effective even when the rollout distribution is updated infrequently, but theoretical understanding remains limited for the learning dynamics and convergence properties of these off-policy methods.
To bridge these gaps, we develop a unified theory for \res that covers the full spectrum of $S \ge 1$: it can be interpreted as a stage-wise iterative optimization process, where each stage takes $S$ gradient steps for minimizing the Kullback–Leibler distance to a fixed reward-weighted rollout distribution.
For multi-arm bandits with softmax policies, our in-depth analysis and numerical experiments reveal three key findings:
(1)~for any fixed $S$, \res enjoys global convergence to the optimal policy as the number of rollout distribution updates $B = \lfloor T / S \rfloor \rightarrow \infty$, where $T$ denotes the total number of gradient steps;
(2)~we prove tight two-sided bounds for the convergence rate of \res, and show that its suboptimality gap achieves an asymptotic $\Theta(1 / T)$ convergence rate, while $S$ only affects the length of a burn-in phase;
(3)~when initialized at a weak policy with a small optimal-action probability, \reone gets trapped around suboptimal policies for a long period, whereas \res with a suitable $S$ avoids the detour and achieves significantly faster convergence to the global optimum, highlighting the benefits of off-policyness in this case.

%% file: sections/introduction.tex
\section{Introduction}
\label{sec:introduction}

Multi-armed bandits and reinforcement learning (RL) have a long history \citep{lattimore2020bandit,sutton1998reinforcement} and have achieved substantial empirical success, including the recent use of RL in post-training large language models (LLMs) \citep{ouyang2022training,touvron2023llama2openfoundation,openai-o1,deepseek-r1,kimiteam2025kimik15scalingreinforcement}.
Many algorithms in these areas can be viewed as fine-tuning a policy model $\pitheta$ on self-generated and reward-weighted data.
One prominent example is the classical REINFORCE algorithm \citep{williams1992simple} that, in the bandit setting, estimates the policy gradient by
$(1/N)\sum_{i=1}^{N} r_i\,\nabla_{\btheta}\log\pitheta(a_i)$, 
where $a_i \sim \pitheta$ is a sampled action and $r_i$ is the reward for taking action $a_i$.

This perspective highlights a gap between two major lines of methodology, practice, and theory:

(1) In the bandit and RL literature, there exists a rich theory for policy gradient and policy iteration methods, including convergence and sample-complexity guarantees \citep{sutton1999policy,agarwal2021theory,mei2020global,mei2023stochastic,lu2024towards}.
Much of this theory assumes on-policy sampling, where the rollout distribution matches the current policy under optimization.
Methods that account for (a limited degree of) off-policyness often require explicit algorithmic modifications, for example through trust-region regularization or clipping importance-sampling ratios \citep{schulman2015trust,schulman2017proximal,espeholt2018impala}.
In practice, keeping the rollout and training policies closely aligned is also an important design consideration for these methods, motivating substantial efforts to support frequent and efficient synchronization of LLM model weights in RL infrastructure \citep{verl,openrlhf,trinity}.

(2) However, in the broader reward-guided self-training paradigm, it is also common to update the rollout distribution \emph{infrequently} while fine-tuning the policy model on rollout samples weighted or filtered by their rewards, using standard log-likelihood objectives.
Successful examples in the LLM domain include STaR \citep{zelikman2022star}, RAFT \citep{dong2023raft}, ReST \citep{gulcehre2023reinforcedselftrainingrestlanguage}, ReST$^{\text{EM}}$ \citep{singh2024humandatascalingselftraining}, rejection-sampling fine-tuning in Llama~2 \citep{touvron2023llama2openfoundation}, AsymRE \citep{arnal2025asymmetric}, and more recent industrial practice.
Such methods can be implemented with simple infrastructure: an LLM inference engine \citep{kwon2023efficientmemorymanagementlarge,2024sglangefficientexecutionofstructuredlanguagemodelprograms} generates rollout data and periodically loads an updated policy checkpoint, while a standard supervised fine-tuning (SFT) engine performs gradient updates on the reward-weighted or reward-filtered data.
Despite their empirical success, theoretical understanding and guarantees for these off-policy methods remain limited.

These observations raise intriguing research questions. 
What are the connections and differences between these two lines of on-policy versus off-policy methodology?
How does rollout staleness affect policy learning?
What convergence guarantees can be established when the rollout distribution is synchronized with the trained policy infrequently?
In particular, can off-policyness possibly lead to faster convergence than on-policy learning?

\paragraph{Contributions and outline.}

Towards answering these questions, we focus on a simple algorithm termed \res and formalized in Algorithm~\ref{alg:res_general}, which is a generalized version of vanilla REINFORCE that updates the rollout distribution once every $S \ge 1$ gradient steps.
The parameter $S$ denotes the maximum staleness of a rollout sample with respect to the most updated policy model, and \reone reduces to standard on-policy REINFORCE.
We show in Section~\ref{sec:main_results_high_level} that \res can be interpreted as a stage-wise iterative optimization process, where each stage effectively takes $S$ gradient steps for minimizing the Kullback-Leibler (KL) distance to the reward-weighted rollout distribution.
Unlike on-policy REINFORCE --- which moves along the direction of standard policy gradient --- our interpretation shows that \res takes an alternative path towards the optimal policy.

In Section~\ref{sec:main_results_in_depth}, we further focus on $K$-arm bandits and policies with softmax parameterization, for in-depth analysis and formal theoretical guarantees.
Our key findings include the following:
\begin{enumerate}
\item Theorem~\ref{thm:global}: \res enjoys global convergence to the optimal policy as the total number of gradient steps $T \rightarrow \infty$ regardless of $S$, as long as the number of rollout distribution updates $B = \lfloor T / S \rfloor$ also grows unbounded;
\item Theorems~\ref{thm:rate-upper} and~\ref{thm:rate-lower}: We prove tight two-sided bounds showing that the suboptimality gap of \res achieves an asymptotic $\Theta(1 / T)$ convergence rate with respect to the number of gradient updates, while $S$ only affects the length of a burn-in phase;
\item Theorem~\ref{thm:benefits_off_policyness}: When initialized at a weak policy with a small optimal-action probability $x$, \reone provably gets trapped around suboptimal policies for $\Omega(x^{-(K-1)})$ steps, whereas \res with $S = \Theta(x^{-1})$ avoids such a detour and approaches the global optimum within $O(x^{-1} \log(1/x))$ gradient steps, highlighting the benefits of off-policyness in this case.
\end{enumerate}

Discussion on prior literature and its connections to our work can be found in Section~\ref{sec:related_work}.
Limitations of our work and directions for future research are discussed in Section~\ref{sec:conclusion}.
We hope that our theoretical study and numerical experiments can offer new insights for reward-guided self-training methodology, and inspire further research or practice in this area.

%% file: sections/main_results_high_level.tex
\section{A high-level understanding of RE(S) learning dynamics}
\label{sec:main_results_high_level}

We formalize the problem setting as follows.
Consider a multi-arm bandit with action space $\Acal$.
Let $\btheta$ denote the parameters of a policy, and $\pitheta$ denote its output distribution.
Every time an action $a \in \Acal$ is sampled from the policy distribution $a \sim \pitheta$, 
it receives a (stochastic) reward $r \sim \Preward(a)$ with expectation $\E[r] = \mu(a) \ge 0$.
We assume non-negative reward means $\mu$ throughout this work.
The mean reward of a policy $\pi$ is denoted by $J(\pi) \coloneqq \E_{a \sim \pi} \mu(a)$.

The \res algorithm is formalized in Algorithm~\ref{alg:res_general}.
It is a double-loop process: 
in the $b$-th outer iteration, a total of $N \times S$ action-reward pairs are sampled from the same rollout distribution $\pithetaindex{b}{0}$;
then, the inner loop takes $S$ gradient steps of reward-weighted fine-tuning, each consuming $N$ rollout samples.
It is obvious that the special case $S=1$ reduces to standard on-policy REINFORCE.

\begin{center}
\begin{minipage}{.9\linewidth}
\SetAlgoCaptionSeparator{}
\begin{algorithm}[H]
\caption{\res, a general version}
\label{alg:res_general}

\KwIn{initial policy parameters $\btheta_{0, 0}$, staleness $S \in \mathbb{N}_{+}$, learning rate $\eta > 0$.}

\For{$b = 0, 1, \dots, B-1$}{

    Sample $a_{s, i} \sim \pithetaindex{b}{0}$ and its reward $r_{s, i} \sim \Preward(a_{s, i})$, $0 \le s < S$, $1 \le i \le N$.
    
    \For{$s = 0, 1, \dots, S-1$}{

        $\btheta_{b, s+1} \gets \btheta_{b, s} + \frac{\eta}{N} \sum_{i=1}^{N} r_{s, i} \nablatheta \log \pitheta(a_{s, i}) |_{\btheta = \btheta_{b, s}}$.

    }

    Set $\btheta_{b+1, 0} \gets \btheta_{b, S}$
    
}

\end{algorithm}

\end{minipage}
\end{center}

While standard policy gradient theory says $\E_{a \sim \pitheta  } [ \mu(a) \nablatheta \log \pithetaaction{a} ] = \nablatheta J(\pitheta)$,
such unbiasedness is clearly not true for the gradient estimate in \res when off-policyness occurs, namely $S > 1$.
Fortunately, we can show that \res takes an alternative path towards the optimal policy.
Indeed, \res can be viewed as a stage-wise optimization process, where each stage takes $S$ gradient-descent steps for minimizing the KL distance to the reward-weighted rollout distribution $\pitildemub$:
\begin{align*}
    &\E_{a \sim \pithetaindex{b}{0}, r \sim \Preward(a)}[r \cdot \nablatheta \log \pithetaaction{a}] 
    =  \E_{a \sim \pithetaindex{b}{0}} [\mu(a) \cdot \nablatheta \log \pithetaaction{a}] \\
    &\qquad = Z_b \cdot \int_{a \in \Acal} \frac{\pithetaindex{b}{0}(a) \mu(a)}{Z_b} \nablatheta \log \pithetaaction{a} \, \text{d} a  
    = Z_b \cdot \E_{a \sim \pitildemub} \nablatheta \log \pithetaaction{a} \\
    &\qquad =  - Z_b \cdot \nablatheta \, \DKL\Big( \pitildemub \,\|\, \pitheta \Big),
\end{align*}
where $\pitildemub$ and $Z_b$ are defined by
\begin{align*}
    \pitildemub(a) \coloneqq \frac{\pithetaindex{b}{0}(a) \mu(a)}{Z_b}, \quad Z_b \coloneqq \int_{a \in \Acal} \pithetaindex{b}{0}(a) \mu(a) \, \text{d} a = \E_{a \sim \pithetaindex{b}{0} } \,\mu(a).
\end{align*}
If $S$ is sufficiently large, then by the end of the inner loop, $\pithetaindex{b+1}{0} = \pithetaindex{b}{S}$ will ideally get close to $\pitildemub$, which is a higher-reward policy than $\pithetaindex{b}{0}$.
Thus \res could converge to the optimal policy as the rollout distribution gets updated for $B \rightarrow \infty$ times.
In the next section, we formalize this intuition and provide in-depth convergence analysis in concrete settings.

%% file: sections/main_results_in_depth.tex
\section{In-depth analysis for bandits and softmax policies}
\label{sec:main_results_in_depth}

In this section, we present in-depth analysis for the learning dynamics and convergence rates of \res in bandits with softmax policies.
Our key findings include 
global convergence of \res to the optimal policy (Section~\ref{subsec:theory_global_convergence}),
two-sided bounds for convergence rates that reveal two distinct time scales in the learning dynamics of \res (Section~\ref{subsec:theory_convergence_rates}),
and the benefits of off-policyness in accelerating convergence under certain conditions (Section~\ref{subsec:theory_benefits_of_off_policyness}).
We conclude with numerical validation of these theoretical findings (Section~\ref{subsec:numerical_validation}).

\paragraph{Problem setup.}

Following the formulation in Section~\ref{sec:main_results_high_level},
we further assume a discrete action space $\Acal = [K]: = \{1,\ldots,K\}$, and a reward-mean vector $\bmu=[\mu(1),\ldots,\mu(K)]$.  
Assume that there is a unique optimal action, indexed by $1$ without loss of generality.
We let $\mumax/\mumin$ and $\Delta$ denote the maximum/minimum reward and the reward gap between the top-2 actions, respectively:
\begin{align*}
\mumax \coloneqq \mu(1) > \mu(a) \ge \mumin > 0, \quad 2 \le a \le K; \qquad
\Delta \coloneqq \mumax-\max_{2 \le a \le K}\mu(a)>0.
\end{align*}
We focus on policies with softmax parameterization throughout this section:
\begin{equation*}
\btheta = \big[\theta(1), \dots, \theta(K)\big] \in \R^{K}, \quad
\pithetaaction{a}=\frac{e^{\theta(a)}}{\sum_{j=1}^K e^{\theta(j)}}, \quad 1 \le a \le K.
\end{equation*}
Suppose that every coordinate of the initial logit vector $\btheta$ is finite, hence the initial policy has full support over $\Acal$.
The learner aims to find the policy $\pi_\lo$ that maximizes the objective function: 
\begin{equation}
J(\pi_\btheta) = \E_{a \sim \pitheta} \mu(a) = \langle \pitheta, \bmu \rangle.
\label{eq:objective function}
\end{equation}
For notational simplicity, we will use $\bpi_{b, s}$ and $\pithetaindex{b}{s}$ interchangeably.
In this concrete setting, the infinite-sample limit of the gradient estimate in Algorithm~\ref{alg:res_general} can be calculated by 
\begin{align*}
g_{b,s} \coloneqq \bpi_{b, 0} \odot \bmu - J(\bpi_{b, 0}) \bpi_{b, s},
\end{align*}
where $\odot$ denotes element-wise multiplication; see Lemma~\ref{lem:surrogate obj grad} in the appendix for the derivation.
This leads us to the specialized version of \res in Algorithm~\ref{alg:res_softmax}, which we investigate in this section.

\begin{center}
\begin{minipage}{0.8\linewidth}
\SetAlgoCaptionSeparator{}
\begin{algorithm}[H]
\caption{\res, softmax parameterization and infinite-sample limit}
\label{alg:res_softmax}

\KwIn{
    initial logits $\btheta_{0, 0} \in \R^{K}$, staleness $S \in \mathbb{N}_{+}$, learning rate $\eta > 0$.
}

\For{$b = 0, 1, \dots, B-1$}{

    \For{$s = 0, 1, \dots, S-1$}{

        $\btheta_{b, s+1} \gets \btheta_{b, s} + \eta \, (\bpi_{b, 0} \odot \bmu - J(\bpi_{b, 0}) \bpi_{b, s})$.

    }

    Set $\btheta_{b+1, 0} \gets \btheta_{b, S}$.
}

\end{algorithm}

\end{minipage}
\end{center}

\subsection{Global convergence to the optimal policy}
\label{subsec:theory_global_convergence}

The following theorem guarantees global convergence of \res to the optimal policy $\bpi^{\star} = e_1$, namely probability 1 on the optimal action and 0 elsewhere.

\begin{theorem}
\label{thm:global}
Suppose that $\btheta_{0, 0} \in \R^K$ is finite, and the learning rate $\eta > 0$ satisfies $\eta \cdot \mumax < 4$. 
Then for any fixed integer $S \ge 1$, \res converges to the optimal policy asymptotically:
\begin{align*}
\bpi_{b, s} \rightarrow e_1 \quad \text{for all} \quad 0 \le s \le S \quad \text{as} \quad b \rightarrow \infty.
\end{align*}
\end{theorem}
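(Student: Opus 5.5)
The plan is a Lyapunov-type argument across outer iterations $b$, driven by the stage-wise KL interpretation of Section~\ref{sec:main_results_high_level}.

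\textbf{Step 1: each stage is a descent method.} Within stage $b$, the update of Algorithm~\ref{alg:res_softmax} is gradient descent with step $\eta$ on
\[
f_b(\btheta) \coloneqq J(\bpi_{b,0})\,\DKL(\pitildemub\,\|\,\pitheta) = J(\bpi_{b,0})\Big(\log\textstyle\sum_j e^{\theta(j)} - \langle \pitildemub,\btheta\rangle\Big) + \text{const}.
\]
This is the content of Lemma~\ref{lem:surrogate obj grad}. The Hessian of log-sum-exp is $\diag(\pitheta)-\pitheta\pitheta^\top$, whose operator norm is at most $1/2$. Hence $f_b$ is $L$-smooth with $L = J(\bpi_{b,0})/2 \le \mumax/2$.

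The hypothesis $\eta\mumax<4$ is exactly $\eta < 2/L$. The descent lemma then gives $f_b(\btheta_{b,s+1}) \le f_b(\btheta_{b,s}) - c\,\|g_{b,s}\|^2$ with $c = \eta(1-\eta L/2)>0$. Consequently
\[
\DKL(\pitildemub\|\bpi_{b+1,0}) \le \DKL(\pitildemub\|\bpi_{b,0}) - \frac{c}{J(\bpi_{b,0})}\sum_{s<S}\|g_{b,s}\|^2 .
\]
Two further facts will be used throughout: $\sum_a g_{b,s}(a)=0$, so $\sum_a\theta(a)$ is conserved, and every policy $\bpi_{b,s}$ keeps full support.

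\textbf{Step 2: a monotone potential.} Rewrite the KL inequality as $\E_{a\sim\pitildemub}\log\big(\bpi_{b+1,0}(a)/\bpi_{b,0}(a)\big) \ge \frac{c}{J_b}\sum_s\|g_{b,s}\|^2 \ge 0$. This is a "reward-weighted log-likelihood improvement". From it I want to extract $J(\bpi_{b+1,0})\ge J(\bpi_{b,0})$. A cleaner alternative is to track the logit gaps $\theta(1)-\theta(a)$ directly.

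Along the lines of the classical EM/Baum–Eagon inequality, I would try to show that $\Phi_b \coloneqq J(\bpi_{b,0})$ is nondecreasing and bounded by $\mumax$. Then $\sum_b \sum_s \|g_{b,s}\|^2<\infty$, so $g_{b,0}=\bpi_{b,0}\odot(\bmu-J_b\mathbf 1)\to 0$. Since $\bmu$ has a unique maximizer, any limit point of $\bpi_{b,0}$ must be a vertex $e_a$, or a distribution supported on actions with a common reward. Together with the smallness of $g_{b,s}$ within a stage, this also gives $\bpi_{b,s}-\bpi_{b,0}\to0$ for every $s\le S$. Hence it suffices to treat $s=0$.

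\textbf{Step 3: excluding suboptimal limits (the main obstacle).} Suppose $\bpi_{b,0}$ accumulates at a suboptimal point $\bar\pi$ with $J(\bar\pi)<\mumax$. Near $\bar\pi$, the per-stage increment of $\theta(1)-\theta(a)$ for $a\in\mathrm{supp}(\bar\pi)$ is
\[
\eta\sum_s\Big(\bpi_{b,0}(1)\mumax-\bpi_{b,0}(a)\mu(a) - J_b\big(\bpi_{b,s}(1)-\bpi_{b,s}(a)\big)\Big).
\]
Its dominant part is $\eta S\,\bpi_{b,0}(a)\big(J_b-\mu(a)\big)+O(\bpi(1))$, and the $\bpi(1)$-terms are positive, of order $\eta S\bpi(1)(\mumax-J_b)$. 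So in logarithmic scale, $\bpi(1)$ is multiplied per stage by roughly $\exp\big(\eta S(\mumax-J(\bar\pi))\big)>1$ relative to the support of $\bar\pi$.

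It follows that $\log\bpi_{b,0}(1)$ strictly increases whenever the iterate is near $\bar\pi$. Combined with monotonicity of $J_b$ (so $J_b\to J^\ast$ and the iterate stays near the level set $\{J=J^\ast\}$), this contradicts $\bpi_{b,0}(1)\to0$. Hence the only possible limit is $e_1$. Making this uniform is the delicate part: the iterates may hover near a face for long stretches (cf.\ the trapping in Theorem~\ref{thm:benefits_off_policyness}), but finite initial logits keep $\bpi(1)>0$, which is all the escape argument needs.

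If Step 2's monotonicity of $J$ fails for some $S$, my fallback is to use $\theta(1)-\max_{a\ge2}\theta(a)$, bounded below by a linear-in-$b$ drift once $J_b$ is bounded away from $\mumax$.
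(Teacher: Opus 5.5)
The genuine gap is Step~3, which is where all the difficulty lies. Steps~1--2 match the paper. The monotonicity you hesitate over follows in one line from your own inequality: by Jensen,
$\log(J_{b+1}/J_b)=\log\E_{a\sim\pitildemub}[\pi_{b+1,0}(a)/\pi_{b,0}(a)]\ge\E_{a\sim\pitildemub}\log(\pi_{b+1,0}(a)/\pi_{b,0}(a))\ge0$.
Using $\log u\le u-1$ instead gives the quantitative bound $J_{b+1}-J_b\ge c\sum_s\|g_{b,s}\|_2^2$ that summability needs; this is the paper's Eq.~\eqref{eq:thm1-surrogate-value-bound}.

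In Step~3, the per-stage change of $\theta(1)$ is $\eta\sum_s g_{b,s}(1)\approx\eta S\,\pi_{b,0}(1)(\mumax-J_b)$. It carries the factor $\pi_{b,0}(1)$, which tends to $0$ under your contradiction hypothesis. So $\pi(1)$ is not multiplied per stage by a fixed factor $\exp(\eta S(\mumax-J(\bar\pi)))>1$. For $a$ in the limiting support $\mathcal{M}=\{a:\mu(a)=J_\infty\}$, the change of $\theta(1)-\theta(a)$ is about $\eta S[\pi_{b,0}(1)(\mumax-J_b)-\pi_{b,0}(a)(J_\infty-J_b)]$. This is a difference of two vanishing terms whose ratio nothing in your argument controls. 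Worse, $J_\infty-J_b$ is fed by the mass on $\mathcal{M}_-=\{a:\mu(a)<J_\infty\}$. When $\pi(\mathcal{M}_-)\gg\pi(1)$, $\log\pi_{b,0}(1)$ actually decreases near the face. This is the same mechanism that drives $\pi(1)$ down to $O(x^{K-1})$ in the first part of Theorem~\ref{thm:benefits_off_policyness}. So ``$\log\pi_{b,0}(1)$ strictly increases near $\bar\pi$'' is false as a local statement, and positivity of $\pi(1)$ alone does not prevent $\pi(1)\to0$. Your fallback fails for the same reason: the drift of $\theta(1)-\max_{a\ge2}\theta(a)$ is proportional to $\pi(1)$, not bounded below linearly in $b$.

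What is missing is a global, summability-based argument, which the paper supplies as follows.
\begin{enumerate}
\item Split the actions into $\mathcal{M}_+$, $\mathcal{M}$, $\mathcal{M}_-$. Actions in $\mathcal{M}$ contribute nothing to $J_b-J_\infty$, so it suffices that $\pi(1)$ eventually dominates every action in $\mathcal{M}_-$, not the whole support of $\bar\pi$.
\item Eventually $\theta(1)$ is nondecreasing and the $\mathcal{M}_-$ logits are nonincreasing.
\item A telescoping estimate on $h_b=\|g_{b,0}\|_2$ gives $\|g_{b,0}\|_2\le C_S(J_\infty-J_b)$.
\item Since $\pi(1)\to0$ with $\theta(1)$ bounded below, $\max_{\mathcal{M}}\theta\to\infty$, so the logits cannot converge. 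Together with the previous estimate this forces $\sum_b(J_\infty-J_b)=\infty$.
\item Hence some $\ell\in\mathcal{M}_-$ has $\sum_b\pi_{b,0}(\ell)=\infty$, so $\theta(\ell)\to-\infty$. Then $\pi(1)\ge\pi(\ell)$ eventually, so $\sum_b\pi_{b,0}(1)=\infty$ and $\theta(1)\to\infty$.
\item Therefore $\pi(a)/\pi(1)\to0$ for every $a\in\mathcal{M}_-$, which gives $J_b>J_\infty$ for large $b$, a contradiction.
\end{enumerate}
You need some chain of this kind, in particular a way to force divergence of $\sum_b(J_\infty-J_b)$ or of the $\mathcal{M}_-$ mass. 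Neither your Step~3 nor the fallback supplies it.
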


\begin{proof}[Proof sketch]
The proof combines improvement of the surrogate objective with bounds on changes in the logits. The step-size condition and smoothness ensure that $J_b \coloneqq J(\pi_{b,0})$ is nondecreasing in $b$, so it converges to some $J_\infty\leq\mu(1)$. The remaining work is to rule out convergence to a suboptimal action. 
To prove this by contradiction, suppose that $J_\infty<\mu(1)$. For fixed $S$, logit changes during each stage tend to zero. Actions with rewards above $J_\infty$ eventually have increasing logits, and those below it have decreasing logits. The estimate $\|g_{b,0}\|_2\leq C(J_\infty-J_b)$ implies $\sum_b(J_\infty-J_b)=\infty$, since a finite sum would make all logits converge to finite values and prevent the optimal action's gradient from tending to zero. Keeping $J_b\leq J_\infty$ therefore requires an action $a$ with $\mu(a)<J_\infty$ and $\sum_b\pi_{b,0}(a)=\infty$, therefore its logit tends to $-\infty$. Since the optimal logit is bounded below, the optimal action eventually has greater probability, which implies $\sum_b\pi_{b,0}(1)=\infty$ and its logit tends to $+\infty$. The probability of every action with $\mu(a)<J_\infty$, divided by the optimal action's probability, then tends to zero. These actions can no longer offset the optimal action's positive contribution to $J_b-J_\infty$, giving $J_b>J_\infty$, which is a contradiction. Therefore $J_\infty$ must be $\mu(1) = \mumax$, which means convergence to the optimal policy. See Appendix~\ref{app:global-convergence} for the full proof.
\end{proof}

\subsection{Two-sided bounds for convergence rates}
\label{subsec:theory_convergence_rates}

We next characterize the convergence rate of \res and its dependence on $S$. 
Denote the suboptimality gaps of policies by
\begin{align*}
\rg(\pi) \coloneqq \mumax-J(\pi), \qquad \ir_b \coloneqq \rg(\pi_{b,0}),
\end{align*}
and define the surrogate loss function $L$ as
\begin{align*}
    L(\pi_{b,s};\pi_{b,0}) \coloneqq \E_{a \sim \bpi_{b, 0}} \mu(a)\log\pi_{b,s}(a).
\end{align*}
Our analysis critically relies on the following lemma that quantifies progress within each inner loop.

\begin{lemma}[Progress and suboptimality gaps within an inner loop]\label{lem:inner-loop-progress}
Consider the $b$-th outer iteration of Algorithm~\ref{alg:res_softmax}.
Suppose that $\eta \cdot \mumax < 4$. For every $1\leq s\leq S$, we have
\begin{align}
 \lambda\big(\pi_{b,0}(1)\big)\min\{\eta s\ir_b^2,\ir_b\}
 &\leq L(\pi_{b,s};\pi_{b,0})-L(\pi_{b,0};\pi_{b,0}) \nonumber\\
 &\leq J(\pi_{b,s})-J(\pi_{b,0})
 \leq A\min\{\eta s\ir_b^2,\ir_b\},
 \label{eq:inner loop-progress}\\
 \text{and} \quad \rho\ir_b &\leq \rg(\pi_{b,s}) \leq\ir_b,
 \label{eq:inner loop-residual}
\end{align}
where the specific forms of $\lambda(\cdot)$, $\rho$ and $A$ can be found in Appendix \ref{app:conv-rate-lemma}.
\end{lemma}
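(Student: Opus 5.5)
We fix $b$ and write $\pi_0 = \pi_{b,0}$, $\pi_s = \pi_{b,s}$ and $J_0 = J(\pi_0)$. The inner loop is gradient ascent on the concave function $f(\theta) \coloneqq L(\pi_\theta;\pi_0) = \sum_a \pi_0(a)\mu(a)\log\pi_\theta(a)$. Its gradient is $\pi_0\odot\mu - J_0\pi_\theta$, so $f$ is $J_0$-smooth: the Hessian is $-J_0(\diag(\pi_\theta)-\pi_\theta\pi_\theta^\top)$, whose operator norm is at most $J_0/2$. Since $\eta J_0\le\eta\mumax<4$, the standard descent lemma makes $f(\theta_s)$ nondecreasing. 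More precisely, it gives $f(\theta_{s+1})-f(\theta_s)\ge c\,\eta\|g_{b,s}\|^2$ with $c = 1-\eta J_0/4>0$. The maximiser of $f$ is $\pi=\tilde\pi^\mu_b$ (up to logit shifts), so the trajectory moves monotonically toward $\tilde\pi^\mu_b$.

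\textbf{Sandwich $L$-gain $\le J$-gain.} I plan to write $\pi_s(a)=\pi_0(a)e^{\delta(a)}/Z$, where $\delta=\theta_s-\theta_0$ and $Z=\E_{\pi_0}e^{\delta}$. Then
\[
L(\pi_s;\pi_0)-L(\pi_0;\pi_0)=\E_{\pi_0}[\mu\delta]-J_0\log Z .
\]
I will establish the key structural fact that along the inner loop $\delta$ stays \emph{comonotone with $\mu$}. Indeed, $\delta_s=\eta\sum_{t<s}(\pi_0\odot\mu - J_0\pi_t)$. I would prove by induction that $\pi_t(a)/\pi_0(a)$ is increasing in $\mu(a)$, which means the per-coordinate gradient $\pi_0(a)(\mu(a)-J_0\pi_t(a)/\pi_0(a))$ has a sign pattern consistent with $\mu$. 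If this monotonicity turns out to be delicate, the fallback is a direct argument on consecutive steps using $\eta\mumax<4$. With comonotonicity, Chebyshev's (FKG) inequality under $\pi_0$ gives
\[
J(\pi_s)=\E_{\pi_0}[\mu e^{\delta}]/Z\ge J_0,
\]
and the gain can be compared to $L$ through the log-sum: Jensen gives $\log Z\ge\E_{\pi_0}\delta$, and a tilting argument gives $J(\pi_s)-J_0\ge(\E_{\pi_0}[\mu\delta]-J_0\log Z)$. Concretely, I would write $\log Z=\int_0^1 \E_{\pi^{(u)}}\delta\,du$ along the exponential tilt $\pi^{(u)}\propto\pi_0e^{u\delta}$, and show that $\mu-J(\pi^{(u)})$ correlates positively with $\delta$.

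\textbf{Lower bound on the $L$-gain.} Summing the descent inequality gives $f(\theta_s)-f(\theta_0)\ge c\eta\sum_{t<s}\|g_{b,t}\|^2$. For the first step, $\|g_{b,0}\|\ge|\pi_0(1)(\mu(1)-J_0)|=\pi_0(1)d_b$. As long as $\pi_t$ stays close to $\pi_0$ (say $\|\pi_t-\pi_0\|\lesssim d_b$), we have $\|g_{b,t}\|\gtrsim\pi_0(1)d_b$, which yields $\gtrsim\pi_0(1)^2\eta s d_b^2$. Once the trajectory has moved by order $d_b$, concavity together with a gap bound $f^\star-f(\theta_0)\gtrsim\pi_0(1)d_b$ forces a gain of order $d_b$. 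Together these produce $\lambda(\pi_0(1))\min\{\eta s d_b^2,d_b\}$. I expect this two-regime bookkeeping, i.e.\ defining the stopping time at which the gradient degrades and controlling $f^\star-f$ from below, to be the main obstacle.

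\textbf{Upper bound and residual.} By the mean value theorem, $J(\pi_s)-J_0\le\|\mu\|\,\|\pi_s-\pi_0\|_1$, and $\|\delta\|\le\eta\sum_t\|g_{b,t}\|$. Each $\|g_{b,t}\|\lesssim\|\pi_0\odot\mu-J_0\pi_0\|+J_0\|\pi_t-\pi_0\|$, and $\|\pi_0\odot(\mu-J_0)\|\le C d_b$ because $\sum_a\pi_0(a)|\mu(a)-J_0|\le 2\sum_a\pi_0(a)(\mu(1)-\mu(a))=2d_b$. This gives linear growth $\eta s\,d_b$ in $\delta$; the gain is a first-order correlation $\mathrm{Cov}_{\pi_0}(\mu,\delta)\lesssim d_b\cdot\eta s d_b$, hence $A\eta s d_b^2$, and trivially $J(\pi_s)-J_0\le\mumax-J_0=d_b$. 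The right inequality of \eqref{eq:inner loop-residual} follows from $J(\pi_s)\ge J_0$. For the left inequality, $J(\pi_s)\le J(\tilde\pi^\mu_b)$ along the monotone path: $\pi_s(a)/\pi_0(a)$ is bounded by that of $\tilde\pi^\mu_b$, which is at most $\mu(a)/J_0$. Then
\[
D(\tilde\pi^\mu_b)=\frac{\E_{\pi_0}[\mu(\mu(1)-\mu)]}{J_0}\ge\frac{\mumin}{\mumax}d_b ,
\]
so $\rho=\mumin/\mumax$ is a natural candidate.
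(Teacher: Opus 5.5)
\paragraph{The residual bound has a genuine gap.} You prove $\rho d_b\le D(\pi_{b,s})$ by arguing that the inner loop never overshoots the target, i.e.\ $J(\pi_{b,s})\le J(\tilde\pi^\mu_b)$. Read literally, the claim $\pi_{b,s}(a)\le\tilde\pi^\mu_b(a)$ for every $a$ would force equality, since both vectors sum to one. More importantly, overshooting does occur under $\eta\mu_{\max}<4$. Take $K=2$, $\mu=(1,0.75)$, $\pi_{b,0}=(0.65,0.35)$ and $\eta=3.9$. Then $J_0=0.9125$, $d_b=0.0875$ and $g_{b,0}=(0.056875,-0.056875)$. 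One step raises the logit gap from $\log(0.65/0.35)\approx0.619$ to about $1.063$, so $\pi_{b,1}(1)\approx0.743>\tilde\pi^\mu_b(1)\approx0.712$. Moreover $D(\pi_{b,1})\approx0.0642<0.0656=(\mu_{\min}/\mu_{\max})\,d_b$, so your candidate $\rho=\mu_{\min}/\mu_{\max}$ is false.

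The idea you are missing is to use \emph{only} what the descent lemma guarantees, namely $L(\pi_{b,s};\pi_{b,0})\ge L(\pi_{b,0};\pi_{b,0})$, with no claim about where the trajectory sits relative to the target. In $L(p;q)-L(q;q)$, bound the optimal-action term by $\mu_{\max}(1-q(1))\le\mu_{\max}d_b/\Delta$. Apply Jensen to the suboptimal terms to get at most $H_q\log\bigl(\tfrac{\mu_{\max}^2}{\mu_{\min}\Delta}\tfrac{D(p)}{d_b}\bigr)$, where $H_q=\sum_{a>1}q(a)\mu(a)\ge\mu_{\min}d_b/\mu_{\max}$. Nonnegativity of the total then forces $D(p)\ge\rho d_b$ with the exponentially small $\rho=\frac{\mu_{\min}\Delta}{\mu_{\max}^2}\exp\bigl(-\frac{\mu_{\max}^2}{\mu_{\min}\Delta}\bigr)$.

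\paragraph{The sandwich step and the progress bounds.} Your sandwich argument rests on a false structural claim. Comonotonicity of $\delta=\theta_{b,s}-\theta_{b,0}$ with $\mu$ (equivalently, $\pi_t(a)/\pi_0(a)$ increasing in $\mu(a)$) already fails after one step, because $\delta_1(a)=\eta\pi_0(a)(\mu(a)-J_0)$. For example, with $\mu=(1,0.9,0.1)$ and $\pi_0=(0.01,0.5,0.49)$, action $2$ gains far more logit than action $1$. The FKG/tilting route is also unnecessary. Applying $\log x\le x-1$ with $x=\pi_{b,s}(a)/\pi_{b,0}(a)$ inside $\sum_a\pi_{b,0}(a)\mu(a)\log x$ gives $L$-gain $\le J$-gain in one line. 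Then $J(\pi_{b,s})\ge J_0$, and hence $D(\pi_{b,s})\le d_b$, follow from monotonicity of $L$.

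Both progress bounds also need the uniform estimate $\|g_{b,t}\|_2\le\|g_{b,0}\|_2\le 2d_b$, which the paper obtains from co-coercivity and $\eta J_0<4$. Your recursion $\|g_{b,t}\|\lesssim Cd_b+J_0\|\pi_t-\pi_0\|$ does not by itself give growth linear in $\eta s d_b$ over the $s\sim 1/(\eta d_b)\gg 1/\eta$ steps that matter. It can be closed via $\|\pi_t-\pi_0\|_1\le 4d_b/\Delta$, but only with a $\Delta$-dependent $A$.

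For the lower bound, the right closeness condition is relative, not $\|\pi_t-\pi_0\|\lesssim d_b$. Let $r_t$ be the range of $\theta_{b,t}-\theta_{b,0}$. One uses $\pi_t(1)-\pi_0(1)\le\pi_0(1)(1-\pi_0(1))(e^{r_t}-1)$ together with $d_b\ge\Delta(1-\pi_0(1))$. This keeps $g_{b,t}(1)\ge\pi_0(1)d_b/2$ for $\Theta(1/(\eta d_b))$ steps. Monotonicity of $L$ then covers the remaining steps, so no concavity-gap argument is needed.
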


\begin{proof}[Proof sketch.]
We focus on explaining the proof idea of the most important lower bound in Eq.~\eqref{eq:inner loop-progress}, and proofs of the remaining results can be found in Appendix~\ref{app:conv-rate-lemma}. It is known that for on-policy softmax policy gradient, smoothness and the non-uniform \L{}ojasiewicz inequality give an $\Omega(\eta\ir_b^2)$ improvement per
step~\citep{mei2020global}. For fixed inner loop step $s$, when $\eta s\ir_b\leq1$, a small suboptimality gap $\ir_b$ limits how far the current policy can move over the inner loop. Therefore, the accumulated progress is $\Omega(s\eta \ir_b^2)$ in this inner loop. This also explains the on-policy order of progress near convergence. But when $\eta s\ir_b>1$, the gradient can be very different from on-policy gradient during the inner loop. Nonetheless, the optimal action gradient component remains
at least half of its initial value for about $1/(\eta\ir_b)$ steps. Combining this with the on-policy result, we see that the progress in this inner loop is at least $\Omega(\eta\ir_b^2 \cdot 1/(\eta \ir_b)) = \Omega(\ir_b)$.
\end{proof}

Eq.~\eqref{eq:inner loop-progress} in the above lemma suggests that 
$J(\pi_{b,s})-J(\pi_{b,0})=\Theta\!\left(\min\{\eta s\ir_b^2,\ir_b\}\right)$. 
With the progress of the optimization, decreasing $\ir_b$ changes this scale
from $\Theta(\ir_b)$ to $\Theta(\eta s\ir_b^2)$. This change essentially causes the two time scales that we will soon describe in
the main theorems below. 
The reason behind the lower bound in Eq.~\eqref{eq:inner loop-residual} is that, increasing the likelihood of the reward-weighted rollout distribution prevents an arbitrarily large reduction of the suboptimality gap within the same inner loop.

Building on Lemma~\ref{lem:inner-loop-progress}, we can now establish upper and lower bounds for the convergence rates of \res in Theorems~\ref{thm:rate-upper} and~\ref{thm:rate-lower} respectively.
We introduce a new notation $p_{\min}:=\inf_{b\geq0}\pi_{b,0}(1)>0$, whose positivity\footnote{
For an arbitrary finite initialization, $p_{\min}>0$ but its value depends on
the optimization trajectory and could be very small \citep{mei2020global}. 
Fortunately, if we impose stronger conditions on the initial policy, the lower bound of $p_{\min}$ can be significantly improved. 
For example, if $\eta$ satisfies $0<\eta\mumax\leq2$ and $\pi_{0,0}(1)\geq\pi_{0,0}(a)$ for every $a>1$, then this ordering will be preserved at every inner iterate, as shown in Appendix~\ref{app:ordered-proof}. Hence $p_{\min}\geq1/K$ in this case, and the progress coefficient can be bounded below by $\lambda(1/K)$. 
} originates from Theorem~\ref{thm:global}.
In the following theorems, we use the notation $t = b S + s$, where $0\leq s\leq S$, for the total number of gradient steps needed to achieve $\pi_{b, s}$ in \res.

\begin{theorem}[Upper Bound]
\label{thm:rate-upper}
Suppose that $\eta \cdot \mumax < 4$.
There exists a constant $c>0$ (depending only on $K$, $\mu$, $\eta$, and $p_{\min}$) such that, if we define
\begin{equation}
 \btran =\begin{cases}
 0 &\text{if} \quad \eta S\ir_0\leq 1,\\[2pt]
 \displaystyle\left\lceil\frac{\log(\eta S\ir_0)}{\log(1+c)}\right\rceil
 &\text{if} \quad \eta S\ir_0>1,
 \end{cases}
\end{equation}
then for $0 \le b \le \btran$, we have
\begin{equation}
\rg(\pi_{b, s}) \leq\ir_0 (1+c)^{-b},
\end{equation}
whereas for every $b \geq \btran$, we have
\begin{equation}
 \rg(\pi_{b, s})\leq
 \frac1{\ir_{\btran}^{-1}+c\eta(t-\ttran)}, \quad \text{where} \quad 
 t = b S + s, \quad \ttran = \btran S.
\end{equation}

\end{theorem}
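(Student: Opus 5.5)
The plan is to turn Lemma~\ref{lem:inner-loop-progress} into a scalar recursion for $\ir_b$ and solve it separately in two regimes. Because $\pi_{b,0}(1)\ge p_{\min}$ and $\lambda(\cdot)$ is positive and nondecreasing (as I expect from its form in Appendix~\ref{app:conv-rate-lemma}), the left inequality of Eq.~\eqref{eq:inner loop-progress} gives
\[
J(\pi_{b,s})-J(\pi_{b,0})\ \ge\ \lambda_0 \min\{\eta s\ir_b^2,\ir_b\},\qquad \lambda_0\coloneqq\lambda(p_{\min}).
\]
Since $\pi_{b+1,0}=\pi_{b,S}$, taking $s=S$ yields $\ir_{b+1}\le \ir_b-\lambda_0\min\{\eta S\ir_b^2,\ir_b\}$. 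The right inequality of Eq.~\eqref{eq:inner loop-residual} gives $\rg(\pi_{b,s})\le \ir_b$ for every $s$, which I will use for intermediate iterates. I will choose $c$ small enough, depending on $\lambda_0$ (hence on $K,\mu,\eta,p_{\min}$), so that both regimes below go through.

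\emph{Burn-in regime} ($\eta S\ir_b>1$). Here the minimum equals $\ir_b$, so $\ir_{b+1}\le(1-\lambda_0)\ir_b$. Choosing $c$ with $(1+c)^{-1}\ge 1-\lambda_0$ gives $\ir_{b+1}\le \ir_b(1+c)^{-1}$. By monotonicity ($\ir_b$ is nonincreasing), once $\eta S\ir_b\le 1$ the iterates stay in the second regime. So while $b\le\btran$, either we are still in the first regime and the geometric bound $\ir_b\le \ir_0(1+c)^{-b}$ holds by induction, or we have already entered the second regime. In the second case I need the geometric bound to persist. For this I will make sure the second-regime decrease is at least as fast: $\ir_{b+1}\le \ir_b(1-\lambda_0\eta S\ir_b)$. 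Since $\eta S\ir_b$ may be much less than $1$ there, this does \emph{not} directly give geometric decay, and this is the main obstacle. I would handle it by noting that $\btran$ is defined exactly so that $\ir_0(1+c)^{-\btran}\le 1/(\eta S)$. I would then split the inequality: up to the regime switch the geometric bound holds; after the switch I use $\rg(\pi_{b,s})\le \ir_b$, plus the fact that $\ir_0(1+c)^{-b}\ge 1/(\eta S)\ge\ir_b$ for $b\le\btran$, since the switch is characterized by $\ir_b<1/(\eta S)$. If the index arithmetic at the boundary fails by one step, I would absorb it by shrinking $c$. Intermediate iterates satisfy $\rg(\pi_{b,s})\le\ir_b$, which gives the first claim.

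\emph{Asymptotic regime} ($b\ge\btran$, so $\eta S\ir_b\le1$). For intermediate $s$ I use the per-step form: $\eta s\ir_b\le 1$, so the lemma gives
\[
\rg(\pi_{b,s})\le \ir_b-\lambda_0\eta s\ir_b^2 .
\]
Taking reciprocals with $1/(x-y)\ge 1/x+y/x^2$,
\[
\rg(\pi_{b,s})^{-1}\ge \ir_b^{-1}+\lambda_0\eta s .
\]
With $s=S$ this gives $\ir_{b+1}^{-1}\ge \ir_b^{-1}+\lambda_0\eta S$. Telescoping from $\btran$ yields $\ir_b^{-1}\ge \ir_{\btran}^{-1}+\lambda_0\eta (b-\btran)S$. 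Adding the intra-stage term $\lambda_0\eta s$ gives $\rg(\pi_{b,s})^{-1}\ge \ir_{\btran}^{-1}+c\eta(t-\ttran)$ with $c\le\lambda_0$. Finally I take $c=\min\{\lambda_0,\ \lambda_0/(1-\lambda_0)\}$, or simply $c=\lambda_0$ since $1/(1+\lambda_0)\ge1-\lambda_0$. This satisfies both regimes.
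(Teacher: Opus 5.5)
Your plan follows the paper's argument in a direct form: the stage-wise lower bound with $\lambda(p_{\min})$, reciprocals, and $c=\lambda(p_{\min})$. The asymptotic part is fine. The gap is at the single index $b=\bar b$ in the burn-in claim, in the case where the regime switch happens at some $b'<\bar b$. There you use $d_0(1+c)^{-b}\ge 1/(\eta S)\ge d_b$ for all $b\le\bar b$. But $\bar b$ is a ceiling, so $(1+c)^{\bar b}\ge\eta S d_0$, i.e.\ $d_0(1+c)^{-\bar b}\le 1/(\eta S)$, and that inequality only holds for $b\le\bar b-1$. Knowing $d_{\bar b}\le 1/(\eta S)$ does not give $d_{\bar b}\le d_0(1+c)^{-\bar b}$. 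Shrinking $c$ does not repair this. $\bar b$ is defined through $c$ and grows as $c$ shrinks, so the same off-by-one reappears at the new boundary, and the first-regime contraction says nothing about what happens after the switch.

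The repair is to use the second-regime progress you already derived, in reciprocal form. Since $\eta S d_{\bar b-1}\le 1$, you get $1/d_{\bar b}\ge 1/d_{\bar b-1}+\lambda_0\eta S\ge(1+\lambda_0)\eta S$. Meanwhile $\bar b-1<\log(\eta S d_0)/\log(1+c)$ gives $(1+c)^{\bar b}/d_0<(1+c)\eta S\le(1+\lambda_0)\eta S$ for $c\le\lambda_0$. Hence $d_{\bar b}\le d_0(1+c)^{-\bar b}$.

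The paper avoids the case split altogether. It writes both regimes as $1/d_{b+1}\ge F(1/d_b)$ with $F(a)=a+\lambda_0\min\{\eta S,a\}$, notes that $F$ is increasing, and compares with the sequence $a_{b+1}=F(a_b)$, $a_0=1/d_0$. That sequence equals $(1+c)^b/d_0$ for $b\le\bar b$, and monotonicity of $F$ carries the geometric bound through $\bar b$ regardless of which regime the true iterate is in.

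Your second claim is unaffected, because $\eta S d_{\bar b}\le 1$ holds in both cases. If the switch came earlier, it follows from monotonicity of $d_b$; otherwise it follows from the pure first-regime induction.
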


\begin{theorem}[Lower Bound]\label{thm:rate-lower}
Suppose that $\eta \cdot \mumax < 4$.
Let $A$ and $\rho\in(0,1)$ be the constants in Lemma~\ref{lem:inner-loop-progress}, and define
\begin{equation}
 \btran^{\prime} = \begin{cases}
 0 &\text{if} \quad \eta S\ir_0\leq (1-\rho)/A,\\[2pt]
 \displaystyle\left\lceil\frac{\log(A\eta S\ir_0/(1-\rho))}{\log(1/\rho)}\right\rceil
 &\text{if} \quad \eta S\ir_0>(1-\rho)/A,
 \end{cases}
\end{equation}
then for $0 \le b \le \btran^{\prime}$, we have
\begin{equation}
\rg(\pi_{b, S}) \geq\ir_0 \rho^{b+1},
\end{equation}
whereas for every $b \geq \btran^{\prime}$, we have
\begin{equation}
\rg(\pi_{b, s})\geq
\frac{1}{\ir_0^{-1}\rho^{-\btran^{\prime}}+(A/\rho)\eta(t-\ttran^{\prime})},
\quad \text{where} \quad 
t = b S + s, \quad \ttran^{\prime} = \btran^{\prime} S.
\end{equation}
\end{theorem}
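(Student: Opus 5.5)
We prove Theorem~\ref{thm:rate-lower} using only Lemma~\ref{lem:inner-loop-progress}, treating it as a recursion on the stage-wise gaps $\ir_b$. Since $\ir_{b+1}=\rg(\pi_{b,S})$, the upper bound in Eq.~\eqref{eq:inner loop-progress} gives
\begin{align*}
\ir_{b+1} = \ir_b - \big(J(\pi_{b,S})-J(\pi_{b,0})\big) \ge \ir_b - A\min\{\eta S\ir_b^2,\ir_b\}.
\end{align*}
The right inequality of Eq.~\eqref{eq:inner loop-residual} gives $\rg(\pi_{b,s})\le \ir_b$, so $\ir_b$ is nonincreasing. We use two lower bounds on $\ir_{b+1}$, chosen according to the size of $\eta S\ir_b$. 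In every regime, Eq.~\eqref{eq:inner loop-residual} gives $\ir_{b+1}\ge\rho\ir_b$. When $A\eta S\ir_b\le 1-\rho$, the quadratic bound $\ir_{b+1}\ge \ir_b(1-A\eta S\ir_b)$ is available, with $1-A\eta S\ir_b\ge\rho$.

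\emph{Burn-in phase.} For $0\le b\le \btran'$, iterate $\ir_{b+1}\ge\rho\ir_b$ to get $\ir_{b+1}\ge \rho^{b+1}\ir_0$, which is the claim $\rg(\pi_{b,S})\ge \ir_0\rho^{b+1}$. This uses nothing else. The definition of $\btran'$ is what makes the second phase start correctly, and there are two cases. If $\eta S\ir_0\le(1-\rho)/A$, monotonicity gives $A\eta S\ir_b\le 1-\rho$ for all $b$. Otherwise we need $\ir_{\btran'}\le (1-\rho)/(A\eta S)$, which does \emph{not} follow from the lower recursion alone. For the second phase I will avoid needing that smallness directly, as described next.

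\emph{Asymptotic phase.} Work with reciprocals. From $\ir_{b+1}\ge \ir_b - A\eta S\ir_b^2$ and $\ir_{b+1}\ge\rho\ir_b$, I aim to show
\begin{align*}
\frac{1}{\ir_{b+1}} \le \frac1{\ir_b} + \frac{A}{\rho}\eta S .
\end{align*}
Indeed, $\ir_b-\ir_{b+1}\le A\eta S\ir_b^2$, so
\begin{align*}
\frac1{\ir_{b+1}}-\frac1{\ir_b}=\frac{\ir_b-\ir_{b+1}}{\ir_b\ir_{b+1}}\le \frac{A\eta S\ir_b^2}{\ir_b\cdot\rho\ir_b}=\frac{A\eta S}{\rho}.
\end{align*}
This holds for \emph{every} $b$, so the regime split is not needed here. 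Summing from $\btran'$ to $b$ and using $\ir_{\btran'}\ge \rho^{\btran'}\ir_0$ from the burn-in phase gives
\begin{align*}
\ir_b^{-1}\le \ir_0^{-1}\rho^{-\btran'}+(A/\rho)\eta S(b-\btran').
\end{align*}
For the intra-stage index $s$, the same computation applies with $s$ steps in place of $S$. By Eq.~\eqref{eq:inner loop-progress}, $\ir_b-\rg(\pi_{b,s})\le A\eta s\ir_b^2$. By Eq.~\eqref{eq:inner loop-residual}, $\rg(\pi_{b,s})\ge\rho\ir_b$. Together these give $\rg(\pi_{b,s})^{-1}\le \ir_b^{-1}+(A/\rho)\eta s$. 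Since $t-\ttran'=(b-\btran')S+s$, this yields the stated bound.

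\emph{Main obstacle.} The delicate point is matching the exact constants and the role of $\btran'$, for example whether the burn-in exponent should be $\rho^{b}$ or $\rho^{b+1}$ at $b=\btran'$. This requires checking that the first-phase bound is $\ir_b\ge\rho^b\ir_0$ at the stage start and $\rho^{b+1}\ir_0$ at its end. Given the telescoping argument above, the threshold $\btran'$ is not logically needed for validity; it only marks where the $1/t$ bound becomes the sharper one. So I would verify both regimes directly and not rely on the smallness of $\eta S\ir_{\btran'}$.
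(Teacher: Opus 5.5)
Your proof is correct, and it takes a somewhat more direct route than the paper's. Both arguments use the same two ingredients:
\begin{itemize}
\item the per-stage reciprocal bound $1/\rg(\pi_{b,s})-1/\ir_b\le (A/\rho)\eta s$, obtained from the upper progress bound in Lemma~\ref{lem:inner-loop-progress} together with $\rg(\pi_{b,s})\ge\rho\ir_b$;
\item the geometric bound $\ir_{b+1}\ge\rho\ir_b$.
\end{itemize}

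The paper feeds the full bound $\min\{(A/\rho)\eta s,\tfrac{1-\rho}{\rho}\ir_b^{-1}\}$ into a comparison sequence $a^{x,y}_{b,s}$. This sequence is driven by the increasing map $a\mapsto a+\min\{x\eta s,ya\}$. The paper computes its closed form $\mathcal C_{x,y}$ and reads off the two phases. There, $\btran^{\prime}$ appears as the stage at which the comparison sequence switches from multiplicative growth by $1/\rho$ to additive growth.

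You instead drop the minimum, which is legitimate because you only need an upper bound on the reciprocal increment. You then telescope directly from $\btran^{\prime}$. This makes transparent that $1/\rg(\pi_{b,s})\le \rho^{-b_1}/\ir_0+(A/\rho)\eta(t-b_1S)$ holds for every integer $0\le b_1\le b$. A one-line difference computation shows that $\btran^{\prime}$ is exactly the minimizing choice of $b_1$. So, as you say, the threshold plays no logical role in this theorem.

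The paper's heavier machinery pays off mainly for Theorem~\ref{thm:rate-upper}. There the increment bound $\lambda_*\min\{\eta s,1/\ir_b\}$ is a lower bound, so the minimum cannot be discarded. One must first establish $1/\ir_b\ge\eta S$ before the increments become additive, and the threshold is genuinely needed. For the present lower-bound theorem, the comparison sequence additionally yields a finer intra-stage estimate during burn-in, but the statement does not require it.
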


\begin{proof}[Proof sketch]
Using Lemma~\ref{lem:inner-loop-progress} and dividing Eq.~\eqref{eq:inner loop-progress} therein by $\ir_b\rg(\pi_{b,s})$, we have
\begin{equation*}
\lambda(p_{\min})\min\{\eta s,\frac1{\ir_{b}}\} \leq \lambda\big(\pi_{b,0}(1)\big)\min\{\eta s,\frac1{\ir_{b}}\}
\leq \frac1{\rg(\pi_{b,s})}-\frac1{\ir_{b}}
\leq
\min\left\{\frac{A}{\rho}\eta s,
\frac{1-\rho}{\rho}\frac1{\ir_{b}}\right\}.
\end{equation*}
Denote $a_b \coloneqq 1 / \ir_b$.
For $s=S$, replacing each bound by equality gives two sequences of the form $a_{b+1}=a_b+\min\{x\eta S,ya_b\}$, starting at $a_0=1/\ir_0$.
The coefficients $x$ and $y$ come from the corresponding side of the inequality.
This update is increasing in $a_b$, so the two sequences remain lower and upper bounds on $1/\ir_b$ at every inner loop.
When $ya_b<x\eta S$, the update is $a_{b+1}=(1+y)a_b$.
Once $ya_b\geq x\eta S$, it becomes $a_{b+1}=a_b+x\eta S$ and remains in this case.
See Appendix~\ref{app:rate-profile} for the complete proof.
\end{proof}

\begin{remark}
Several previous works studied similar problems but with different theoretical techniques. 
\citet{mei2020global} lower bound improvement by a multiple of the squared gap using smoothness and a non-uniform \L{}ojasiewicz inequality;
such a bound is not guaranteed to hold for every step of an inner loop in the \res algorithm that we study, since its gradient update $g = q\odot\mu-J(q)\pi$ (where $q$ denotes the rollout distribution) can vanish. %
\citet{liu2024elementaryanalysispolicygradient} use the nonnegativity of
$(u(a)-u(a'))(e^{\eta u(a)}-e^{\eta u(a')})$,
where $u=\nabla_\lo J$; for \res, the corresponding terms are $(u(a)-u(a'))(e^{\eta g(a)}-e^{\eta g(a')})$, and these two factors need not have the same sign. Some works consider stage-wise algorithms similar to \res, where each inner loop uses a fixed rollout distribution $q$:
\citet{arnal2025asymmetric} identify the inner loop convergence limit through a Lyapunov function, and then iterate this limiting map across outer iterations,
while \citet{strupl2022rwr} use exact weighted-likelihood maximization, nonnegative action-value variance, and probability ratios to prove improvement and convergence.
For bandits with positive rewards, both yield the updated rollout distribution
$q^+=q\odot\mu/J(q)$, but \res with a finite $S$ value need not obey this identity. 
Adapting their arguments to \res would require additional control of how incomplete fitting within a stage changes these ratios. 
Our proposed analysis instead estimates improvement over a finite-step stage in terms of the current suboptimality gap, including cases when the gap approaches zero and when it does not.
\end{remark}

\subsection{Benefits of off-policyness}
\label{subsec:theory_benefits_of_off_policyness}

While the previous theorems present $\Theta(1/t)$ asymptotic convergence rates regardless of the value of $S$,
there exist other terms in the bounds that can play a crucial role in the overall speed of convergence.
In the following, we identify settings where, perhaps surprisingly, \res with a suitably large $S$ can achieve much faster convergence than on-policy \reone.

Assume $K\geq3$ and consider the following assumptions:
\begin{align*}
    \mumax = \mu(1) > \mu(2) > \mu(a) \ge \mumin > 0, \quad 3 \le a \le K;
    \qquad
    \Delta = \mu(1) - \mu(2) > 0.
\end{align*}
Fix weights $w(a)>0$ for $2 \le a \le K$ with $\sum_{a=2}^K w(a)=1$, and initialize \res with the following policy $\pi_{0, 0}$:
\begin{equation}
\pi_{0,0}(1)=\pb > 0,\quad
\pi_{0,0}(a)=(1-\pb)w(a), \quad 2 \le a \le K.
\label{eq:comparison-initialization}
\end{equation}
Now we consider a family of initial conditions specified by the optimal-action probability $\pb \in (0,1)$, while fixing other factors like $K$, $\mu$, $w$, and $\eta$ satisfying $0<\eta\mumax<4$. 
For any target accuracy $0<\epsilon<\Delta/2$, we define $\Teps(S)$ as the number of gradient steps required by \res to achieve it:
\begin{equation}
\Teps(S):=\inf\{bS:b \ge 0,\ \rg(\pi_{b,0})\leq\epsilon\}.
\label{eq:comparison-endpoint-hitting}
\end{equation}
Focusing on asymptotic $x \rightarrow 0$, the theorem below demonstrates provable benefits of off-policyness.
\begin{theorem}
\label{thm:benefits_off_policyness}
Under the conditions stated above, we have
\begin{equation*}
\Teps(1)=\Omega\!\left(\pb^{-(K-1)}\right).
\end{equation*}
Moreover, there exists an integer $S_\pb=\Theta(\pb^{-1})$, for which
\begin{equation*}
\Teps(S_\pb)=O\!\left(\pb^{-1}\log(1/\pb)\right),
\quad \text{and thus} \quad
\frac{\Teps(S_\pb)}{\Teps(1)}\longrightarrow 0 \quad \text{as} \quad x \rightarrow 0.
\end{equation*}
The implicit constants in these bounds may depend on $K,\mu,w,\eta,\epsilon$, but not on $\pb$.
\end{theorem}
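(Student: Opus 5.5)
We prove the two claims separately: a lower bound for on-policy \reone by tracking the logit gap between actions 1 and 2, and an upper bound for \res via a single long stage that nearly reaches the reweighted distribution, followed by Theorem~\ref{thm:rate-upper}.

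\emph{Lower bound for \reone.} With $S=1$ the update is $\theta(a)\leftarrow\theta(a)+\eta\,\pi(a)(\mu(a)-J(\pi))$. We track $\pi(1)$ and the logit gap $\theta(1)-\theta(2)$. Initially $J(\pi_{0,0})\approx\sum_{a\ge2}w(a)\mu(a)<\mu(2)$, so the trajectory first moves toward the suboptimal vertex $e_2$. In this phase, the logits of actions $3,\dots,K$ decrease, $\pi(2)$ grows, and $\pi(1)$ grows only multiplicatively at rate $1+\eta(\mu(1)-J)$, with $\mu(1)-J\approx\Delta$ once $J$ is near $\mu(2)$.

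The trap is the neighbourhood of $e_2$, which we analyse by a potential/escape-time argument. Near $e_2$ the increment of $\theta(1)$ is $\eta\,\pi(1)(\mu(1)-J)$. Once $\pi(2)\approx 1$, the increments of $\theta(2)$ are tiny, so $\pi(1)$ grows roughly like the solution of $\dot p=\eta\Delta p^2$. Hence the number of steps to raise $\pi(1)$ from $p_0$ to a constant (needed for $\rg\le\epsilon<\Delta/2$) is at least of order $1/(\eta\Delta p_0)$.

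It therefore suffices to show that when the trajectory enters the trap, $\pi(1)=O(\pb^{K-1})$. The mechanism is as follows. Before $J$ exceeds $\mu(a)$ for the actions $a\ge3$, and while their mass is being drained, $\pi(1)$ shrinks or stays flat relative to $\pi(2)$. Mass flows from actions $3,\dots,K$ to action $2$, and $\theta(1)$ gains almost nothing because its increment is proportional to $\pi(1)=O(\pb)$.

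We make this quantitative by comparing ratios. We will use $d\log(\pi(1)/\pi(2))$ and show that each of the $K-2$ suboptimal actions contributes a drain phase, during which $\pi(1)/\pi(2)$ loses a factor of order $\pb$ relative to the ordering. This is the step I expect to be the main obstacle: obtaining exactly $\pb^{K-1}$ requires a careful multi-phase analysis, possibly by induction on $K$ with an ODE comparison to gradient flow that uses $\eta\mumax<4$ for step-size stability. If the clean per-action factor fails, I would fall back to an ODE invariant of the flow. One candidate is conservation of $\sum_a \pi(a)^{-1}$-type quantities, or the relation $\pi(1)\propto$ a product of powers of the other probabilities. This is obtained by integrating $d\theta(a)/\pi(a)$, since $d(1/\pi(a))$-type relations link how far each coordinate moves.

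\emph{Upper bound for \res.} We choose $S_\pb=\lceil c_0/(\eta\pb)\rceil$. In stage $b=0$, the KL target $\tilde\pi(a)\propto\pi_{0,0}(a)\mu(a)$ still has $\tilde\pi(1)\asymp\pb$, so one stage cannot finish the job. Instead we use Lemma~\ref{lem:inner-loop-progress}. Since $\eta S\ir_b\gtrsim 1/\pb\gg1$, each stage yields progress at least $\lambda(\pi_{b,0}(1))\,\ir_b$.

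More directly, for large $S$ the inner loop converges to $\tilde\pi_b$, with error controlled because the inner objective is a convex KL in the logits. The convergence time scales like $1/(\eta\,\min\tilde\pi)$, hence $O(\pb^{-1})$ steps suffice while $\pi(1)\gtrsim\pb$. Consequently each stage multiplies $\pi(1)/\pi(a)$ by about $\mu(1)/\mu(a)\ge\mu(1)/\mu(2)>1$.

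After $O(\log(1/\pb))$ stages, $\pi(1)$ is a constant, and with $O(1)$ further stages (using Theorem~\ref{thm:rate-upper} with a bounded $p_{\min}$) we get $\rg\le\epsilon$. The total is $S_\pb\cdot O(\log(1/\pb))=O(\pb^{-1}\log(1/\pb))$. Comparing with $\pb^{-(K-1)}$, where $K\ge3$, gives a ratio tending to $0$.
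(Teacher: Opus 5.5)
Your Part 2 has the same architecture as the paper's proof. Both halves, however, stop short of a proof at their decisive step.

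\textbf{Part 1.} The lower bound rests on reaching a state where the off-$2$ mass $m_b:=1-\pi_{b,0}(2)$ is $O(x^{K-1})$ while $\theta(1)$ has moved only $O(1)$. That is exactly what you leave open. The per-action ``drain'' story is not quantified, and the fallback quantities you list are not invariants: $\sum_a 1/\pi(a)\to\infty$ along any convergent run, and integrating $d\theta(a)/\pi(a)$ just returns $(\mu(a)-J)\,dt$. Your phase picture is also off. Under softmax PG the increment of $\log(\pi(1)/\pi(2))$ is $\eta(g(1)-g(2))$ with $g(1)=O(\pi(1))$, so $\pi(1)$ is not multiplied by $1+\eta(\mu(1)-J)$ per step; that is the natural-gradient/replicator rate, and it contradicts your own $\dot p=\eta\Delta p^2$.

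The missing idea is conservation of the logit sum. Since $\sum_a g_{b,0}(a)=J(\pi_{b,0})-J(\pi_{b,0})=0$, one gets the exact identity
\[
\prod_{a=2}^K\frac{\pi_{b,0}(a)}{\pi_{b,0}(1)}=\frac{(1-x)^{K-1}\prod_{a=2}^K w(a)}{x^{K-1}}\,e^{-K\ell_b},\qquad \ell_b:=\theta_{b,0}(1)-\theta_{0,0}(1).
\]
To exploit it you still need two ingredients.
\begin{enumerate}
\item An $x$-independent index $b_0$ after which $J(\pi_{b,0})\ge\max_{a\ge3}\mu(a)+\kappa$ for a fixed $\kappa>0$. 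This follows from Theorem~\ref{thm:global} applied to the reduced bandit on $\{2,\dots,K\}$, continuity of the update at $x=0$, and monotonicity of $J$.
\item A certificate that actions $3,\dots,K$ are compressed to $\pi(a)\lesssim\pi(1)$ before $\ell_b$ leaves $O(1)$. Rather than timing drain phases, note that for $b\ge b_0$ the increment of $\pi(1)/\pi(a)$ is at least $(\kappa/\mumax)(\ell_{b+1}-\ell_b)$. So at the first $b\ge b_0$ with $\ell_b\ge\ell_{b_0}+1$ (where $\ell$ is still $O(1)$, since its increments are at most $\eta\mumax$), one has $\pi(a)/\pi(1)\le\mumax/\kappa$ for all $a\ge3$. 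The identity then forces $\pi(2)/\pi(1)\gtrsim x^{-(K-1)}$, hence $m_b=O(x^{K-1})$.
\end{enumerate}
Your escape estimate presupposes that the trajectory stays near $e_2$. A rigorous version controls all of $m_b$ at once, e.g.\ $1/m_{b+1}\ge 1/m_b-2\eta\mumax$, which follows from $|g_{b,0}(a)|\le\mumax m_b$ for every $a$.

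\textbf{Part 2.} The key quantitative claim is asserted, and as phrased it is not right.
\begin{itemize}
\item The route through Lemma~\ref{lem:inner-loop-progress} gives only $\lambda(\pi_{b,0}(1))\,\ir_b\asymp x^2\ir_b$ per stage, which is far too weak.
\item The claim that the inner loop converges in time $\sim 1/(\eta\min\tilde\pi)$ is unproved. With the minimum over all actions it need not be $O(1/x)$ uniformly over the $\Theta(\log(1/x))$ stages: the stage targets shrink low-reward actions geometrically, so some actions may carry mass far below $x$.
\item For the same reason, an $S\asymp 1/x$ fit cannot certify that every ratio $\pi(1)/\pi(a)$ gains a factor $\approx\mu(1)/\mu(a)$. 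The logit of an action with mass $\ll x$ moves by $o(1)$ over $O(1/x)$ steps.
\end{itemize}
What suffices is an aggregate, binary statement.
\begin{itemize}
\item The target logits are exactly $\theta_{b,0}+\log\mu$. Concavity and $J/2$-smoothness of the surrogate then give $\DKL(\widehat\pi_q\|\pi_{b,S})\le\|\log\mu\|_2^2/(2\eta\mumin(1-\eta\mumax/4)S)$, with a constant independent of $x$.
\item A KL bound for action $1$ versus the rest shows that $\DKL\le\frac18\min\{\alpha,1-\alpha\}v^2$, with $\alpha=\widehat\pi_q(1)$, preserves $\logit(\pi(1))$ up to $v$.
\item Combined with the target's gain $\logit(\alpha)-\logit(q(1))\ge\log(\mumax/\mu(2))$, this yields a fixed log-odds increment per stage once $S\asymp 1/x$.
\end{itemize}
This argument needs $1-\alpha\gtrsim x$, which is why one stops as soon as $\pi_{b,0}(1)\ge h:=1-\epsilon/(\mumax-\mumin)$. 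At that point $\rg(\pi_{b,0})\le\epsilon$ already holds, so your final appeal to Theorem~\ref{thm:rate-upper} is unnecessary. As stated it would also require a restart, since $p_{\min}=\inf_b\pi_{b,0}(1)\le x$ along the original trajectory.
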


We provide an intuitive explanation below.
On-policy policy gradient follows the direction of steepest first-order increase in expected reward,
yet this local improvement might substantially reduce the optimal action's probability. 
Once trapped near a suboptimal vertex of the probability simplex, the small probability of the optimal action could suppress its gradient, making recovery very slow. This slow transient behavior of softmax policy gradient has also been established theoretically and observed empirically by previous works
\citep{mei2020gravity,mei2026delightful}.
In contrast, \res fixes the rollout distribution within each stage and takes gradient steps to reduce the forward KL divergence from the reward-weighted target to the current policy. 
Reward weighting assigns the largest multiplicative weight to the optimal action, so once the KL divergence is sufficiently small, its probability increases relative to its value at the start of the stage.
Choosing a suitable $S$ for sufficiently accurate fitting therefore
ensures progress in the optimal action's probability after each inner loop.
This property holds for every full-support policy, including policies near a suboptimal vertex. Therefore, even if \res has slower progress than on-policy \reone early in training, it can maintain steady progress in increasing the probability of the optimal action, avoiding the stagnation that may occur with \reone and thus achieving faster convergence eventually.

\begin{proof}[Proof sketch.]
For \reone, probability-ratio identities and conservation of the logit sum show that the optimal-action probability becomes $O(x^{K-1})$ while most probability concentrates on action $2$.
Controlling the time spent in this region yields the $\Omega(x^{-(K-1)})$ lower bound.
For \res, each reward-weighted target increases the log-odds of the optimal  action by at least a fixed positive amount.
Its logits differ from the stage's initial logits only by reward-dependent shifts, and standard optimization analysis gives an $O(1/S)$
KL fitting error bound for an $S$-step stage, with a constant term independent of $x$.
A binary-KL bound shows that an error below a sufficiently small multiple of $x$ preserves a fixed fraction of this log-odds gain.
Thus $S_x=\Theta(x^{-1})$ suffices to ensure that the optimal-action probability at the end of a stage remains at least $x$ throughout. 
Accumulating the fixed log-odds gains takes $O(\log(1/x))$ stages, leading to $O(x^{-1}\log(1/x))$ total gradient steps.
See Appendix~\ref{app:comparison} for the complete proof.
\end{proof}

\subsection{Numerical validation}
\label{subsec:numerical_validation}

We first conduct experiments to verify the theoretical findings in Theorems~\ref{thm:rate-upper} and~\ref{thm:rate-lower}, namely the convergence rates of \res.
The empirical results are shown in Figure~\ref{fig:exp_convergence_rates}, whose concrete settings are explained in the caption.
In the first setting, we observe that when initialized at a strong policy,
\res converges to the optimal policy at a $\Theta(1/t)$ rate (where $t$ is the number of gradient steps) after a short burn-in phase;
in particular, the learning curves of \res with different $S$ values mostly overlap when using $t$ as the X-axis.
In the second setting where the initial policy is relatively weaker (though still dominated by the optimal action) and the learning rate $\eta$ is larger,
\res with a larger $S$ exhibits a longer burn-in phase and initially lags behind, but eventually catches up with on-policy \reone and achieves the same $\Theta(1/t)$ convergence rate,
as predicted by our main theorems.

We further conduct experiments to verify Theorem~\ref{thm:benefits_off_policyness}, namely the benefits of off-policyness in accelerating the convergence of \res when the initial policy has a small optimal-action probability.
The empirical results are shown in Figure~\ref{fig:exp_on_vs_off_policy}, whose concrete settings are explained in the caption.
In the first plot, we observe that the suboptimality gap of on-policy \reone decays quickly at the beginning, but then plateaus at $0.3$ --- the suboptimality gap of the second-best action --- for a long period;
in contrast, \res with a large $S=512$ makes steady progress towards the global optimum and eventually reaches a suboptimality gap close to zero.
The second plot visualizes the optimization trajectories of \res with different $S$ values in a minimal 3-action setting;
it clearly demonstrates how \reone takes a long detour in its trajectory towards the optimal policy, whereas \res with larger $S$ avoids the trap and achieves faster convergence to the global optimum.

\clearpage
\newpage

\begin{figure}
    \centering
    \includegraphics[width=0.45\linewidth]{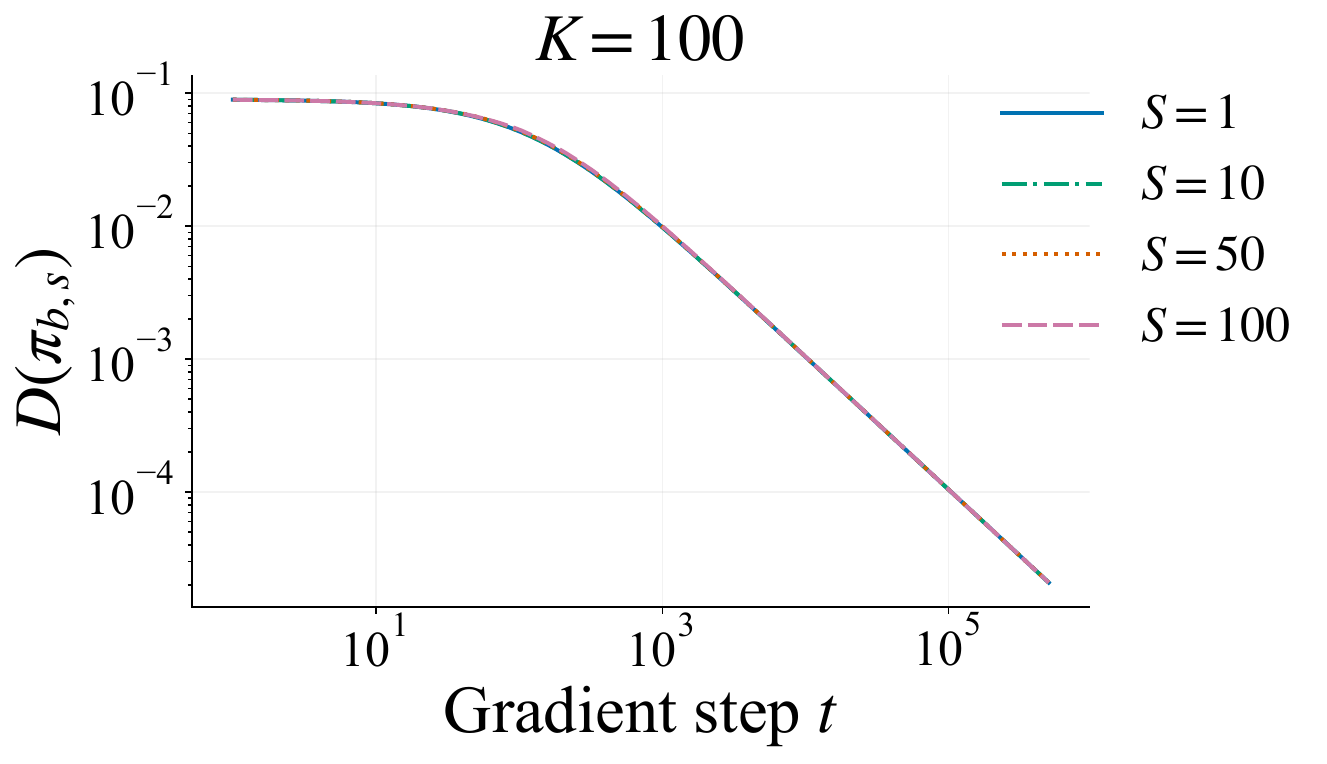} 
    \includegraphics[width=0.45\linewidth]{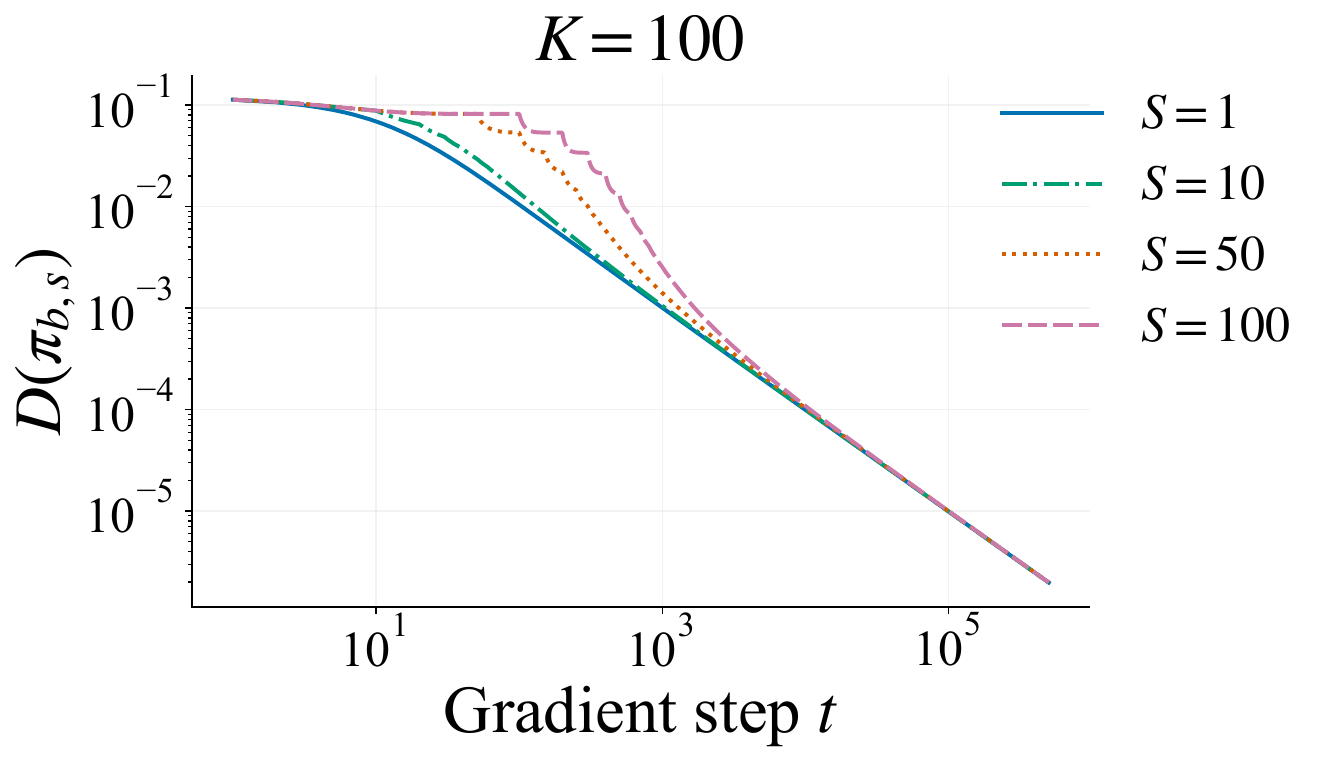}
    \caption{Empirical validation of \res convergence rates in two concrete settings, both with $K=100$.
    \textbf{Left: } the reward-mean vector is $\mu = [1, 0.1, \dots, 0.1]$, the initial policy $\pi_{0,0}$ has probability $0.9$ on the optimal action and equal probabilities on the remaining actions, and the learning rate $\eta = 0.095$.
    \textbf{Right: } $\mu = [1, 0.6, \dots, 0.6]$, $\pi_{0,0}$ has probability $0.7$ on the optimal action and equal probabilities on the remaining actions, and $\eta = 1$.
    }
    \label{fig:exp_convergence_rates}
\end{figure}

\begin{figure}
    \centering
    \includegraphics[width=0.45\linewidth]{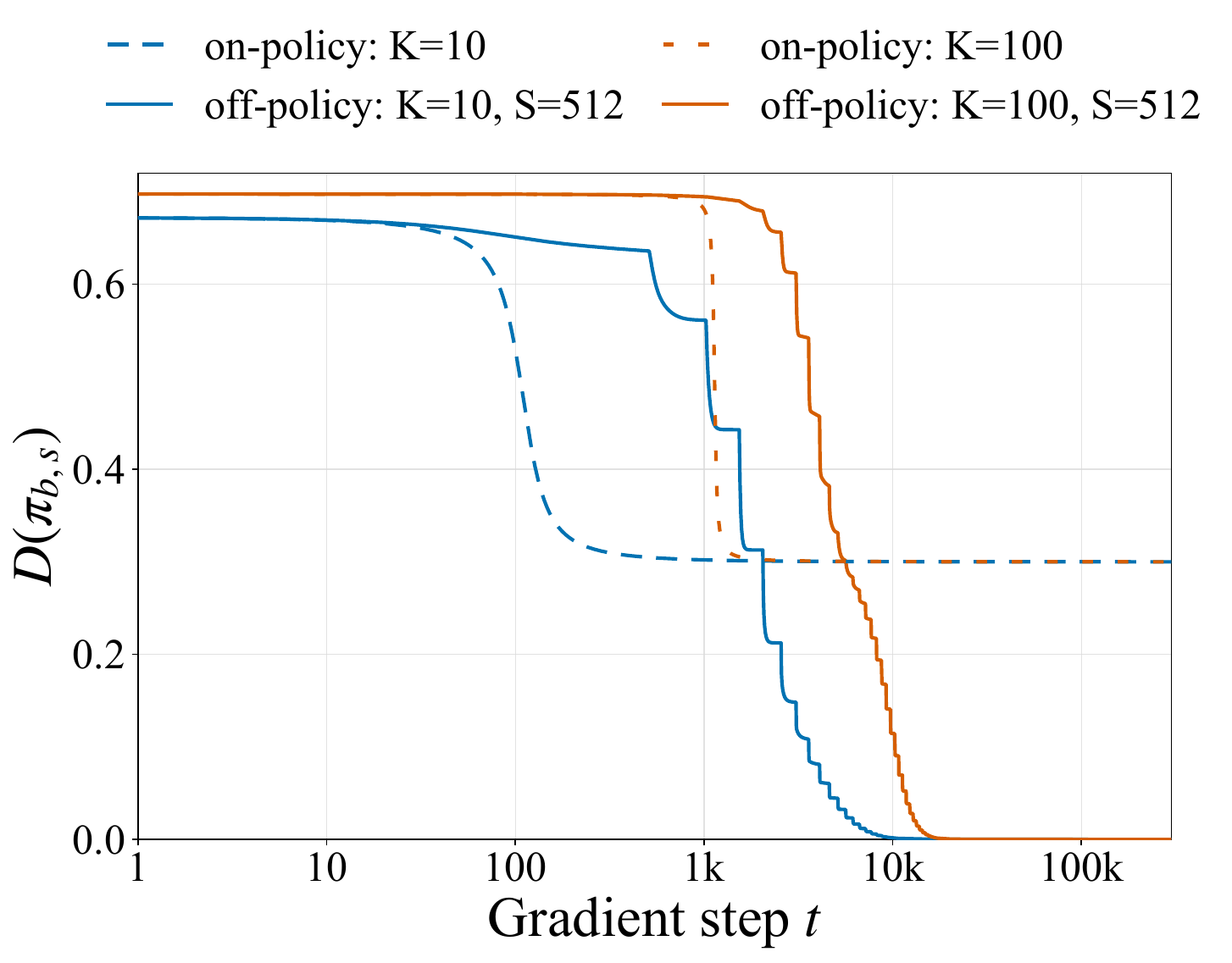} 
    \includegraphics[width=0.42\linewidth]{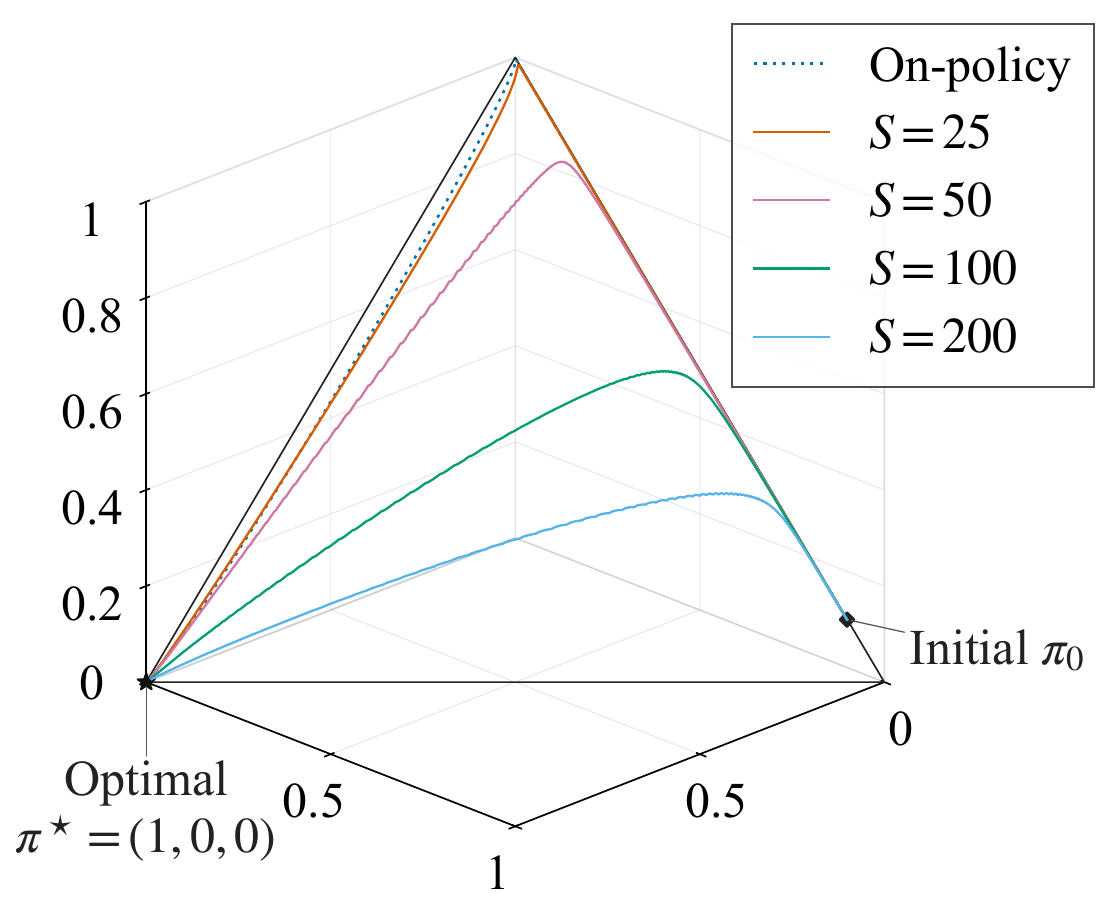}
    \caption{Empirical validation of the benefits of off-policyness.
    \textbf{Left: } $K \in \{10, 100\}$, the reward-mean vector is $\mu = [1, 0.7, 0.3, 0.3, \dots]$, the initial policy $\pi_{0, 0}$ satisfies $\pi_{0, 0}(1) = 0.1 / K$ and $\pi_{0, 0}(a) \propto e^{-2 \mu(a)}$ for $2 \le a \le K$.
    \textbf{Right: } a minimal setting for visualizing the optimization trajectories of \res, with $K=3$, $\mu = [1, 0.9, 0.89]$, and $\pi_{0,0} = [0.0005, 0.1, 0.8995]$. The learning rate $\eta = 0.5$ for both experiments.
    }
    \label{fig:exp_on_vs_off_policy}
\end{figure}

%% file: sections/related_work.tex
\section{Related work}
\label{sec:related_work}

\paragraph{Reward-guided self-training.}

Reward-guided self-training has been used to improve LLM reasoning and alignment by training on model-generated outputs that are weighted or selected based on task feedback. STaR learns from rationales that yield correct answers, RAFT selects outputs for training by reward ranking, and ReST-EM uses binary feedback in an expectation-maximization procedure \citep{zelikman2022star,dong2023raft,singh2024humandatascalingselftraining}. \citet{ghosh2020operator} derive a policy-improvement bound for reward-weighted likelihood fitting that does not require exact optimization.
In tabular settings, reward-weighted regression and zero-baseline AsymRE --- both of which have a stage-wise structure akin to \res \ --- have been shown to converge globally from any full-support initial policy when the surrogate objective within each stage is fitted exactly or to convergence \citep{strupl2022rwr,arnal2025asymmetric}, but their results do not cover finite-step fitting; in comparison, our convergence results in this work cover the full range $S \ge 1$.
Methods with finite-step inner loops include iw-SFT, which periodically reweights a fixed curated dataset and yields a reward-weighted surrogate loss before clipping or smoothing operations \citep{qin2025curated}.
\citet{russo2026success} formulates finite-step fitting of success-conditioned targets, but does not establish convergence of the resulting iterative procedure.
All these works do not establish global convergence or rates measured in total gradient steps when each stage performs a fixed finite number $S$ of updates.

\paragraph{Softmax policy gradient.}

Global convergence and $O(1/T)$ rates have been established for exact softmax policy gradient \citep{mei2020global,liu2024elementaryanalysispolicygradient,lu2024towards}; our results in Theorems~\ref{thm:global},~\ref{thm:rate-upper} and~\ref{thm:rate-lower} extend these conclusions to \res with $S \ge 1$.
Almost-sure convergence has also been established for stochastic REINFORCE in bandits and finite-horizon tabular MDPs \citep{mei2023stochastic,robertson2025reinforce}.
Lower bounds nevertheless show slow escape from unfavorable initialization in bandits and exponential iteration complexity on particular discounted MDPs \citep{mei2020gravity,li2023exponential}. \citet{mei2020gravity} address this issue by changing the
policy parameterization, while \citet{mei2026delightful}
use gradient gating to accelerate escape from suboptimal corners of the probability simplex. 
Our results in Theorem~\ref{thm:benefits_off_policyness} show that an appropriate degree of off-policyness can also reduce this delay without further algorithmic modifications.

\paragraph{Off-policy algorithms.}

Global convergence guarantees for dedicated off-policy RL algorithms allow state-distribution mismatch \citep{laroche2021jekyll,zhang2022offpolicy}.
They retain current-policy action weights, directly or through importance-sampling ratios, whereas \res uses the stale action weights without
importance-sampling correction. FMA-PG provides policy-improvement guarantees for finite-step surrogate optimization within each inner loop, while SPMA's convergence bound includes an additive term for errors due to approximate fitting \citep{vaswani2022surrogates,asad2025spma}.
\citet{dai2026nonasymptoticglobalconvergenceppoclip} prove global linear convergence of multi-step PPO-Clip with KL regularization under a bound on the sum of inner step sizes.
All these guarantees do not directly yield global convergence
rates for the uncorrected, unregularized reward-weighted update rule in \res with a fixed $S$ value;
our work fills in this gap of the literature.

%% file: sections/conclusion.tex
\section{Limitations and future work}
\label{sec:conclusion}

This work has investigated the learning dynamics of \res --- probably the simplest possible algorithm for fine-tuning on self-generated and reward-weighted data --- with in-depth convergence analysis for multi-arm bandits and softmax policies.
Future work may extend the theoretical study to broader settings, such as contextual bandits, Markov decision processes, policies with different parameterizations, or other learning algorithms.
Moreover, our analysis focuses on convergence properties and considers the infinite-sample limit of gradient dynamics; future work may investigate sample complexities in finite-sample settings.
In terms of empirical work, it remains open to verify which parts of our results hold true in realistic scenarios, see if our theoretical results can inspire better practice of reward-guided self-training, and identify gaps between theory and practice that require further research.

%% file: appendix/convergence_proofs.tex
\section{Proofs for Section~\ref{subsec:theory_global_convergence}}
\label{app:global-convergence}

\begin{lemma}\label{lem:surrogate obj grad}
    Consider the surrogate objective function
\begin{equation*}
L(\pi_{b,s};\pi_{b,0})
=\sum_{a\in\mathcal A}\pi_{b,0}(a)\mu(a)\log\pi_{b,s}(a).
\end{equation*}
With $\pi_{b,0}$ fixed, the gradient with respect to the logit parameter $\theta$, evaluated at $\theta=\theta_{b,s}$, satisfies
\begin{equation*}
\begin{aligned}
g_{b,s} \coloneqq \left.\nabla_\theta L(\pi_\theta;\pi_{b,0})\right|_{\theta=\theta_{b,s}}=\pi_{b,0}\odot\mu-J(\pi_{b,0})\pi_{b,s}.
\end{aligned}
\end{equation*}
\end{lemma}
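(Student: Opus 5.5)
We compute the gradient coordinatewise, treating $\pi_{b,0}$ as a constant so that only the factor $\log\pi_\theta(a)$ depends on $\theta$. For the softmax parameterization, the log-probability of action $a$ is
\begin{equation*}
\log\pi_\theta(a)=\theta(a)-\log\sum_{j=1}^K e^{\theta(j)}.
\end{equation*}
Differentiating with respect to the logit $\theta(a')$ gives the standard identity
\begin{equation*}
\frac{\partial \log\pi_\theta(a)}{\partial\theta(a')}=\mathbf{1}\{a=a'\}-\pi_\theta(a'),
\end{equation*}
that is, $\nabla_\theta\log\pi_\theta(a)=e_a-\pi_\theta$.

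Next we use linearity of the gradient over the finite sum defining $L(\pi_\theta;\pi_{b,0})$:
\begin{equation*}
\nabla_\theta L(\pi_\theta;\pi_{b,0})=\sum_{a=1}^K\pi_{b,0}(a)\mu(a)\,(e_a-\pi_\theta).
\end{equation*}
The first part of this sum collects into the vector $\sum_a\pi_{b,0}(a)\mu(a)e_a=\pi_{b,0}\odot\mu$. In the second part, $\pi_\theta$ factors out with coefficient $\sum_a\pi_{b,0}(a)\mu(a)=\langle\pi_{b,0},\mu\rangle=J(\pi_{b,0})$, by the definition \eqref{eq:objective function}.

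Evaluating at $\theta=\theta_{b,s}$, where $\pi_\theta=\pi_{b,s}$, yields the claimed expression $g_{b,s}=\pi_{b,0}\odot\mu-J(\pi_{b,0})\pi_{b,s}$. As a consistency check, this vector coincides with the expectation $\E_{a\sim\pi_{b,0}}[\mu(a)\nabla_\theta\log\pi_\theta(a)]$ from Section~\ref{sec:main_results_high_level}, which confirms that Algorithm~\ref{alg:res_softmax} is the infinite-sample limit of Algorithm~\ref{alg:res_general}.

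There is no real obstacle here. The only point needing care is to hold the rollout weights $\pi_{b,0}$ fixed, without differentiating through them, so that the stale distribution appears in the first term while the current policy $\pi_{b,s}$ appears in the second.
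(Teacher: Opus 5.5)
Your proof is correct and follows essentially the same route as the paper. Both differentiate the weighted log-likelihood term by term using the softmax score $\nabla_\theta\log\pi_\theta(a)=e_a-\pi_\theta$; the paper writes this as $\nabla_\theta\pi_\theta(a)/\pi_\theta(a)$ via the softmax Jacobian, where you differentiate the log-softmax directly. Both then collect the first terms into $\pi_{b,0}\odot\mu$ and factor $J(\pi_{b,0})$ out of the second.
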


\begin{proof}
Note that\begin{equation*}
\begin{aligned}
\left.\nabla_\theta L(\pi_\theta;\pi_{b,0})\right|_{\theta=\theta_{b,s}}
&=\sum_{a\in\mathcal A}\pi_{b,0}(a)\mu(a)
\frac{\left.\nabla_\theta\pi_\theta(a)\right|_{\theta=\theta_{b,s}}}{\pi_{b,s}(a)}\\
&=\sum_{a\in\mathcal A}\pi_{b,0}(a)\mu(a)
\frac{\pi_{b,s}(a)e_a-\pi_{b,s}(a)\pi_{b,s}}{\pi_{b,s}(a)}\\
&=\sum_{a\in\mathcal A}\pi_{b,0}(a)\mu(a)e_a
-\sum_{a\in\mathcal A}\pi_{b,0}(a)\mu(a)\pi_{b,s}\\
&=\pi_{b,0}\odot\mu-J(\pi_{b,0})\pi_{b,s}=g_{b,s}.
\end{aligned}
\end{equation*}
\end{proof}

\begin{lemma}\label{lem:lsmooth}
$\nabla_\theta L(\pi_\theta;\pi_{b,0})$ has a Lipschitz constant no greater than $J(\pi_{b,0})/2$.
\end{lemma}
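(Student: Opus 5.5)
The Lipschitz constant of $\nabla_\theta L(\pi_\theta;\pi_{b,0})$ is bounded by the operator norm of the Hessian of $\theta\mapsto L(\pi_\theta;\pi_{b,0})$, uniformly over $\theta\in\R^K$. By Lemma~\ref{lem:surrogate obj grad}, the gradient is $\pi_{b,0}\odot\mu-J(\pi_{b,0})\pi_\theta$. The first term does not depend on $\theta$, so the Hessian is
\begin{equation*}
\nabla^2_\theta L(\pi_\theta;\pi_{b,0}) = -J(\pi_{b,0})\,\frac{\partial \pi_\theta}{\partial\theta} = -J(\pi_{b,0})\big(\diag(\pi_\theta)-\pi_\theta\pi_\theta^\top\big).
\end{equation*}
By the mean value theorem along the segment between two parameter vectors, for any $\theta,\theta'$ we have $\|\nabla L(\theta)-\nabla L(\theta')\|_2\le \sup_{\vartheta}\|\nabla^2 L(\vartheta)\|_2\,\|\theta-\theta'\|_2$. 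It therefore suffices to show that the spectral norm of the softmax Jacobian $H(\pi):=\diag(\pi)-\pi\pi^\top$ is at most $1/2$ for every probability vector $\pi$. Note also that $J(\pi_{b,0})\ge 0$ since the rewards are nonnegative.

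$H(\pi)$ is symmetric and positive semidefinite: for any $v$, $v^\top H v=\sum_a\pi(a)v(a)^2-(\sum_a\pi(a)v(a))^2$, which is the variance of $v(a)$ under $a\sim\pi$. So $\|H\|_2=\max_{\|v\|_2=1}\mathrm{Var}_{a\sim\pi}(v(a))$, and we need $\mathrm{Var}_\pi(v)\le \tfrac12\|v\|_2^2$.

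Center at the midpoint $m=(\max_a v(a)+\min_a v(a))/2$. Then
\begin{equation*}
\mathrm{Var}_\pi(v)\le \E_\pi (v-m)^2\le \Big(\frac{\max_a v(a)-\min_a v(a)}{2}\Big)^2 .
\end{equation*}
Let the maximum and minimum be attained at indices $i$ and $j$. Then $(v(i)-v(j))^2\le 2(v(i)^2+v(j)^2)\le 2\|v\|_2^2$. Combining, $\mathrm{Var}_\pi(v)\le \tfrac14\cdot 2\|v\|_2^2=\tfrac12\|v\|_2^2$, which gives $\|H(\pi)\|_2\le 1/2$. Multiplying by the factor $J(\pi_{b,0})$ yields the claimed constant $J(\pi_{b,0})/2$.

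The only nontrivial step is this uniform spectral bound on the softmax Jacobian; the rest is a direct differentiation of Lemma~\ref{lem:surrogate obj grad}. The argument above uses the range-based variance bound to avoid an eigenvalue computation. An alternative is Gershgorin: row $a$ of $H$ has diagonal entry $\pi(a)-\pi(a)^2$ and off-diagonal absolute row sum $\pi(a)(1-\pi(a))$, so every eigenvalue lies in $[0,2\pi(a)(1-\pi(a))]\subseteq[0,1/2]$.
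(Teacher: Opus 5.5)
Your proof is correct and follows the paper's route: you compute the Hessian $-J(\pi_{b,0})\bigl(\diag(\pi_\theta)-\pi_\theta\pi_\theta^\top\bigr)$ and reduce the claim to the uniform bound $0\preceq H(\pi)\preceq I/2$. The paper proves that bound with the same Gershgorin argument you give as your alternative, and your variance-over-range argument (which uses $\max_a v(a)-\min_a v(a)\le\sqrt2\|v\|_2$) is an equally valid and equally short substitute.
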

\begin{proof}
    Define
\nopagebreak[4]
\begin{equation*}
H(\pi)=\operatorname{diag}(\pi)-\pi\pi^\top.
\end{equation*}
This matrix is positive semidefinite. By Gershgorin's theorem, its eigenvalues are bounded above by $\max_a2\pi(a)(1-\pi(a))\le1/2$, and hence
\begin{equation}
\label{eq:thm1-policy-hessian-bound}
0\preceq H(\pi)\preceq I/2.
\end{equation}
We also have
\begin{equation}
\label{eq:thm1-surrogate-hessian}
\nabla_\theta^2L(\pi_\theta;\pi_{b,0})
=- J(\pi_{b,0}) H(\pi_\theta).
\end{equation}
Thus $\nabla_\theta L(\pi_\theta;\pi_{b,0})$ has a Lipschitz constant no greater than $J(\pi_{b,0})/2$.
\end{proof}

\begin{lemma}\label{lem:grad order}
Suppose $\eta \mumax<4$. Consider how the logit of each action evolves under this assumption. We have
\begin{equation*}
\|g_{b,s}\|_2\le\|g_{b,0}\|_2.
\end{equation*}
\end{lemma}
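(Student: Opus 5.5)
The plan is to treat the inner loop of Algorithm~\ref{alg:res_softmax} as plain gradient ascent on the concave, smooth surrogate $\theta\mapsto L(\pi_\theta;\pi_{b,0})$. I will show that each step multiplies the gradient by a symmetric matrix of spectral norm at most $1$, and then induct on $s$. By Lemma~\ref{lem:surrogate obj grad} the inner update is $\theta_{b,s+1}=\theta_{b,s}+\eta g_{b,s}$. Since $\pi_{b,0}\odot\mu$ is fixed during the stage, the gradient changes only through the policy term:
\begin{equation*}
g_{b,s+1}-g_{b,s}=-J(\pi_{b,0})\big(\pi_{b,s+1}-\pi_{b,s}\big).
\end{equation*}

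First I linearize the policy change exactly by the fundamental theorem of calculus along the segment $\theta(\tau)=\theta_{b,s}+\tau\eta g_{b,s}$, $\tau\in[0,1]$. The Jacobian of $\theta\mapsto\pi_\theta$ is $H(\pi_\theta)=\operatorname{diag}(\pi_\theta)-\pi_\theta\pi_\theta^\top$, so
\begin{equation*}
\pi_{b,s+1}-\pi_{b,s}=\eta\bar H_s\,g_{b,s},\qquad \bar H_s\coloneqq\int_0^1 H(\pi_{\theta(\tau)})\,d\tau .
\end{equation*}
Combining this with the previous display gives
\begin{equation*}
g_{b,s+1}=\big(I-\eta J(\pi_{b,0})\bar H_s\big)g_{b,s}.
\end{equation*}

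Second, I bound the step matrix. By Eq.~\eqref{eq:thm1-policy-hessian-bound} in the proof of Lemma~\ref{lem:lsmooth}, each $H(\pi)$ is symmetric with $0\preceq H(\pi)\preceq I/2$. This Loewner sandwich is preserved under averaging, so $0\preceq\bar H_s\preceq I/2$ and $\bar H_s$ is symmetric. Moreover $0<J(\pi_{b,0})\le\mumax$, because rewards are positive and bounded by $\mumax$. Hence the eigenvalues of $I-\eta J(\pi_{b,0})\bar H_s$ lie in $[1-\eta\mumax/2,\,1]$. The hypothesis $\eta\mumax<4$ gives $1-\eta\mumax/2>-1$, so this interval sits inside $(-1,1]$. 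A symmetric matrix with spectrum in $(-1,1]$ has operator $2$-norm at most $1$, which yields $\|g_{b,s+1}\|_2\le\|g_{b,s}\|_2$. Induction over $s=0,1,\dots,S-1$ then gives $\|g_{b,s}\|_2\le\|g_{b,0}\|_2$.

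The only delicate point is the second step. A naive route through the descent lemma would give monotonicity of $L$, not of the gradient norm. One needs the averaged-Jacobian representation to keep the step matrix symmetric, so that the Loewner bounds translate into a spectral-norm bound. The step-size threshold $4/\mumax$ is exactly $2$ divided by the smoothness constant $\mumax/2$ from Lemma~\ref{lem:lsmooth}, which is why it is the condition that matches.
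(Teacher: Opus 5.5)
Your proof is correct, but it reaches the conclusion by a different route from the paper. The paper never forms a step matrix. It applies the co-coercivity inequality for a concave function with $J(\pi_{b,0})/2$-Lipschitz gradient, $\eta\langle g_{b,s}-g_{b,s+1},g_{b,s}\rangle\ge\frac{2}{J(\pi_{b,0})}\|g_{b,s+1}-g_{b,s}\|_2^2$. It then expands the square to get $\|g_{b,s+1}\|_2^2\le\|g_{b,s}\|_2^2-\bigl(\frac{4}{\eta J(\pi_{b,0})}-1\bigr)\|g_{b,s+1}-g_{b,s}\|_2^2$.

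You instead derive the exact linear recursion $g_{b,s+1}=(I-\eta J(\pi_{b,0})\bar H_s)g_{b,s}$ with a symmetric averaged Jacobian, and read off the spectral bound. The paper's argument is purely first-order, so it would apply verbatim to any concave function with Lipschitz gradient, and it yields an explicit quantitative decrease. Yours is self-contained: it uses only the Hessian sandwich already established in Lemma~\ref{lem:lsmooth}, with no external inequality. It also makes the role of the threshold transparent, since the spectrum of the step matrix lies in $[1-\eta J(\pi_{b,0})/2,\,1]$ and must stay above $-1$.

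The two arguments give the same strength. For every $m$ in that interval, $1-m^2=(1-m)(1+m)\ge\bigl(\frac{4}{\eta J(\pi_{b,0})}-1\bigr)(1-m)^2$. Diagonalizing your symmetric step matrix therefore recovers the paper's quantitative inequality exactly.

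One correction to your closing remark: the averaged-Jacobian representation is not needed to get beyond monotonicity of $L$. Co-coercivity gives gradient-norm monotonicity directly, as the paper's proof shows.
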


\begin{proof}
Fix $b$. Note that the surrogate objective is concave and can be written as
\begin{equation*}
L(\pi_\theta;\pi_{b,0})
=\sum_{a\in\mathcal A}\pi_{b,0}(a)\mu(a)\theta(a) - J(\pi_{b,0})\log\sum_{a\in\mathcal A}e^{\theta(a)}.
\end{equation*}
The Hessian bound from Eq.~\eqref{eq:thm1-policy-hessian-bound} and Eq.~\eqref{eq:thm1-surrogate-hessian} shows that $\nabla L(\pi_\theta;\pi_{b,0})$ has a Lipschitz constant of at most $J(\pi_{b,0})/2$. For $0\le s<S$, by the standard co-coercivity inequality \citep{nesterov2013introductory},
\begin{equation*}
\begin{aligned}
&\left\langle
\nabla_\lo L(\pi_{b,s};\pi_{b,0}) - \nabla_\lo L(\pi_{b,s+1};\pi_{b,0}),
\theta_{b,s+1}-\theta_{b,s}
\right\rangle\\
&\qquad\ge\frac{2}{J(\pi_{b,0})}
\left\|\nabla_\lo L(\pi_{b,s};\pi_{b,0}) - \nabla_\lo L(\pi_{b,s+1};\pi_{b,0})\right\|_2^2.
\end{aligned}
\end{equation*}
Substituting $\nabla_\lo L(\pi_{b,s};\pi_{b,0})=g_{b,s}$ and $\theta_{b,s+1}-\theta_{b,s}=\eta g_{b,s}$ gives
\begin{equation*}
\eta\langle g_{b,s}-g_{b,s+1},g_{b,s}\rangle
\ge\frac{2}{J(\pi_{b,0})}\|g_{b,s+1}-g_{b,s}\|_2^2.
\end{equation*}
Therefore, since $\eta J(\pi_{b,0})<4$,
\begin{equation*}
\begin{aligned}
\|g_{b,s+1}\|_2^2
&=\|g_{b,s}\|_2^2
+2\langle g_{b,s+1}-g_{b,s},g_{b,s}\rangle
+\|g_{b,s+1}-g_{b,s}\|_2^2\\
&\le\|g_{b,s}\|_2^2
-\left(\frac{4}{\eta J(\pi_{b,0})}-1\right)
\|g_{b,s+1}-g_{b,s}\|_2^2\\
&\le\|g_{b,s}\|_2^2.
\end{aligned}
\end{equation*}
It follows that
\begin{equation*}
\|g_{b,s}\|_2\le\|g_{b,0}\|_2,
\end{equation*}
which completes our proof.
\end{proof}

\paragraph{Proof of Theorem ~\ref{thm:global}.}

Using lemma \ref{lem:lsmooth}, we have
\begin{equation}
\label{eq:surrogate-ascent}
L(\pi_{b,s+1};\pi_{b,0})-L(\pi_{b,s};\pi_{b,0})
\ge\eta\left(1-\frac{\eta J(\pi_{b,0})}{4}\right)\|g_{b,s}\|_2^2.
\end{equation}
Action $1$ is uniquely optimal and rewards are nonnegative, so $\mumax>0$. Every policy coordinate remains strictly positive after finitely many updates, and hence $J(\pi_{b,0})\ge \mumax \pi_{b,0}(1)>0$. The right side is nonnegative because $0<J(\pi_{b,0})\le \mumax$ and $\eta \mumax<4$. For $0\le s\le S$, using the inequality $\log x\le x-1$ we have
\begin{equation}
\label{eq:thm1-surrogate-value-bound}
\begin{aligned}
L(\pi_{b,s};\pi_{b,0})-L(\pi_{b,0};\pi_{b,0})
&=\sum_{a\in\mathcal A}\pi_{b,0}(a)\mu(a)
\log\frac{\pi_{b,s}(a)}{\pi_{b,0}(a)}\\
&\le\sum_{a\in\mathcal A}\pi_{b,0}(a)\mu(a)
\left(\frac{\pi_{b,s}(a)}{\pi_{b,0}(a)}-1\right)\\
&=J(\pi_{b,s})-J(\pi_{b,0}).
\end{aligned}
\end{equation}
Hence $J(\pi_{b,s})\ge J(\pi_{b,0})$. Taking $s=S$ gives $J(\pi_{b+1,0})\ge J(\pi_{b,0})$. Since $J(\pi_{b,0})\le \mumax$, there exists $J_\infty\le \mumax$ such that $J(\pi_{b,0})\to J_\infty$. Summing Eq.~\eqref{eq:surrogate-ascent} over the updates in the inner loop and using Eq.~\eqref{eq:thm1-surrogate-value-bound} gives
\begin{equation}
\label{eq:thm1-round-value-increase}
J(\pi_{b+1,0})-J(\pi_{b,0})
\ge\eta\left(1-\frac{\eta \mumax}{4}\right)
\sum_{s=0}^{S-1}\|g_{b,s}\|_2^2.
\end{equation}
Summing over $b$ then we have 
\begin{equation*}
\sum_{b=0}^{\infty}\sum_{s=0}^{S-1}\|g_{b,s}\|_2^2
\le\frac{\mumax-J_0}{\eta(1-\eta \mumax/4)}<\infty.
\end{equation*}
In particular, $g_{b,0}\to0$. We have
\begin{equation*}
g_{b,0}(a)=\pi_{b,0}(a)(\mu(a)-J(\pi_{b,0})).
\end{equation*}
Thus, if $\mu(a)\ne J_\infty$, then $\pi_{b,0}(a)\to0$. Define
\nopagebreak[4]
\begin{equation*}
\mathcal{M}=\{a\in\mathcal A:\mu(a)=J_\infty\}.
\end{equation*}
Since the probabilities sum to $1$, we have $\mathcal{M}\ne\varnothing$ and
\begin{equation}
\label{eq:thm1-limiting-set-mass}
\sum_{a\in \mathcal{M}}\pi_{b,0}(a)\longrightarrow1.
\end{equation}
We now prove $J_\infty=\mu(1)$. Suppose for contradiction that $J_\infty<\mu(1)$, and define
\begin{equation*}
\mathcal{M}_{+}=\{a:\mu(a)>J_\infty\},\qquad
\mathcal{M}_{-}=\{a:\mu(a)<J_\infty\}.
\end{equation*}
Then $1\in \mathcal{M}_{+}$. Since $J(\pi_{b,0})$ increases monotonically to $J_\infty$, we have $J(\pi_{b,0})\le J_\infty$ throughout.
By lemma \ref{lem:grad order}, we have
\begin{equation}
\label{eq:thm1-inner-stability}
\max_{0\le s\le S}\|\theta_{b,s}-\theta_{b,0}\|_2
\le\eta S\|g_{b,0}\|_2\longrightarrow0.
\end{equation}
By the definition of $\pi_\theta$,
\begin{equation}
\label{eq:thm1-inner-policy-ratio}
\frac{\pi_{b,s}(a)}{\pi_{b,0}(a)}
=\frac{\exp(\theta_{b,s}(a)-\theta_{b,0}(a))}
{\sum_{c\in\mathcal A}\pi_{b,0}(c)
\exp(\theta_{b,s}(c)-\theta_{b,0}(c))}
\longrightarrow1.
\end{equation}
Here the convergence is uniform over the finitely many actions $a$ and steps $0\le s\le S$. Thus
\begin{equation*}
\frac{g_{b,s}(a)}{\pi_{b,0}(a)}
=\mu(a)-J(\pi_{b,0})\frac{\pi_{b,s}(a)}{\pi_{b,0}(a)}
\longrightarrow\mu(a)-J_\infty.
\end{equation*}
Since the action set is finite, there exists $B_0$ such that, whenever $b\ge B_0$, the logit of each action in $\mathcal{M}_{+}$ strictly increases at every update of the inner loop, while the logit of each action in $\mathcal{M}_{-}$ strictly decreases. 
More precisely,
\begin{equation}
\label{eq:thm1-logit-direction-bounds}
\begin{aligned}
\theta_{b+1,0}(a)-\theta_{b,0}(a)
&\ge\frac{\eta S}{2}(\mu(a)-J_\infty)\pi_{b,0}(a),
&&a\in \mathcal{M}_{+},\\
\theta_{b+1,0}(a)-\theta_{b,0}(a)
&\le-\frac{\eta S}{2}(J_\infty-\mu(a))\pi_{b,0}(a),
&&a\in \mathcal{M}_{-}.
\end{aligned}
\end{equation}
The logit of action $1$ is nondecreasing from this point onward, so $\theta_{b,0}(1)$ is bounded below. At the same time, Eq.~\eqref{eq:thm1-limiting-set-mass} and $\pi_{b,0}(1)\to0$ give
\begin{equation*}
\sum_{a\in \mathcal{M}}\exp(\theta_{b,0}(a)-\theta_{b,0}(1))
=\frac{\sum_{a\in \mathcal{M}}\pi_{b,0}(a)}{\pi_{b,0}(1)}
\longrightarrow+\infty.
\end{equation*}
Hence
\nopagebreak[4]
\begin{equation}
\label{eq:thm1-limiting-set-logit-divergence}
\max_{a\in \mathcal{M}}\theta_{b,0}(a)\longrightarrow+\infty.
\end{equation}
We now bound the gradient norm in terms of
$J_\infty-J(\pi_{b,0})$. For notational simplicity, write 
\begin{align*}
h_b \coloneqq \|g_{b,0}\|_2.
\end{align*}
After the rollout policy is refreshed for the next stage, its initial gradient satisfies
\begin{equation*}
\begin{aligned}
g_{b+1,0}(a)
&=\pi_{b+1,0}(a)(\mu(a)-J(\pi_{b+1,0}))\\
&=\frac{\pi_{b+1,0}(a)}{\pi_{b,0}(a)}g_{b,0}(a)
-(J(\pi_{b+1,0})-J(\pi_{b,0}))\pi_{b+1,0}(a).
\end{aligned}
\end{equation*}
The gradient norm does not increase within the inner loop, so $\|\theta_{b+1,0}-\theta_{b,0}\|_2\le\eta S h_b$. The difference between the largest and smallest coordinates of a vector is at most $\sqrt2$ times its Euclidean norm. The ratio formula Eq.~\eqref{eq:thm1-inner-policy-ratio} for $\pi_\theta$ therefore gives
\begin{equation*}
\frac{\pi_{b+1,0}(a)}{\pi_{b,0}(a)}
\ge e^{-\sqrt2\eta S h_b}.
\end{equation*}
Combining the reverse triangle inequality with $\|\pi_{b+1,0}\|_2\le1$ and $J(\pi_{b+1,0})-J(\pi_{b,0})\ge0$ gives
\begin{equation*}
\begin{aligned}
h_{b+1}
&\ge\left(\sum_{a\in\mathcal A}
\left[\frac{\pi_{b+1,0}(a)}{\pi_{b,0}(a)}g_{b,0}(a)\right]^2
\right)^{1/2}
-(J(\pi_{b+1,0})-J(\pi_{b,0}))\|\pi_{b+1,0}\|_2\\
&\ge e^{-\sqrt2\eta S h_b}
\left(\sum_{a\in\mathcal A}g_{b,0}(a)^2\right)^{1/2}
-(J(\pi_{b+1,0})-J(\pi_{b,0}))\\
&=e^{-\sqrt2\eta S h_b}h_b-(J(\pi_{b+1,0})-J(\pi_{b,0})).
\end{aligned}
\end{equation*}
Using $1-e^{-x}\le x$, we obtain
\begin{equation*}
\begin{aligned}
h_b-h_{b+1}
&\le(1-e^{-\sqrt2\eta S h_b})h_b+(J(\pi_{b+1,0})-J(\pi_{b,0}))\\
&\le\sqrt2\eta S h_b^2+(J(\pi_{b+1,0})-J(\pi_{b,0})).
\end{aligned}
\end{equation*}
The lower bound Eq.~\eqref{eq:thm1-round-value-increase} on the increase in the objective gives
\begin{equation*}
\eta\left(1-\frac{\eta \mumax}{4}\right)h_b^2
\le J(\pi_{b+1,0})-J(\pi_{b,0}).
\end{equation*}
Then $h_b-h_{b+1}\le (1+\sqrt2 S/(1-\eta \mumax/4))(J(\pi_{b+1,0})-J(\pi_{b,0}))$. Summing from round $b$ to round $B$ gives
\begin{equation*}
h_b-h_{B+1}\le (1+\sqrt2 S/(1-\eta \mumax/4))(J(\pi_{B+1,0})-J(\pi_{b,0})).
\end{equation*}
Since $h_{B+1}\to0$ and $J(\pi_{B+1,0})\to J_\infty$, letting $B\to\infty$ yields
\begin{equation*}
\|g_{b,s}\|_2\le\|g_{b,0}\|_2
\le (1+\sqrt2 S/(1-\eta \mumax/4))(J_\infty-J(\pi_{b,0})),\qquad 0\le s<S.
\end{equation*}
If $\sum_b(J_\infty-J(\pi_{b,0}))<\infty$, then
\begin{equation*}
\begin{aligned}
\sum_{b=0}^{\infty}\|\theta_{b+1,0}-\theta_{b,0}\|_2
&\le\eta\sum_{b=0}^{\infty}\sum_{s=0}^{S-1}\|g_{b,s}\|_2\\
&\le\eta S(1+\sqrt2 S/(1-\eta \mumax/4))\sum_{b=0}^{\infty}(J_\infty-J(\pi_{b,0}))<\infty.
\end{aligned}
\end{equation*}
Every logit coordinate would then converge to a finite value, contradicting Eq.~\eqref{eq:thm1-limiting-set-logit-divergence}. Therefore
\begin{equation}
\label{eq:thm1-cumulative-value-gap}
\sum_{b=0}^{\infty}(J_\infty-J(\pi_{b,0}))=+\infty.
\end{equation}
We now use the increasing logits of actions with higher rewards and the decreasing logits of actions with lower rewards to derive a contradiction. Note that we have
\begin{equation*}
\begin{aligned}
0\le J_\infty-J(\pi_{b,0})
&=\sum_{a\in \mathcal{M}_{-}}(J_\infty-\mu(a))\pi_{b,0}(a)
-\sum_{a\in \mathcal{M}_{+}}(\mu(a)-J_\infty)\pi_{b,0}(a)\\
&\le\sum_{a\in \mathcal{M}_{-}}(J_\infty-\mu(a))\pi_{b,0}(a)
\end{aligned}
\end{equation*}
Together with Eq.~\eqref{eq:thm1-cumulative-value-gap}, this shows that $\mathcal{M}_{-}\ne\varnothing$ and that there exists $\ell\in \mathcal{M}_{-}$ such that
\begin{equation*}
\sum_{b=0}^{\infty}\pi_{b,0}(\ell)=+\infty.
\end{equation*}
Summing the inequality in Eq.~\eqref{eq:thm1-logit-direction-bounds} for action $\ell$ gives $\theta_{b,0}(\ell)\to-\infty$. Since $\theta_{b,0}(1)$ is bounded below,
\begin{equation*}
\frac{\pi_{b,0}(1)}{\pi_{b,0}(\ell)}
=\exp\bigl(\theta_{b,0}(1)-\theta_{b,0}(\ell)\bigr)
\longrightarrow+\infty.
\end{equation*}
Thus, for all sufficiently large $b$, we have $\pi_{b,0}(1)\ge\pi_{b,0}(\ell)$, and hence
\begin{equation*}
\sum_{b=0}^{\infty}\pi_{b,0}(1)=+\infty.
\end{equation*}
Applying Eq.~\eqref{eq:thm1-logit-direction-bounds} again gives $\theta_{b,0}(1)\to+\infty$. For every $a\in \mathcal{M}_{-}$, $\theta_{b,0}(a)$ is nonincreasing from $B_0$ onward and is therefore bounded above. Hence
\nopagebreak[4]
\begin{equation*}
\frac{\pi_{b,0}(a)}{\pi_{b,0}(1)}
=\exp\bigl(\theta_{b,0}(a)-\theta_{b,0}(1)\bigr)
\longrightarrow0,\qquad a\in \mathcal{M}_{-}.
\end{equation*}
Consequently,
\begin{equation*}
\begin{aligned}
J(\pi_{b,0})-J_\infty
&=\sum_{a\in \mathcal{M}_{+}}(\mu(a)-J_\infty)\pi_{b,0}(a)
-\sum_{a\in \mathcal{M}_{-}}(J_\infty-\mu(a))\pi_{b,0}(a)\\
&\ge\pi_{b,0}(1)\left[
\mu(1)-J_\infty
-\sum_{a\in \mathcal{M}_{-}}(J_\infty-\mu(a))
\frac{\pi_{b,0}(a)}{\pi_{b,0}(1)}\right]>0
\end{aligned}
\end{equation*}
This holds for all sufficiently large $b$, contradicting $J(\pi_{b,0})\le J_\infty$. Hence $J_\infty=\mu(1)$. Since action $1$ is uniquely optimal, $\mathcal{M}=\{1\}$, and Eq.~\eqref{eq:thm1-limiting-set-mass} gives
\begin{equation*}
\pi_{b,0}\longrightarrow e_1,\qquad J(\pi_{b,0})\longrightarrow\mu(1).
\end{equation*}

Since $J(\pi_{b,s}) \geq J(\pi_{b,0})$, we have
\begin{equation*}
    0 \leq \mumax - J(\pi_{b,s}) \leq \mumax - J(\pi_{b,0}) \rightarrow 0.
\end{equation*}
And we have the lower bound
\begin{equation*}
    \mumax - J(\pi_{b,s}) \geq \Delta(1-\pi_{b,s}(1)).
\end{equation*}
Therefore
\begin{equation*}
    \max_{0\leq s \leq S}\| \pi_{b,s} - e_1 \|_1 \leq \frac{2(\mumax - J(\pi_{b,0}))}{\Delta} \rightarrow 0.
\end{equation*}
This completes the proof.

\section{Proofs for Section~\ref{subsec:theory_convergence_rates}}
\label{app:conv-rate-lemma}

\subsection{Proof of Lemma~\ref{lem:inner-loop-progress}}
\label{app:inner loop-progress}

\paragraph{Constants.}
\label{app:rate-constants}
The constants in Lemma~\ref{lem:inner-loop-progress} are
\begin{equation*}
 A=4+6\eta \mumax,\qquad
 \rho=\frac{\mumin\Delta}{\mumax^2}\exp\!\left(-\frac{\mumax^2}{\mumin\Delta}\right).
\end{equation*}
\begin{equation*}
 \lambda\big(\pi_{b,0}(1)\big)=\left(1-\frac{\eta \mumax}{4}\right)
 \frac{(\pi_{b,0}(1))^2}{8\sqrt2}\log\!\left(1+\frac{\Delta}{2\mumax}\right).
\end{equation*}
Here $0<\rho<1$ and $0<\lambda(\pi_{b,0}(1))<1$.

Focusing on the $b$-th outer iteration, we write
\begin{align*}
q \coloneqq \pi_{b,0}, \quad \ir \coloneqq \rg(q)>0, \quad p_k \coloneqq \pi_{b,k}, \quad g_k \coloneqq g_{b,k}
\end{align*}
for notational simplicity. Finite logits and the unique optimum ensure $\ir>0$. We keep the inner loop index on logits only when needed.

For this fixed $q$, Eq.~\eqref{eq:surrogate-ascent} ensures that
$L(p_{k+1};q)\geq L(p_k;q)$ for every $0\leq k<S$. By Lemma \ref{lem:grad order},
\begin{equation}
 \|g_k\|_2\leq\|g_0\|_2\leq\|g_0\|_1
 =\sum_aq(a)|\ir-(\mumax-\mu(a))|
 \leq\sum_aq(a)(\ir+\mumax-\mu(a))=2\ir.
 \label{eq:app-gradient-gap}
\end{equation}

\paragraph{Lower bound.}
Let
\begin{equation*}
 r_k=\max_a(\lo_{b,k}(a)-\lo_{b,0}(a))-\min_a(\lo_{b,k}(a)-\lo_{b,0}(a)).
\end{equation*}
Note that
\begin{equation}
 \frac{p_k(a)}{q(a)}
 =\frac{\exp(\lo_{b,k}(a)-\lo_{b,0}(a))}
 {\sum_j q(j)\exp(\lo_{b,k}(j)-\lo_{b,0}(j))}.
 \label{eq:app-softmax-ratio}
\end{equation}

Eq.~\eqref{eq:app-softmax-ratio} gives $e^{-r_k}\leq p_k(a)/q(a)\leq e^{r_k}$. Then we have
\begin{equation}
 p_k(1)\leq\frac{q(1)e^{r_k}}{1-q(1)+q(1)e^{r_k}},
 \qquad
 p_k(1)-q(1)\leq q(1)(1-q(1))(e^{r_k}-1).
 \label{eq:app-optimal-probability-change}
\end{equation}
Also, $\ir=\sum_{a>1}(\mumax-\mu(a))q(a)\geq\Delta(1-q(1))$.
For any $1\leq s\leq S$, choose
\begin{equation*}
 N_s=\min\!\left\{s,
 \left\lfloor\frac{\log(1+\Delta/(2\mumax))}{2\sqrt2\eta\ir}\right\rfloor+1\right\}.
\end{equation*}
By Eq.~\eqref{eq:app-gradient-gap}, for every $k<N_s$,
\begin{equation*}
 r_k\leq\sqrt2\eta\sum_{j<k}\|g_j\|_2
 \leq2\sqrt2\eta k\ir
 \leq\log\!\left(1+\frac{\Delta}{2\mumax}\right).
\end{equation*}
Substituting into Eq.~\eqref{eq:app-optimal-probability-change} yields
\begin{equation*}
 p_k(1)-q(1)\leq\frac{q(1)(1-q(1))\Delta}{2\mumax}
 \leq\frac{q(1)\ir}{2\mumax}.
\end{equation*}
Since $0<J(\pi_{b,0})\leq \mumax$, this implies
\begin{equation*}
 g_k(1)=q(1)\ir-J(\pi_{b,0})(p_k(1)-q(1))
 \geq\frac12q(1)\ir
\end{equation*}
Sum Eq.~\eqref{eq:surrogate-ascent} over $k=0,\ldots,N_s-1$ and use
monotonicity between steps $N_s$ and $s$ to obtain
\begin{align*}
 L(p_s;q)-L(q;q)
 &\geq L(p_{N_s};q)-L(q;q)\nonumber\\
 &\geq\left(1-\frac{\eta \mumax}{4}\right)
 \frac{q(1)^2\eta N_s\ir^2}{4}\nonumber\\
 &\geq\left(1-\frac{\eta \mumax}{4}\right)
 \frac{q(1)^2\log(1+\Delta/(2\mumax))}{8\sqrt2}
 \min\{\eta s\ir^2,\ir\}.
\end{align*}
For the last inequality, set
$\alpha=\log(1+\Delta/(2\mumax))/(2\sqrt2)\in(0,1)$.
Then $N_s\geq\min\{s,\alpha/(\eta\ir)\}$, so
$\eta N_s\ir^2\geq\min\{\eta s\ir^2,\alpha\ir\}
\geq \alpha\min\{\eta s\ir^2,\ir\}$.

\paragraph{Upper bound.}
We have
\begin{equation*}
 \nabla_\lo J(\pi_\lo)(a)=\pi_\lo(a)(\mu(a)-J(\pi_\lo)),\qquad
 \|\nabla_\lo J(\pi_\lo)\|_2\leq\|\nabla_\lo J(\pi_\lo)\|_1\leq2\rg(\pi_\lo).
\end{equation*}
Equations~\eqref{eq:surrogate-ascent} and~\eqref{eq:thm1-surrogate-value-bound} imply $\rg(p_k)\leq\ir$, hence $\|\nabla_\lo J(p_k)\|_2\leq2\ir$.
Differentiating once more gives
\begin{equation*}
 \nabla_\lo^2J(\pi_\lo)=\diag(\nabla_\lo J(\pi_\lo))-\nabla_\lo J(\pi_\lo)\pi_\lo^\top-\pi_\lo \nabla_\lo J(\pi_\lo)^\top.
\end{equation*}
Because $|\mu(a)-J(\pi_\lo)|\leq \mumax$, we have $\|\nabla_\lo J(\pi_\lo)\|_2\leq\|\nabla_\lo J(\pi_\lo)\|_1\leq \mumax$, $\|\diag(\nabla_\lo J(\pi_\lo))\|_2\leq \mumax$, and $\|\pi_\lo\|_2\leq1$. Therefore $\|\nabla_\lo^2 J(\pi_\lo)\|_2\leq3\mumax$.
From Taylor's theorem and Eq.~\eqref{eq:app-gradient-gap} we have
\begin{align*}
 J(p_{k+1})-J(p_k)
 &\leq\eta \nabla_\lo J(p_k)^\top g_k+\frac{3\mumax\eta^2}{2}\|g_k\|_2^2\nonumber\\
 &\leq4\eta\ir^2+6\mumax\eta^2\ir^2
 =(4+6\eta \mumax)\eta\ir^2.
\end{align*}
Summing over $k<s$ and using $0\leq J(p_s)-J(\pi_{b,0})\leq\ir$ proves the upper bound in Eq.~\eqref{eq:inner loop-progress}. The middle inequality is Eq.~\eqref{eq:thm1-surrogate-value-bound}.

\paragraph{A lower bound on the suboptimality gap.}
We prove a bound for any positive policy $p$ satisfying $L(p;q)\geq L(q;q)$, and then apply it to $p_s$.
The gap and reward bounds imply
\begin{equation*}
 \Delta\sum_{a>1}q(a)\leq\ir\leq \mumax\sum_{a>1}q(a),
 \qquad H_q:=\sum_{a>1}q(a)\mu(a)\geq \mumin\sum_{a>1}q(a)\geq\frac{\mumin\ir}{\mumax}>0.
\end{equation*}
The optimal action contributes at most
\begin{equation*}
 \mumax q(1)\log\frac{p(1)}{q(1)}
 \leq \mumax(p(1)-q(1))\leq \mumax(1-q(1))\leq\frac{\mumax\ir}{\Delta}.
\end{equation*}
Jensen's inequality for the suboptimal terms gives
\begin{align*}
 \sum_{a>1}q(a)\mu(a)\log\frac{p(a)}{q(a)}
 &=H_q\sum_{a>1}\frac{q(a)\mu(a)}{H_q}\log\frac{p(a)}{q(a)}\nonumber\\
 &\leq H_q\log\left(\sum_{a>1}\frac{q(a)\mu(a)}{H_q}\frac{p(a)}{q(a)}\right)
 =H_q\log\frac{\sum_{a>1}\mu(a) p(a)}{H_q}.
\end{align*}
Since $\rg(p)\geq\Delta\sum_{a>1}p(a)$, we have
\begin{equation*}
 \frac{\sum_{a>1}\mu(a) p(a)}{H_q}
 \leq\frac{\mumax \rg(p)}{\Delta H_q}
 \leq\frac{\mumax^2}{\mumin\Delta}\frac{\rg(p)}{\ir}.
\end{equation*}
Consequently,
\begin{equation*}
 0\leq L(p;q)-L(q;q)
 \leq\frac{\mumax\ir}{\Delta}
 +H_q\log\!\left(\frac{\mumax^2}{\mumin\Delta}\frac{\rg(p)}{\ir}\right).
\end{equation*}
Rearranging and using $H_q\geq \mumin\ir/\mumax$ gives
\begin{equation*}
 \log\!\left(\frac{\mumax^2}{\mumin\Delta}\frac{\rg(p)}{\ir}\right)
 \geq-\frac{\mumax\ir}{\Delta H_q}\geq-\frac{\mumax^2}{\mumin\Delta},
 \qquad \rg(p)\geq\rho\ir.
\end{equation*}
Taking $p=p_s$ and combining with $\rg(p_s)\leq\ir$ proves Eq.~\eqref{eq:inner loop-residual} in the lemma.

\subsection{Proofs of Theorems~\ref{thm:rate-upper} and~\ref{thm:rate-lower}}
\label{app:rate-profile}

\begin{lemma}
\label{lem:app-reciprocal-bounds}
Under the assumptions of Lemma~\ref{lem:inner-loop-progress},
$p_{\min}:=\inf_{b\geq0}\pi_{b,0}(1)>0$.
Set $\lambda_*=\lambda(p_{\min})$.
For every $b\geq0$ and $0\leq s\leq S$,
\begin{align}
 \frac1{\rg(\pi_{b,s})}-\frac1{\ir_b}
 &\geq\lambda_*\min\left\{\eta s,\frac1{\ir_b}\right\},
 \label{eq:app-reciprocal-lower}\\
 \frac1{\rg(\pi_{b,s})}-\frac1{\ir_b}
 &\leq\min\left\{\frac{A}{\rho}\eta s,
 \frac{1-\rho}{\rho}\frac1{\ir_b}\right\}.
 \label{eq:app-reciprocal-upper}
\end{align}
\end{lemma}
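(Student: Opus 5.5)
The plan is to prove the three claims in order: positivity of $p_{\min}$, then the reciprocal lower bound Eq.~\eqref{eq:app-reciprocal-lower}, then the reciprocal upper bound Eq.~\eqref{eq:app-reciprocal-upper}.

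\emph{Positivity of $p_{\min}$.} Theorem~\ref{thm:global} gives $\pi_{b,0}(1)\to1$, so there is $b_0$ with $\pi_{b,0}(1)\ge1/2$ for all $b\ge b_0$. For the finitely many $b<b_0$, each $\pi_{b,0}(1)$ is strictly positive because logits stay finite after finitely many updates. Hence the infimum is a minimum of finitely many positive numbers and $1/2$, so it is positive. Since $\lambda$ is increasing in its argument (it is proportional to its square), we get $\lambda(\pi_{b,0}(1))\ge\lambda_*$ for every $b$.

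\emph{Lower bound.} The case $s=0$ is trivial. For $s\ge1$, Lemma~\ref{lem:inner-loop-progress} gives
\[
\ir_b-\rg(\pi_{b,s})=J(\pi_{b,s})-J(\pi_{b,0})\ge\lambda_*\min\{\eta s\ir_b^2,\ir_b\}.
\]
Dividing by $\ir_b\rg(\pi_{b,s})>0$ turns the left side into $1/\rg(\pi_{b,s})-1/\ir_b$. The right side becomes $\lambda_*\min\{\eta s\ir_b,1\}/\rg(\pi_{b,s})$. Because $\rg(\pi_{b,s})\le\ir_b$ by Eq.~\eqref{eq:inner loop-residual}, this is at least $\lambda_*\min\{\eta s,1/\ir_b\}$.

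\emph{Upper bound.} Again use $\ir_b-\rg(\pi_{b,s})\le A\min\{\eta s\ir_b^2,\ir_b\}$ and divide by $\ir_b\rg(\pi_{b,s})$. This time apply the other side of Eq.~\eqref{eq:inner loop-residual}, $\rg(\pi_{b,s})\ge\rho\ir_b$, to get
\[
\frac1{\rg(\pi_{b,s})}-\frac1{\ir_b}\le\frac{A\eta s}{\rho}.
\]
For the second branch, Eq.~\eqref{eq:inner loop-residual} gives directly
\[
\frac1{\rg(\pi_{b,s})}-\frac1{\ir_b}\le\frac1{\rho\ir_b}-\frac1{\ir_b}=\frac{1-\rho}{\rho}\,\frac1{\ir_b}.
\]
Taking the smaller of the two bounds gives Eq.~\eqref{eq:app-reciprocal-upper}.

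The only step that is not purely algebraic is the positivity of $p_{\min}$. It rests entirely on Theorem~\ref{thm:global} together with finiteness of the logits along the trajectory. The remaining steps are direct manipulations of Lemma~\ref{lem:inner-loop-progress}, where the one point needing care is to divide by $\rg(\pi_{b,s})$ using the appropriate direction of Eq.~\eqref{eq:inner loop-residual} in each bound.
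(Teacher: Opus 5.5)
Your proposal is correct and follows essentially the same route as the paper. You divide the two-sided progress bound of Lemma~\ref{lem:inner-loop-progress} by $\ir_b\,\rg(\pi_{b,s})$. For the lower bound you use $\rg(\pi_{b,s})\le\ir_b$, and for both branches of the upper bound you use $\rg(\pi_{b,s})\ge\rho\,\ir_b$; the paper phrases the upper bound through $\xi=(\ir_b-\rg(\pi_{b,s}))/\ir_b$, which is the same computation. Your explicit justification of $p_{\min}>0$ from Theorem~\ref{thm:global}, and of $\lambda(\pi_{b,0}(1))\ge\lambda_*$ by monotonicity of $\lambda$, spells out details the paper only asserts.
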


\begin{proof}
Using Lemma~\ref{lem:inner-loop-progress} and $\rg(\pi_{b,s})\leq\ir_b$ give
\begin{align*}
 \frac1{\rg(\pi_{b,s})}-\frac1{\ir_b}
 &=\frac{J(\pi_{b,s})-J(\pi_{b,0})}
 {\rg(\pi_{b,s})\ir_b}
 \geq\lambda_*\min\left\{\eta s,\frac1{\ir_b}\right\}.
\end{align*}
For the upper bound, let
$\xi=(\ir_b-\rg(\pi_{b,s}))/\ir_b$.
Then
\begin{equation*}
 0\leq\xi\leq A\eta s\ir_b,\qquad \xi\leq1-\rho,\qquad
 \frac1{\rg(\pi_{b,s})}-\frac1{\ir_b}
 =\frac{\xi}{(1-\xi)\ir_b}.
\end{equation*}
Bounding the last expression with these two inequalities proves
Eq.~\eqref{eq:app-reciprocal-upper}.
At $s=0$, both sides are zero.
\end{proof}

For $x,y>0$, define the comparison sequence
\begin{equation}
 a^{x,y}_{0,0}=\ir_0^{-1},\qquad
 a^{x,y}_{b,s}=a^{x,y}_{b,0}
 +\min\{x\eta s,ya^{x,y}_{b,0}\},\qquad
 a^{x,y}_{b+1,0}=a^{x,y}_{b,S}.
 \label{eq:app-comparison-recursion}
\end{equation}
Set
\begin{equation}
 b_{x,y}=
 \begin{cases}
 0,&\eta S\ir_0\leq y/x,\\[2pt]
 \displaystyle\left\lceil
 \frac{\log(x\eta S\ir_0/y)}{\log(1+y)}
 \right\rceil,&\eta S\ir_0>y/x.
 \end{cases}
 \label{eq:app-transition-index}
\end{equation}
Define the following function:
\begin{equation}
 \mathcal C_{x,y}(t,S,\ir_0)=
 \begin{cases}
 \displaystyle\ir_0^{-1}+x\eta t,
 &\eta S\ir_0\leq y/x,\\[3pt]
 \displaystyle\frac{(1+y)^b}{\ir_0}
 +\min\left\{x\eta s,\frac{y(1+y)^b}{\ir_0}\right\},
 &\eta S\ir_0>y/x,\ b<b_{x,y},\\[3pt]
 \displaystyle\frac{(1+y)^{b_{x,y}}}{\ir_0}
 +x\eta(t-Sb_{x,y}),
 &\eta S\ir_0>y/x,\ b\geq b_{x,y}.
 \end{cases}
 \label{eq:app-profile-closed}
\end{equation}

Then we have:
\begin{lemma}
\label{lem:app-scalar-comparison}
The map $a\mapsto a+\min\{x\eta s,ya\}$ is increasing for every $s\geq0$. For $t=bS+s$ with $0\leq s<S$, we have
$a^{x,y}_{b,s} = \mathcal C_{x,y}(t,S,\ir_0)$.
\end{lemma}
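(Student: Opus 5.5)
The plan is to verify the two claims separately: first monotonicity of the one-step map, then the closed form by induction on the outer index $b$.

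\emph{Monotonicity.} For fixed $s\ge0$ and $x,y>0$, the map $f_s(a)=a+\min\{x\eta s,ya\}$ is the sum of the strictly increasing identity and a minimum of two nondecreasing functions of $a$. Hence it is (strictly) increasing. It is piecewise linear with slope $1+y$ for $a<x\eta s/y$ and slope $1$ afterwards.

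\emph{Closed form.} Write $a_b:=a^{x,y}_{b,0}$, so that $a_{b+1}=a_b+\min\{x\eta S,ya_b\}$ with $a_0=\ir_0^{-1}$.

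The first step is the key observation. Because the sequence $(a_b)$ is nondecreasing, once $ya_b\ge x\eta S$ holds at some $b$, it holds for all later $b$. Before that index the recursion is purely geometric, $a_b=(1+y)^b\ir_0^{-1}$. After it the recursion is purely arithmetic, $a_{b+1}=a_b+x\eta S$.

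The second step is to identify the switching index. The condition $y(1+y)^b\ir_0^{-1}\ge x\eta S$ is equivalent to $(1+y)^b\ge x\eta S\ir_0/y$. If $\eta S\ir_0\le y/x$, this already holds at $b=0$, so $b_{x,y}=0$. Otherwise the smallest such $b$ is $\lceil \log(x\eta S\ir_0/y)/\log(1+y)\rceil=b_{x,y}$.

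The third step handles the case $b_{x,y}=0$. Here $a_b=\ir_0^{-1}+x\eta Sb$ by induction. Within stage $b$, for $s<S$, we have $ya_b\ge x\eta S\ge x\eta s$. So the minimum equals $x\eta s$, and $a^{x,y}_{b,s}=\ir_0^{-1}+x\eta(bS+s)=\ir_0^{-1}+x\eta t$.

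The fourth step handles the case $b_{x,y}>0$, split by whether $b<b_{x,y}$.
\begin{itemize}
\item For $b<b_{x,y}$, we have $a_b=(1+y)^b\ir_0^{-1}$. Hence $a^{x,y}_{b,s}=(1+y)^b\ir_0^{-1}+\min\{x\eta s,y(1+y)^b\ir_0^{-1}\}$, which matches the second line of $\mathcal C_{x,y}$.
\item For $b\ge b_{x,y}$, induction gives $a_b=(1+y)^{b_{x,y}}\ir_0^{-1}+x\eta S(b-b_{x,y})$. The base case is $a_{b_{x,y}}=(1+y)^{b_{x,y}}\ir_0^{-1}$. This holds because every stage before $b_{x,y}$ has $ya_b<x\eta S$ and is therefore geometric. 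Since $ya_b\ge x\eta S>x\eta s$ for $s<S$, the inner minimum is $x\eta s$. This gives $(1+y)^{b_{x,y}}\ir_0^{-1}+x\eta(t-Sb_{x,y})$.
\end{itemize}

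\emph{Main obstacle.} The only delicate point is bookkeeping at the boundary: confirming that the ceiling formula gives exactly the first index at which the arithmetic regime begins. This relies on the strict inequality $ya_b<x\eta S$ for $b<b_{x,y}$, which follows from minimality of the ceiling. The rest is routine induction, and the monotonicity of $f_s$ plays no role in the closed-form computation itself. Monotonicity is recorded for later use, where it lets the comparison sequences bound $1/\ir_b$ via Lemma~\ref{lem:app-reciprocal-bounds}.
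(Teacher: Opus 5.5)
Your proposal is correct and follows essentially the same route as the paper. Monotonicity comes from the nondecreasing minimum. The stage-start sequence is geometric with ratio $1+y$ until it first reaches $x\eta S/y$, at the index $b_{x,y}$ given by the ceiling, and arithmetic with increment $x\eta S$ thereafter; in that regime the inner minimum collapses to $x\eta s$ because $s<S$.
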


\begin{proof}
The map is increasing because $\min\{x\eta s,ya\}$ is nondecreasing in $a$.
To compute the sequence, first consider the complete-stage recurrence
\begin{equation*}
a^{x,y}_{b+1,0}=
\begin{cases}
(1+y)a^{x,y}_{b,0},&a^{x,y}_{b,0}<x\eta S/y,\\
a^{x,y}_{b,0}+x\eta S,&a^{x,y}_{b,0}\geq x\eta S/y.
\end{cases}
\end{equation*}
Since the sequence is increasing, once it reaches $x\eta S/y$,
every subsequent stage adds the same amount $x\eta S$.

If $\eta S\ir_0\leq y/x$, then
$a^{x,y}_{0,0}=\ir_0^{-1}\geq x\eta S/y$.
The additive update therefore applies from the first stage, giving
$a^{x,y}_{b,0}=\ir_0^{-1}+bx\eta S$.

If $\eta S\ir_0>y/x$, each complete stage initially multiplies the
current value by $1+y$.
The first stage index at which the threshold is reached is the smallest
integer $b$ satisfying
\begin{equation*}
\frac{(1+y)^b}{\ir_0}\geq\frac{x\eta S}{y},
\end{equation*}
which is exactly $b_{x,y}$ in Eq.~\eqref{eq:app-transition-index}.
Consequently,
\begin{equation*}
a^{x,y}_{b,0}=
\begin{cases}
\displaystyle\frac{(1+y)^b}{\ir_0},&0\leq b<b_{x,y},\\[4pt]
\displaystyle\frac{(1+y)^{b_{x,y}}}{\ir_0}
+x\eta S(b-b_{x,y}),&b\geq b_{x,y}.
\end{cases}
\end{equation*}

Finally, the definition of $a^{x,y}_{b,s}$ adds
$\min\{x\eta s,ya^{x,y}_{b,0}\}$ to the stage-start value.
Before the threshold is reached, this minimum remains as written.
Once $a^{x,y}_{b,0}\geq x\eta S/y$, it equals $x\eta s$ because $s<S$.
Substituting the stage-start values above and using $t=bS+s$
gives the three branches of Eq.~\eqref{eq:app-profile-closed}.
\end{proof}

\begin{proof}[Proof of main theorems.]
By Lemmas~\ref{lem:app-reciprocal-bounds}
and~\ref{lem:app-scalar-comparison}, induction over complete stages,
starting from $1/\ir_0$, gives
\begin{equation*}
 a^{\lambda_*,\lambda_*}_{b,s}
 \leq\frac1{\rg(\pi_{b,s})}
 \leq a^{A/\rho,(1-\rho)/\rho}_{b,s}.
\end{equation*}
At the final incomplete stage, apply
Eqs.~\eqref{eq:app-reciprocal-lower}--\eqref{eq:app-reciprocal-upper}
with its actual number $s$ of completed steps.
Then we have the two-sided bound
\begin{equation*}
 \frac1{\mathcal C_{A/\rho,(1-\rho)/\rho}(t,S,\ir_0)}
 \leq\rg(\pi_{b,s})
 \leq\frac1{\mathcal C_{\lambda_*,\lambda_*}(t,S,\ir_0)}.
\end{equation*}

\paragraph{Upper bound.}
Set $c=\lambda_*$ and let $\btran=b_{c,c}$.
For $0\leq b\leq\btran$, Eq.~\eqref{eq:app-profile-closed} gives
\begin{equation*}
 \rg(\pi_{b,s})\leq\ir_0(1+c)^{-b},\qquad 0\leq s\leq S.
\end{equation*}
At $b=\btran$, it also gives $\ir_{\btran}\leq1/(\eta S)$.
By monotonicity, we have
$1/\ir_b\geq\eta S$ for every $b\geq\btran$.
For these stages, the minimum in Eq.~\eqref{eq:app-reciprocal-lower}
equals $\eta s$.
Summing over complete stages and then the final $s$ steps yields
\begin{equation*}
 \rg(\pi_{b,s})\leq
 \frac1{\ir_{\btran}^{-1}+c\eta(t-\btran S)},
 \qquad b\geq\btran,\quad t=bS+s.
\end{equation*}

\paragraph{Lower bound.}
For the upper comparison sequence, define
\begin{equation}
 \bar{b}^{\prime}
 = b_{A/\rho,(1-\rho)/\rho} =
 \begin{cases}
 0,&\eta S\ir_0\leq(1-\rho)/A,\\[2pt]
 \displaystyle\left\lceil
 \frac{\log(A\eta S\ir_0/(1-\rho))}{\log(1/\rho)}
 \right\rceil,&\eta S\ir_0>(1-\rho)/A.
 \end{cases}
 \label{eq:app-lower-transition}
\end{equation}
Here $1+y=1/\rho$. Eq.~\eqref{eq:app-profile-closed} gives the bound
\begin{equation*}
 \rg(\pi_{b,s})\geq
 \left[
 \frac{\rho^{-b}}{\ir_0}
 +\min\left\{\frac{A}{\rho}\eta s,
 \frac{(1-\rho)\rho^{-b}}{\rho\ir_0}\right\}
 \right]^{-1},\qquad b<\bar{b}^{\prime}.
\end{equation*}
By Eq.~\eqref{eq:inner loop-residual},
$\ir_{b+1}\geq\rho\ir_b$.
Induction therefore gives, for every $b\geq0$,
\begin{equation}
 \rg(\pi_{b,0})\geq\ir_0\rho^b,\qquad
 \rg(\pi_{b,S})\geq\ir_0\rho^{b+1}.
 \label{eq:app-lower-endpoints}
\end{equation}

For $b\geq \bar{b}^{\prime}$, Eq.~\eqref{eq:app-profile-closed} gives
\begin{equation*}
 \rg(\pi_{b,s})\geq
 \frac1{\ir_0^{-1}\rho^{-\bar{b}^{\prime}}+(A/\rho)\eta(t-\bar{b}^{\prime}S)}.
\end{equation*}
\end{proof}

\subsection{Explicit constants for the convergence rates}
\label{app:ordered-proof}

\begin{lemma}
\label{lem:app-ordering}
If $0<\eta\mumax\leq2$ and
$\pi_{0,0}(1)\geq\pi_{0,0}(a)$ for every $a>1$, then
$\pi_{b,s}(1)\geq\pi_{b,s}(a)$ for every $b\geq0$ and $0\leq s\leq S$.
Consequently, $p_{\min}\geq1/K$.
\end{lemma}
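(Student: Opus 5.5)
The plan is a double induction: outer over $b$, inner over $s$, with the hypothesis $\pi_{b,s}(1)\ge\pi_{b,s}(a)$ for all $a>1$. Since $\pi_{b+1,0}=\pi_{b,S}$, it suffices to show that within a single stage the property at the stage start $q:=\pi_{b,0}$, together with the property at step $s$, implies the property at step $s+1$. The base case is the assumption on $\pi_{0,0}$.

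Fix $a>1$ and write $d_s:=\theta_{b,s}(1)-\theta_{b,s}(a)=\log\bigl(\pi_{b,s}(1)/\pi_{b,s}(a)\bigr)\ge0$. Put $u:=\pi_{b,s}(1)$ and $v:=\pi_{b,s}(a)$, so $u\ge v$. From the update in Algorithm~\ref{alg:res_softmax},
\begin{equation*}
d_{s+1}=d_s+\eta\bigl(q(1)\mu(1)-q(a)\mu(a)\bigr)-\eta J(q)\,(u-v).
\end{equation*}
The middle term is nonnegative. Indeed, $q(1)\ge q(a)$ holds by the stage-start hypothesis, and $\mu(1)>\mu(a)\ge0$, so $q(1)\mu(1)\ge q(a)\mu(a)$. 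This step is exactly why the ordering must hold at the rollout distribution, which is what the outer induction provides.

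For the remaining terms, I will use the elementary inequality $\log x\ge 2(x-1)/(x+1)$ for $x\ge1$, applied at $x=u/v$. This gives $d_s\ge 2(u-v)/(u+v)\ge 2(u-v)$, because $u+v\le1$. Combined with $J(q)\le\mumax$ and $\eta\mumax\le2$, we get
\begin{equation*}
d_s-\eta J(q)(u-v)\ge(2-\eta J(q))(u-v)\ge0.
\end{equation*}
Hence $d_{s+1}\ge0$, which closes the inner induction, and the outer induction follows from $\pi_{b+1,0}=\pi_{b,S}$.

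The main obstacle is controlling the negative drift term $-\eta J(q)(u-v)$, which pulls the larger logit down. The log-mean inequality together with $u+v\le1$ is what absorbs it, and this is precisely where the constant $2$ in the step-size condition enters.

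Finally, since $\pi_{b,0}(1)$ is the largest of $K$ probabilities summing to $1$, we have $\pi_{b,0}(1)\ge1/K$ for every $b$, and therefore $p_{\min}\ge1/K$.
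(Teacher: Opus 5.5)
Your proof is correct and matches the paper's argument: the same double induction on the logit gap $\theta_{b,s}(1)-\theta_{b,s}(a)$. Nonnegativity of $q(1)\mu(1)-q(a)\mu(a)$ comes from the stage-start ordering, and your log-mean inequality with $u+v\le 1$ is exactly the paper's bound $\pi_{b,s}(1)-\pi_{b,s}(a)\le\tanh(d_s/2)\le d_s/2$ written in the variable $u/v=e^{d_s}$, so $\eta\mumax\le 2$ enters at the same point.
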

\begin{proof}
Assume inductively that $q=\pi_{b,0}$ satisfies
$q(1)\geq q(a)$ for all $a>1$. Fix $a>1$ and write $\xi_s=\lo_{b,s}(1)-\lo_{b,s}(a)$.
Then $\xi_0=\log(q(1)/q(a))\geq0$, and
$q(1)\mumax-q(a)\mu(a)>0$ because action $1$ is uniquely optimal and $q(a)>0$.
Whenever $\xi_s\geq0$,
\begin{equation*}
 0\leq\pi_{b,s}(1)-\pi_{b,s}(a)
 =\frac{e^{\lo_{b,s}(a)}(e^{\xi_s}-1)}
 {\sum_{j \in \Acal} e^{\lo_{b,s}(j)}}
 \leq\frac{e^{\xi_s}-1}{e^{\xi_s}+1}
 =\tanh(\xi_s/2)\leq\frac{\xi_s}{2}.
\end{equation*}
The update therefore gives
\begin{align*}
 \xi_{s+1}
 &=\xi_s+\eta\big[q(1)\mumax-q(a)\mu(a)
 -J(q)(\pi_{b,s}(1)-\pi_{b,s}(a))\big]\\
 &\geq\left(1-\frac{\eta J(q)}{2}\right)\xi_s
 +\eta(q(1)\mumax-q(a)\mu(a))\geq0.
\end{align*}
The last inequality uses $\eta J(q)\leq\eta\mumax\leq2$.
Induction over $s$ preserves the ordering within the stage.
Taking $s=S$ transfers it to the next stage, so induction over $b$
proves the ordering at every iterate. In particular,
$\pi_{b,0}(1)\geq1/K$ for every $b$.
\end{proof}

\paragraph{Explicit rate constants.}
By Lemma~\ref{lem:app-ordering}, $p_{\min}\geq1/K$. Set $C_2=\lambda(1/K)$ and $C_1=A/\rho$.
Since $A\geq4$ and $0<\rho<1$, $(1-\rho)/\rho\leq C_1$.
Equations~\eqref{eq:app-reciprocal-lower} and~\eqref{eq:app-reciprocal-upper}
now imply
\begin{equation*}
 C_2\min\left\{\eta s,\frac1{\ir_b}\right\}
 \leq\frac1{\rg(\pi_{b,s})}-\frac1{\ir_b}
 \leq C_1\min\left\{\eta s,\frac1{\ir_b}\right\}.
\end{equation*}
Applying the same comparison as in Appendix~\ref{app:rate-profile} gives
\begin{equation*}
 \frac1{\mathcal C_{C_1,C_1}(t,S,\ir_0)}
 \leq\rg(\pi_{b,s})
 \leq\frac1{\mathcal C_{C_2,C_2}(t,S,\ir_0)},
 \qquad t=bS+s.
\end{equation*}
For completeness, writing $b_\beta=b_{\beta,\beta}$, the symmetric comparison
function is
\begin{equation*}
 \mathcal C_{\beta,\beta}(t,S,\ir_0)=
 \begin{cases}
 \displaystyle\ir_0^{-1}+\beta\eta t,
 &\eta S\ir_0\leq1,\\[3pt]
 \displaystyle\frac{(1+\beta)^b}{\ir_0}
 +\beta\min\left\{\eta s,\frac{(1+\beta)^b}{\ir_0}\right\},
 &\eta S\ir_0>1,\ b<b_\beta,\\[3pt]
 \displaystyle\frac{(1+\beta)^{b_\beta}}{\ir_0}
 +\beta\eta(t-Sb_\beta),
 &\eta S\ir_0>1,\ b\geq b_\beta.
 \end{cases}
\end{equation*}
If $\eta S\ir_0>1$, then
$b_\beta=\lceil\log(\eta S\ir_0)/\log(1+\beta)\rceil$, otherwise $b_\beta=0$.

%% file: appendix/comparison_proofs.tex
\section{Proofs for Section~\ref{subsec:theory_benefits_of_off_policyness}}
\label{app:comparison}

Throughout this section, we consider the problem settings as described in Section~\ref{subsec:theory_benefits_of_off_policyness},
with an action space of size $K \ge 3$.
For a full-support policy $q$ and positive reward means $\mu$, write
\begin{equation*}
\widehat\pi_q(a):=\frac{q(a)\mu(a)}{J(q)},\qquad a \in \Acal.
\end{equation*}
When $q=\pi_{b,0}$, this is the reward-weighted rollout distribution $\pitildemub$ from Section~\ref{sec:main_results_high_level}. 
All constants denoted by $c,C>0$ below, whose values can change across different parts of our analysis, are independent of the initial optimal-action probability $x$ and parameters like $B, S, T$ that account for the total number of iterations in the \res algorithm.

\subsection{Part 1 of Theorem~\ref{thm:benefits_off_policyness}: lower bound for on-policy RE(1)}
\label{app:comparison-onpolicy}

For $S=1$, each stage consists of one gradient step, so $\pi_{b,0}$ is reached after exactly $b$ steps. We will show that reaching the target accuracy requires passing through a policy that assigns probability $1-O(x^{K-1})$ to action $2$. The following estimate then bounds how quickly the remaining
probability can grow.

\begin{lemma}
\label{lem:comparison-probability-growth}
Fix an action $j$ and let $m_b(j)=1-\pi_{b,0}(j)$.
Then we have
\begin{equation}
\frac1{m_{b+1}(j)}\geq\frac1{m_b(j)}-2\eta\mumax.
\label{eq:comparison-reciprocal-growth}
\end{equation}
Therefore, for integers $b,n\geq0$ such that
$2\eta\mumax n m_b(j)<1$,
\begin{equation}
\pi_{b+n,0}(a)\leq m_{b+n}(j)
\leq\frac{m_b(j)}{1-2\eta\mumax n m_b(j)},\qquad  \forall a\ne j.
\label{eq:comparison-probability-envelope}
\end{equation}
\end{lemma}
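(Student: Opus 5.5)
The plan is to follow the odds of the complementary mass $m_b(j)$ through a single \reone step. With $S=1$, each stage is one update, $\theta_{b+1,0}=\theta_{b,0}+\eta g_{b,0}$, where $g_{b,0}(a)=\pi_{b,0}(a)(\mu(a)-J(\pi_{b,0}))$. Write $\pi=\pi_{b,0}$, $m=m_b(j)$ and $o=m/(1-m)$. Then
\begin{equation*}
o=\sum_{a\ne j}\exp\bigl(\theta_{b,0}(a)-\theta_{b,0}(j)\bigr),
\end{equation*}
so it suffices to control how each logit difference $\theta(a)-\theta(j)$ with $a\ne j$ moves in one step.

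\textbf{Step 1: bound the gradient entries by $m$.} For $a\ne j$ we have $|g_{b,0}(a)|\le \pi(a)\mumax\le m\mumax$. For the coordinate $j$, write
\begin{equation*}
J(\pi)-\mu(j)=\sum_{a\ne j}\pi(a)\bigl(\mu(a)-\mu(j)\bigr).
\end{equation*}
Each term has $|\mu(a)-\mu(j)|\le\mumax$ because rewards are nonnegative, so $|g_{b,0}(j)|\le \pi(j)\mumax m\le\mumax m$. Hence every logit difference changes by at most $\eta|g(a)-g(j)|\le 2\eta\mumax m$. Summing the exponentials gives
\begin{equation*}
o_{b+1}\le o_b\,e^{2\eta\mumax m_b}.
\end{equation*}

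\textbf{Step 2: convert the odds bound to $1/m$.} Since $1/m=1+1/o$, Step 1 and $e^{-u}\ge 1-u$ give
\begin{equation*}
\frac1{m_{b+1}}\ge 1+\frac{e^{-2\eta\mumax m_b}(1-m_b)}{m_b}\ge 1+\frac{(1-2\eta\mumax m_b)(1-m_b)}{m_b}.
\end{equation*}
The right-hand side equals $1/m_b-2\eta\mumax(1-m_b)$, which is at least $1/m_b-2\eta\mumax$. This is Eq.~\eqref{eq:comparison-reciprocal-growth}.

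\textbf{Step 3: iterate.} Telescoping over $n$ steps gives
\begin{equation*}
\frac1{m_{b+n}(j)}\ge \frac1{m_b(j)}-2\eta\mumax n.
\end{equation*}
This lower bound is positive under the hypothesis $2\eta\mumax n m_b(j)<1$, so inverting yields
\begin{equation*}
m_{b+n}(j)\le \frac{m_b(j)}{1-2\eta\mumax n\,m_b(j)}.
\end{equation*}
Finally, $\pi_{b+n,0}(a)\le\sum_{a'\ne j}\pi_{b+n,0}(a')=m_{b+n}(j)$ for every $a\ne j$, which gives Eq.~\eqref{eq:comparison-probability-envelope}.

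The main obstacle is Step 1 for the coordinate $j$ itself. The naive bound $|g(j)|\le\mumax$ is too weak. The argument needs the cancellation that makes $J-\mu(j)$ itself of order $m$; with that in hand, the remaining steps are elementary.
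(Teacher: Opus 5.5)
Your proof is correct and follows the paper's argument essentially step for step. You track the odds $m_b(j)/(1-m_b(j))$ through one RE(1) update, bound each logit-difference increment by $2\eta\mumax m_b(j)$, convert to the additive reciprocal bound via $e^{-u}\ge 1-u$, and telescope. The only cosmetic difference is how $|g_{b,0}(j)|\le\mumax m_b(j)$ is obtained: the paper uses $\sum_a g_{b,0}(a)=0$, while you expand $J(\pi)-\mu(j)=\sum_{a\ne j}\pi(a)(\mu(a)-\mu(j))$, which is the same cancellation written differently.
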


\begin{proof}
For $a\ne j$, we have
$|g_{b,0}(a)|\leq\mumax\pi_{b,0}(a)\leq\mumax m_b(j)$.
Since $\sum_a g_{b,0}(a)=0$, also
$|g_{b,0}(j)|\leq\sum_{a\ne j}|g_{b,0}(a)|\leq\mumax m_b(j)$.
The update of each probability ratio gives
\begin{align*}
\frac{m_{b+1}(j)}{1-m_{b+1}(j)}
&=\sum_{a\ne j}\frac{\pi_{b,0}(a)}{\pi_{b,0}(j)}
\exp\!\bigl(\eta(g_{b,0}(a)-g_{b,0}(j))\bigr)\\
&\leq\frac{m_b(j)}{1-m_b(j)}\exp(2\eta\mumax m_b(j)).
\end{align*}
Policy probability remain positive at finite iterates for each action. Using $e^{-v}\geq1-v$, we obtain
\begin{align*}
\frac1{m_{b+1}(j)}-1
&\geq\left(\frac1{m_b(j)}-1\right)e^{-2\eta\mumax m_b(j)}\\
&\geq\left(\frac1{m_b(j)}-1\right)(1-2\eta\mumax m_b(j)).
\end{align*}
Hence
$1/m_{b+1}(j)\geq1/m_b(j)-2\eta\mumax(1-m_b(j))
\geq1/m_b(j)-2\eta\mumax$.
Summing this inequality over $n$ updates and taking reciprocals when
the resulting lower bound is positive proves
Eq.~\eqref{eq:comparison-probability-envelope}.
\end{proof}

\begin{proof}[Proof of the first part of Theorem~\ref{thm:benefits_off_policyness}]
For $S=1$, Eqs.~\eqref{eq:surrogate-ascent}
and~\eqref{eq:thm1-surrogate-value-bound} imply $J(\pi_{b+1,0})\geq J(\pi_{b,0})$.
We first use this monotonicity to obtain a uniformly negative advantage
for every action $a\geq3$.
Set
\begin{equation*}
\overline\mu:=\max_{3\leq a\leq K}\mu(a),\qquad
\kappa:=\frac{\mu(2)-\overline\mu}{4}>0.
\end{equation*}
Consider the separate bandit on actions $2,\ldots,K$, initialized at
$w$. It has a unique optimal action $2$, finite initial logits
$\log w(a)$, and $\eta\mu(2)<4$. Theorem~\ref{thm:global}, applied
with $S=1$, shows that its mean reward converges to $\mu(2)$.
Choose a fixed integer $b_0\geq1$ at which this mean reward is at least
$\overline\mu+2\kappa$.

The full on-policy update has the probability representation
\begin{equation}
\pi_{b+1,0}(a)
=\frac{\pi_{b,0}(a)\exp(\eta g_{b,0}(a))}
{\sum_{j=1}^K\pi_{b,0}(j)\exp(\eta g_{b,0}(j))}.
\label{eq:comparison-probability-update}
\end{equation}

When $x=0$, action~1 has zero probability and the update reduces
to that of the bandit on actions $2,\ldots,K$, initialized at $w$.
By the choice of $b_0$, this reduced bandit has mean reward at least
$\overline\mu+2\kappa$ after $b_0$ updates.
The update map is continuous in the policy, including at the
boundary of the simplex. Thus, for sufficiently small $x>0$,
the mean reward after the same $b_0$ updates differs from that of
the reduced bandit by less than $\kappa$, giving
$J(\pi_{b_0,0})\geq\overline\mu+\kappa$.
Since the mean reward is nondecreasing and
$\mu(a)\leq\overline\mu$ for every $a\geq3$, it follows that
\begin{equation}
J(\pi_{b,0})-\mu(a)\geq\kappa,
\qquad b\geq b_0,\quad 3\leq a\leq K.
\label{eq:comparison-inferior-advantages}
\end{equation}
Here $b_0$ is chosen from the reduced bandit and depends not on $x$.

\paragraph{Probability concentration on the suboptimal action.}
Define the cumulative increase of the optimal logit by
\begin{equation}
\ell_b=\eta\sum_{k=0}^{b-1}\pi_{k,0}(1)\rg(\pi_{k,0})
=\lo_{b,0}(1)-\lo_{0,0}(1).
\label{eq:comparison-optimal-logit-increase}
\end{equation}
This sequence is nondecreasing, and each one-step increment is in
$[0,\eta\mumax]$. The probability-ratio update and
$\sum_a g_{b,0}(a)=0$ give the exact identity
\begin{equation}
\prod_{a=2}^K\frac{\pi_{b,0}(a)}{\pi_{b,0}(1)}
=\frac{(1-x)^{K-1}\prod_{a=2}^K w(a)}{x^{K-1}}
e^{-K\ell_b}.
\label{eq:comparison-probability-product}
\end{equation}
Indeed, the one-step change in the logarithm of the left-hand side is
$\eta\sum_{a=2}^K(g_{b,0}(a)-g_{b,0}(1))=-K\eta g_{b,0}(1)$. 

Now we define $b_\epsilon=\Teps(1)$. Since
\begin{equation*}
\rg(\pi_{b_\epsilon,0})
\geq\Delta\bigl(1-\pi_{b_\epsilon,0}(1)\bigr),
\end{equation*}
the condition $\epsilon<\Delta/2$ implies
$\pi_{b_\epsilon,0}(1)>1/2$. Every factor on the left-hand side of
Eq.~\eqref{eq:comparison-probability-product} is then less than one,
so
\begin{equation*}
\ell_{b_\epsilon}
>\frac1K\log\frac{(1-x)^{K-1}\prod_{a=2}^K w(a)}{x^{K-1}}
=\frac{K-1}{K}\log(1/x)+O(1).
\end{equation*}
On the other hand, $\ell_{b_0}\leq b_0\eta\mumax$.
For sufficiently small $x$, it follows that $b_\epsilon>b_0$ and
$\ell_{b_\epsilon}-\ell_{b_0}>1$. Thus the first crossing index
\begin{equation*}
b_{\mathrm c}:=\min\{b\geq b_0:\ell_b-\ell_{b_0}\geq1\}
\end{equation*}
exists and satisfies $b_0<b_{\mathrm c}\leq b_\epsilon$.
The one-step bound on $\ell_b$ gives
\begin{equation}
1\leq\ell_{b_{\mathrm c}}-\ell_{b_0}\leq1+\eta\mumax,
\qquad \ell_{b_{\mathrm c}}\leq1+(b_0+1)\eta\mumax.
\label{eq:comparison-crossing-increase}
\end{equation}

For $3\leq a\leq K$ and $b\geq b_0$, Eq.~\eqref{eq:comparison-inferior-advantages}
and the nonnegative optimal advantage imply
$g_{b,0}(1)-g_{b,0}(a)\geq\kappa\pi_{b,0}(a)$.
Using $e^v-1\geq v$ in the exact ratio update, we obtain
\begin{align}
\frac{\pi_{b+1,0}(1)}{\pi_{b+1,0}(a)}-\frac{\pi_{b,0}(1)}{\pi_{b,0}(a)}
&=\frac{\pi_{b,0}(1)}{\pi_{b,0}(a)}\left[e^{\eta(g_{b,0}(1)-g_{b,0}(a))}-1\right]\nonumber\\
&\geq\eta\kappa\pi_{b,0}(1)
\geq\frac{\kappa}{\mumax}(\ell_{b+1}-\ell_b).
\label{eq:comparison-inferior-ratio-growth}
\end{align}
Here the last inequality follows from
$\ell_{b+1}-\ell_b=\eta\pi_{b,0}(1)\rg(\pi_{b,0})
\leq\eta\mumax\pi_{b,0}(1)$.
Summing from $b_0$ to $b_{\mathrm c}-1$ gives
\begin{equation}
\frac{\pi_{b_{\mathrm c},0}(a)}{\pi_{b_{\mathrm c},0}(1)}
\leq\frac{\mumax}{\kappa},\qquad 3\leq a\leq K.
\label{eq:comparison-relative-compression}
\end{equation}

By Eq.~\eqref{eq:comparison-crossing-increase},
$\ell_{b_{\mathrm c}}$ has an upper bound independent of $x$.
Hence, for sufficiently small $x$,
Eq.~\eqref{eq:comparison-probability-product} gives
\[
\prod_{a=2}^K
\frac{\pi_{b_{\mathrm c},0}(a)}
{\pi_{b_{\mathrm c},0}(1)}
\ge c_1 x^{-(K-1)},
\]
where $c_1>0$ is independent of $x$.
By Eq.~\eqref{eq:comparison-relative-compression},
the product of the factors for actions $3,\ldots,K$
is at most $(\mumax/\kappa)^{K-2}$.
The remaining factor
therefore satisfies
\[
\frac{\pi_{b_{\mathrm c},0}(2)}
{\pi_{b_{\mathrm c},0}(1)}
\ge
\frac{c_1 x^{-(K-1)}}{(\mumax/\kappa)^{K-2}}
=
c_2 x^{-(K-1)},
\]
where $c_2:=c_1(\kappa/\mumax)^{K-2}>0$
is also independent of $x$. Thus, we further have
\[
\pi_{b_{\mathrm c},0}(1)\le c_2^{-1}x^{K-1}.
\]
Defining $C_1:=c_2^{-1}\max\{1,\mumax/\kappa\}$, using Eq.~\eqref{eq:comparison-relative-compression} and summing over $a\ne2$, we obtain
\begin{equation}
\begin{aligned}
\pi_{b_{\mathrm c},0}(a)
&\le C_1x^{K-1}\quad(a\ne2),\\
m_{b_{\mathrm c}}(2)=\sum_{a\ne2}\pi_{b_{\mathrm c},0}(a)
&\le (K-1)C_1 x^{K-1},
\end{aligned}
\label{eq:comparison-compressed-policy}
\end{equation}

In particular, $\pi_{b_{\mathrm c},0}(1)<1/2$ for sufficiently small $x$. On the other hand, by the definition of $b_\epsilon$ and $\epsilon<\Delta/2$, we have $\pi_{b_\epsilon,0}(1)>1/2$. Since $b_{\mathrm c}\le b_\epsilon$, it follows that $b_{\mathrm c}<b_\epsilon$.

\paragraph{The steps required to reach the target.}
Note that
$m_{b_\epsilon}(2)\geq\pi_{b_\epsilon,0}(1)>1/2$.
Using Lemma~\ref{lem:comparison-probability-growth}, summing Eq.~\eqref{eq:comparison-reciprocal-growth} from $b_{\mathrm c}$ to
$b_\epsilon-1$ and using
Eq.~\eqref{eq:comparison-compressed-policy}, we obtain
\begin{equation*}
\Teps(1)=b_\epsilon\geq b_\epsilon-b_{\mathrm c}
\geq\frac{(m_{b_{\mathrm c}}(2))^{-1}-2}{2\eta\mumax}
\geq \frac{1}{4\eta\mumax (K-1)C_1}x^{-(K-1)}
\end{equation*}
for all sufficiently small $x$. This proves the first part of
Theorem~\ref{thm:benefits_off_policyness}.
\end{proof}

\subsection{Part 2 of Theorem~\ref{thm:benefits_off_policyness}: upper bound for off-policy RE(S)}
\label{app:comparison-offpolicy}

We first bound the KL divergence to a fixed reward-weighted rollout distribution. We then use this bound to show that a sufficiently long inner loop preserves part of the target's improvement in the optimal action log-odds.

\begin{lemma}
\label{lem:comparison-fitting}
For a stage beginning at $q=\pi_{b,0}$ and any integer $S\geq1$,
\begin{equation}
\DKL(\widehat\pi_q\Vert\pi_{b,S})
\leq
\frac{\|\log\mu\|_2^2}
{2\eta\mumin(1-\eta\mumax/4)S},
\label{eq:comparison-fitting-bound}
\end{equation}
where $\|\log\mu\|_2^2=\sum_{a=1}^K(\log\mu(a))^2$.
\end{lemma}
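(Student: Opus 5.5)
The plan is to treat each inner loop as plain gradient ascent on a smooth concave function whose maximizer is known in closed form. Then I would use a standard sublinear rate for the optimality gap and translate that gap back into the KL divergence.

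\textbf{Step 1 (identify the objective and its maximizer).} Fix $q=\pi_{b,0}$. By the computation in Section~\ref{sec:main_results_high_level},
\begin{equation*}
L(\pi_\theta;q)=J(q)\Bigl(\sum_a\widehat\pi_q(a)\log\widehat\pi_q(a)-\DKL(\widehat\pi_q\Vert\pi_\theta)\Bigr).
\end{equation*}
Hence, writing $L^\star:=\sup_\theta L(\pi_\theta;q)$,
\begin{equation*}
L^\star-L(\pi_{b,S};q)=J(q)\,\DKL(\widehat\pi_q\Vert\pi_{b,S}).
\end{equation*}
The supremum is attained at $\theta^\star=\theta_{b,0}+\log\mu$ (coordinatewise), because $\widehat\pi_q(a)\propto e^{\theta_{b,0}(a)}\mu(a)$. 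Therefore
\begin{equation*}
\|\theta_{b,0}-\theta^\star\|_2^2=\|\log\mu\|_2^2.
\end{equation*}
I would choose this representative of $\theta^\star$ and not use any other constant shift. Since $J(q)\ge\mumin$, it suffices to show
\begin{equation*}
L^\star-L(\pi_{b,S};q)\le\frac{\|\theta_{b,0}-\theta^\star\|_2^2}{2\eta(1-\eta\mumax/4)S}.
\end{equation*}

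\textbf{Step 2 (standard ingredients).} By Lemma~\ref{lem:lsmooth} the objective is concave and $L_g$-smooth with $L_g=J(q)/2$. The step-size condition gives $\eta L_g\le\eta\mumax/2<2$, and also
\begin{equation*}
1-\eta L_g/2\ge 1-\eta\mumax/4.
\end{equation*}
From Eq.~\eqref{eq:surrogate-ascent} we have the ascent inequality
\begin{equation*}
\delta_{k+1}\le\delta_k-\eta(1-\eta\mumax/4)\|g_{b,k}\|_2^2,\qquad \delta_k:=L^\star-L(\pi_{b,k};q).
\end{equation*}
Co-coercivity, used exactly as in Lemma~\ref{lem:grad order} but with $\theta^\star$ in place of $\theta_{b,s+1}$ and using $\nabla L(\theta^\star)=0$, gives that $R_k:=\|\theta_{b,k}-\theta^\star\|_2$ is nonincreasing. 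The step is
\begin{equation*}
R_{k+1}^2=R_k^2+2\eta\langle g_k,\theta_k-\theta^\star\rangle+\eta^2\|g_k\|^2\le R_k^2-\eta(4/J(q)-\eta)\|g_k\|^2\le R_k^2.
\end{equation*}
Concavity also gives $\delta_k\le\langle g_{b,k},\theta^\star-\theta_{b,k}\rangle\le R_0\|g_{b,k}\|_2$.

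\textbf{Step 3 (recursion).} Combining the two bounds in Step 2 gives
\begin{equation*}
\delta_{k+1}\le\delta_k-\frac{\eta(1-\eta\mumax/4)}{R_0^2}\,\delta_k^2.
\end{equation*}
Dividing by $\delta_k\delta_{k+1}$ and using $\delta_{k+1}\le\delta_k$ shows that $1/\delta_{k+1}-1/\delta_k\ge\eta(1-\eta\mumax/4)/R_0^2$. Summing over $k<S$ yields
\begin{equation*}
\delta_S\le\frac{R_0^2}{\eta(1-\eta\mumax/4)S}.
\end{equation*}

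\textbf{Main obstacle.} The expected difficulty is recovering the factor $2$ in the denominator of Eq.~\eqref{eq:comparison-fitting-bound} across the whole range $\eta\mumax<4$, where $\eta$ may exceed $1/L_g$. The crude recursion in Step 3 loses that factor. To recover it I would replace Step 3 by the sharper telescoping argument: add the distance identity $R_{k+1}^2\le R_k^2-2\eta\delta_k+\eta^2\|g_k\|^2$ to a suitable multiple of the ascent inequality, so that the $\|g_k\|^2$ terms cancel. This works because $\eta^2\le 2\eta\cdot\eta(1-\eta L_g/2)$ whenever $\eta L_g\le 1$. After the cancellation, summing and using monotonicity of $\delta_k$ gives $\delta_S\le R_0^2/(2\eta S)$, which is in fact stronger than needed in this regime. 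In the regime $1<\eta L_g<2$ I would use the weighted version, in which the coefficients depend on $(1-\eta L_g/2)$. Nesterov's Theorem~2.1.14-style bound gives a constant of order $1/(\eta(2-\eta L_g))$, which is of the same form. If the exact factor cannot be matched, only the constant in front of the $O(1/S)$ term changes, and that is harmless for the subsequent use of Lemma~\ref{lem:comparison-fitting}.
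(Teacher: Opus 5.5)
Your Steps~1 and~2 are correct and match the paper's setup: target logits $\theta^\star=\theta_{b,0}+\log\mu$, $\delta_k=J(q)\DKL(\widehat\pi_q\Vert\pi_{b,k})$, concavity, $J(q)/2$-smoothness, and co-coercivity against $\theta^\star$. The gap is in the constant. The recursion in Step~3 only yields $\delta_S\le R_0^2/(\eta(1-\eta\mumax/4)S)$, which is twice the bound you need. Your repair covers only $\eta J(q)\le 2$, where the standard $R_0^2/(2\eta S)$ bound does suffice.

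For $2<\eta J(q)<4$, which the hypothesis $\eta\mumax<4$ allows, you give no actual argument. The Nesterov-type bound you cite is $2R_0^2/(\eta(2-\eta J(q)/2)S)=R_0^2/(\eta(1-\eta J(q)/4)S)$. That is exactly the factor-2-weaker constant your Step~3 already produced. Your fallback that ``only the constant changes'' concedes that the inequality as stated is not proved. Downstream only $c_S$ would need doubling, but the lemma fixes the constant explicitly, and the paper calibrates $c_S$ to it.

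The missing step is to use co-coercivity in the opposite direction from how you used it in Step~2. Instead of replacing $\langle g_k,\theta_k-\theta^\star\rangle$ by $-(2/J(q))\|g_k\|_2^2$, keep the inner product and absorb the quadratic term using $\|g_k\|_2^2\le (J(q)/2)\langle g_k,\theta^\star-\theta_k\rangle$. This gives
\begin{equation*}
R_k^2-R_{k+1}^2=2\eta\langle g_k,\theta^\star-\theta_k\rangle-\eta^2\|g_k\|_2^2\ge 2\eta\Bigl(1-\frac{\eta J(q)}{4}\Bigr)\langle g_k,\theta^\star-\theta_k\rangle\ge 2\eta\Bigl(1-\frac{\eta J(q)}{4}\Bigr)\delta_k ,
\end{equation*}
which holds for all $\eta J(q)<4$ with no case split. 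Sum over $k<S$ and use $S\delta_S\le\sum_{k<S}\delta_k$ to get $\delta_S\le R_0^2/(2\eta(1-\eta J(q)/4)S)$. Then divide by $J(q)\ge\mumin$ and use $1-\eta J(q)/4\ge 1-\eta\mumax/4$. This is the paper's proof.

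Your own ``weighted'' idea also closes the gap once made explicit:
\begin{itemize}
\item Insert the ascent inequality in the form $\eta^2\|g_k\|_2^2\le\frac{\eta}{1-\eta J(q)/4}(\delta_k-\delta_{k+1})$ into the distance inequality $R_{k+1}^2\le R_k^2-2\eta\delta_k+\eta^2\|g_k\|_2^2$.
\item Sum over $k<S$ to get $2\eta\sum_{k<S}\delta_k\le R_0^2+\frac{\eta}{1-\eta J(q)/4}\,\delta_0$.
\item Bound $\delta_0\le (J(q)/4)R_0^2$ by smoothness at the maximizer.
\end{itemize}
The right-hand side then collapses to $R_0^2/(1-\eta J(q)/4)$, which gives the same constant.
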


\begin{proof}
The frozen target is represented by the logits
\begin{equation}
\lo^*:=\lo_{b,0}+\log\mu,
\qquad
\|\lo^*-\lo_{b,0}\|_2^2=\|\log\mu\|_2^2.
\label{eq:comparison-target-logits}
\end{equation}
Indeed, $\pi_{\lo^*}=\widehat\pi_q$, so $\lo^*$ maximizes
$L(\pi_\lo;q)$ and its gradient vanishes there.
Let $e_s=\lo^*-\lo_{b,s}$ and denote the surrogate gap by
\[
\delta_s:=L(\widehat\pi_q;q)-L(\pi_{b,s};q)
=J(\pi_{b,0})\DKL(\widehat\pi_q\Vert\pi_{b,s}).
\]
The surrogate is concave with a $J(\pi_{b,0})/2$-Lipschitz
gradient by Lemma~\ref{lem:lsmooth} and
Eq.~\eqref{eq:thm1-surrogate-hessian}. Co-coercivity therefore gives
$\|g_{b,s}\|_2^2\leq
(J(\pi_{b,0})/2)\langle e_s,g_{b,s}\rangle$.
Using $e_{s+1}=e_s-\eta g_{b,s}$ and concavity, we have
\begin{align*}
\|e_s\|_2^2-\|e_{s+1}\|_2^2
&=2\eta\langle e_s,g_{b,s}\rangle-\eta^2\|g_{b,s}\|_2^2\\
&\geq2\eta\left(1-\frac{\eta J(\pi_{b,0})}{4}\right)
\langle e_s,g_{b,s}\rangle\\
&\geq2\eta\left(1-\frac{\eta J(\pi_{b,0})}{4}\right)\delta_s.
\end{align*}
By Eq.~\eqref{eq:surrogate-ascent}, $\delta_s$ is nonincreasing.
Summing over the inner loop and using $\|e_S\|_2^2\geq0$ gives
\[
S\delta_S
\leq\sum_{s=0}^{S-1}\delta_s
\leq
\frac{\|\log\mu\|_2^2}
{2\eta(1-\eta J(\pi_{b,0})/4)}.
\]
Dividing by $J(\pi_{b,0})S$ and using
$\mumin\leq J(\pi_{b,0})\leq\mumax$ proves the claim.
\end{proof}

\begin{lemma}
\label{lem:comparison-binary}
Let $\widehat\pi,p$ be positive probability vectors,
$\alpha=\widehat\pi(1)$, and $0<v\leq1/2$.
Writing $\logit(u):=\log(u/(1-u))$, we have
\begin{equation}
\DKL(\widehat\pi\Vert p)
\leq\frac18\min\{\alpha,1-\alpha\}v^2
\quad\Longrightarrow\quad
\logit(p(1))\geq\logit(\alpha)-v.
\label{eq:comparison-binary-implication}
\end{equation}
\end{lemma}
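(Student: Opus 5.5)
The plan is to reduce the statement to a two-point (binary) inequality and then use a quadratic lower bound on binary KL near the point $\alpha$.

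\emph{Reduction to two points.} By the data-processing inequality for the coarsening $a\mapsto \mathbf 1\{a=1\}$, we have
\[
\DKL(\widehat\pi\Vert p)\ge d(\alpha\Vert\beta),
\]
where $\beta:=p(1)$ and $d(\alpha\Vert\beta)=\alpha\log\frac{\alpha}{\beta}+(1-\alpha)\log\frac{1-\alpha}{1-\beta}$ is the binary KL divergence. It therefore suffices to prove the contrapositive: if $\logit(\beta)<\logit(\alpha)-v$, then $d(\alpha\Vert\beta)>\frac18\min\{\alpha,1-\alpha\}v^2$.

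\emph{Monotonicity.} Parametrize by log-odds, writing $\beta=\sigma(u)$ with $\sigma$ the logistic function and $u_0=\logit(\alpha)$. The map $u\mapsto d(\alpha\Vert\sigma(u))$ is convex in $u$. Indeed, it equals
\[
-\alpha u+\log(1+e^u)+\text{const},
\]
whose second derivative is $\sigma(u)(1-\sigma(u))\ge0$. It is minimized at $u=u_0$ with value $0$, so it is decreasing on $(-\infty,u_0]$. Hence any $u<u_0-v$ gives a divergence at least its value at $u=u_0-v$, and it suffices to bound $d(\alpha\Vert\sigma(u_0-v))$ from below.

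\emph{Bound at $u_0-v$.} Taylor's theorem with integral remainder gives
\[
d(\alpha\Vert\sigma(u_0-v))=\int_{0}^{v}(v-t)\,\sigma'(u_0-t)\,dt .
\]
It then suffices to show $\sigma'(u_0-t)\ge\frac12\min\{\alpha,1-\alpha\}$ for $0\le t\le v\le\frac12$. The integral then yields at least $\frac14\min\{\alpha,1-\alpha\}v^2$, which exceeds the required $\frac18\min\{\alpha,1-\alpha\}v^2$, with a factor of $2$ to spare.

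For the lower bound on $\sigma'$, write $\sigma'(u)=\sigma(u)(1-\sigma(u))$. Shifting the log-odds by at most $t$ changes the odds by a factor in $[e^{-t},1]$, so $\sigma(u_0-t)\ge e^{-t}\alpha\ge e^{-1/2}\alpha$. Also $1-\sigma(u_0-t)\ge1-\alpha$. Therefore
\[
\sigma'(u_0-t)\ge e^{-1/2}\alpha(1-\alpha)\ge e^{-1/2}\cdot\tfrac12\min\{\alpha,1-\alpha\},
\]
using $\max\{\alpha,1-\alpha\}\ge\frac12$. Since $e^{-1/2}\cdot\frac12\approx0.30$, the integral is at least $0.15\,\min\{\alpha,1-\alpha\}v^2>\frac18\min\{\alpha,1-\alpha\}v^2$.

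The contrapositive shows that any $\beta$ with $\logit(\beta)<\logit(\alpha)-v$ has binary divergence exceeding the threshold, so the KL hypothesis forces $\logit(p(1))\ge\logit(\alpha)-v$.

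The main step requiring care is this uniform lower bound on the curvature $\sigma'$ along the segment, which is where the restriction $v\le\frac12$ enters. The rest is routine: the strict inequality handles the boundary case, and positivity of $p$ keeps $\logit(p(1))$ finite.
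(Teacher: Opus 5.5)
Your proof is correct and is essentially the paper's argument. The paper also reduces to the binary divergence, using the log-sum inequality, which is equivalent to your data-processing step. It writes that divergence as $f(t)=\alpha t+\log(1-\alpha+\alpha e^{-t})$ in the log-odds gap $t$, which is exactly your $d(\alpha\Vert\sigma(u_0-t))$. It then lower-bounds $f''(t)=\alpha(1-\alpha)e^{-t}/(1-\alpha+\alpha e^{-t})^2$ on $[0,v]$ using $v\le 1/2$, integrates twice, and handles gaps larger than $v$ by monotonicity.

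The only difference is cosmetic: the paper rounds $e^{-1/2}$ down to $1/2$ to land exactly on the constant $1/8$, while you keep $e^{-1/2}$ and get some slack. Your interim claim that it suffices to show $\sigma'\ge\frac12\min\{\alpha,1-\alpha\}$ is stronger than what you then prove, but the bound you actually establish is enough.
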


\begin{proof}
If $p(1)\geq\alpha$, the conclusion is immediate. Otherwise, set $d=\logit(\alpha)-\logit(p(1))>0$.
Combining all actions other than action~1 by the log-sum inequality,
we obtain
\[
\DKL(\widehat\pi\Vert p)
\geq\alpha\log\frac{\alpha}{p(1)}
 +(1-\alpha)\log\frac{1-\alpha}{1-p(1)}
=f(d),
\]
where $f(t):=\alpha t+\log(1-\alpha+\alpha e^{-t})$.
Here $f(0)=f'(0)=0$, and $f$ is strictly increasing for $t>0$.
For $0\leq t\leq v\leq1/2$,
\[
f''(t)=
\frac{\alpha(1-\alpha)e^{-t}}{(1-\alpha+\alpha e^{-t})^2}
\geq\frac{\alpha(1-\alpha)}2
\geq\frac14\min\{\alpha,1-\alpha\}.
\]
Integrating twice gives
$f(v)\geq\min\{\alpha,1-\alpha\}v^2/8$.
If $d>v$, then $f(d)>f(v)$, contradicting the assumed KL bound. Therefore we have $d\leq v$.
\end{proof}

\begin{proof}[Proof of the second part of Theorem~\ref{thm:benefits_off_policyness}]
Set $h:=1-\epsilon/(\mumax-\mumin)>1/2$.
When $\pi_{b,0}(1)\geq h$, we have
$\rg(\pi_{b,0})\leq\epsilon$. In the following, assume that $\pb$ is sufficiently small so that
\begin{equation}
0<\pb\leq\frac{\mumin(1-h)}{\mumax}
=\frac{\mumin\epsilon}{\mumax(\mumax-\mumin)}.
\label{eq:comparison-small-initialization}
\end{equation}
This is a fixed positive upper bound and implies $\pb<h$.

\paragraph{Progress in one stage.}
Consider a stage beginning at $q=\pi_{b,0}$ with $\pb\leq q(1)<h$,
and let $\alpha=\widehat\pi_q(1)$.
Both parts of the target distribution have mass at least $\pb$:
\[
\alpha=\frac{\mumax q(1)}{J(\pi_{b,0})}\geq q(1)\geq\pb,
\qquad
1-\alpha\geq
\frac{\mumin(1-q(1))}{\mumax}
>\frac{\mumin(1-h)}{\mumax}\geq\pb.
\]
The target reward-weighted rollout distribution increases the optimal action's log-odds by
\begin{align}
\logit(\alpha)-\logit(q(1))
&=\log\frac{\mumax(1-q(1))}{\sum_{a>1}q(a)\mu(a)}
\nonumber\\
&\geq\log\frac{\mumax}{\mu(2)}
\geq\frac{\Delta}{\mumax}.
\label{eq:comparison-exact-target-progress}
\end{align}
To preserve at least half of this lower bound after the inner loop updates,
choose a fixed coefficient
\begin{equation}
c_S=
\frac{16\mumax^2\|\log\mu\|_2^2}
{\eta\mumin(1-\eta\mumax/4)\Delta^2},
\qquad
S_\pb:=\left\lceil\frac{c_S}{\pb}\right\rceil.
\label{eq:comparison-sufficient-budgets}
\end{equation}
Use the same number of gradient steps, $S_\pb$, in every inner loop.
For $v:=\Delta/(2\mumax)\in(0,1/2)$,
Lemma~\ref{lem:comparison-fitting} gives
\[
\DKL (\widehat\pi_q\Vert\pi_{b,S_\pb})
\leq\frac{\pb\Delta^2}{32\mumax^2}
=\frac{\pb v^2}{8}
\leq\frac18\min\{\alpha,1-\alpha\}v^2.
\]
Lemma~\ref{lem:comparison-binary} and
Eq.~\eqref{eq:comparison-exact-target-progress} now yield
\begin{equation}
\logit(\pi_{b+1,0}(1))
\geq\logit(\alpha)-\frac{\Delta}{2\mumax}
\geq\logit(\pi_{b,0}(1))+\frac{\Delta}{2\mumax},
\label{eq:comparison-endpoint-progress}
\end{equation}
where $\pi_{b+1,0}=\pi_{b,S_\pb}$.

\paragraph{The number of stages.}
Let $b_*:=\inf\{b\geq0:\pi_{b,0}(1)\geq h\}$, with $b_*=\infty$ if the set is empty.
Until this threshold is reached, Eq.~\eqref{eq:comparison-endpoint-progress}
shows that the optimal action probability increases after each inner loop and stays at least $\pb$. Induction therefore gives
\begin{equation}\label{eq:optimal-progress}
\logit(\pi_{b,0}(1))
\geq\logit(\pb)+\frac{b\Delta}{2\mumax}
\end{equation}
for every integer $0\leq b\leq b_*$. If $b_*=\infty$, the bound holds for all $b\geq0$. Since Eq.~\eqref{eq:comparison-small-initialization}
implies $\pb\leq1-h$,
\[
\logit(h)-\logit(\pb)
=\log\frac{h(1-\pb)}{(1-h)\pb}
\leq2\log(1/\pb).
\]
Define
\begin{equation*}
B_\pb:=\left\lceil\frac{4\mumax}{\Delta}\log(1/\pb)\right\rceil.
\end{equation*}
We claim that $b_*\leq B_\pb$.
Suppose for contradiction that $b_*>B_\pb$. Then using Eq.~\eqref{eq:optimal-progress} at $b=B_\pb$ we have
\[
\begin{aligned}
\logit(\pi_{B_\pb,0}(1))
&\geq\logit(\pb)+\frac{B_\pb\Delta}{2\mumax}\\
&\geq\logit(\pb)+2\log(1/\pb)
\geq\logit(h).
\end{aligned}
\]
Since $\logit$ is strictly increasing, this implies $\pi_{B_\pb,0}(1)\geq h$, contradicting the definition of $b_*$. Hence we have $b_*\leq B_\pb$. At $b=b_*$,
\[
\rg(\pi_{b_*,0})
\leq(\mumax-\mumin)(1-\pi_{b_*,0}(1))
\leq\epsilon.
\]
Thus
\begin{equation}\label{eq:RES-upperbound}
\Teps(S_\pb)\leq b_*S_\pb\leq B_\pb S_\pb
=O\!\left(\pb^{-1}\log(1/\pb)\right).
\end{equation}
Here $c_S>0$ is fixed independently of $\pb$, so
$S_\pb=\Theta(\pb^{-1})$.
Since $K\geq3$, combining Eq.~\eqref{eq:RES-upperbound} with the first part of the theorem gives
\[
\frac{\Teps(S_\pb)}{\Teps(1)}
\leq C\pb^{K-2}\log(1/\pb)\longrightarrow0
\qquad\text{as }\pb\downarrow0.
\]
\end{proof}

%% file: main.bbl
\begin{thebibliography}{41}
\providecommand{\natexlab}[1]{#1}
\providecommand{\url}[1]{\texttt{#1}}
\expandafter\ifx\csname urlstyle\endcsname\relax
  \providecommand{\doi}[1]{doi: #1}\else
  \providecommand{\doi}{doi: \begingroup \urlstyle{rm}\Url}\fi

\bibitem[Agarwal et~al.(2021)Agarwal, Kakade, Lee, and Mahajan]{agarwal2021theory}
Alekh Agarwal, Sham~M. Kakade, Jason~D. Lee, and Gaurav Mahajan.
\newblock {On the Theory of Policy Gradient Methods: Optimality, Approximation, and Distribution Shift}.
\newblock \emph{Journal of Machine Learning Research}, 22\penalty0 (98):\penalty0 1--76, 2021.

\bibitem[Arnal et~al.(2026)Arnal, Narozniak, Cabannes, Tang, Kempe, and Munos]{arnal2025asymmetric}
Charles Arnal, Ga{\"e}tan Narozniak, Vivien Cabannes, Yunhao Tang, Julia Kempe, and Remi Munos.
\newblock Asymmetric reinforce for off-policy reinforcement learning: Balancing positive and negative rewards.
\newblock \emph{Advances in Neural Information Processing Systems}, 38:\penalty0 9640--9664, 2026.

\bibitem[Asad et~al.(2025)Asad, Harikandeh, Laradji, Roux, and Vaswani]{asad2025spma}
Reza Asad, Reza~Babanezhad Harikandeh, Issam~H. Laradji, Nicolas~Le Roux, and Sharan Vaswani.
\newblock Fast convergence of softmax policy mirror ascent.
\newblock In \emph{Proceedings of The 28th International Conference on Artificial Intelligence and Statistics}, volume 258 of \emph{Proceedings of Machine Learning Research}, pp.\  3943--3951. PMLR, 03--05 May 2025.

\bibitem[Dai et~al.(2026)Dai, Liu, Zhang, and Wen]{dai2026nonasymptoticglobalconvergenceppoclip}
Qiming Dai, Yin Liu, Junyu Zhang, and Zaiwen Wen.
\newblock Non-asymptotic global convergence of ppo-clip.
\newblock \emph{arXiv}, 2026.

\bibitem[DeepSeek-AI(2025)]{deepseek-r1}
DeepSeek-AI.
\newblock {DeepSeek-R1}: Incentivizing reasoning capability in {LLMs} via reinforcement learning.
\newblock \emph{arXiv}, 2025.

\bibitem[Dong et~al.(2023)Dong, Xiong, Goyal, Zhang, Chow, Pan, Diao, Zhang, Shum, and Zhang]{dong2023raft}
Hanze Dong, Wei Xiong, Deepanshu Goyal, Yihan Zhang, Winnie Chow, Rui Pan, Shizhe Diao, Jipeng Zhang, Kashun Shum, and Tong Zhang.
\newblock {RAFT: Reward rAnked FineTuning for Generative Foundation Model Alignment}.
\newblock \emph{Transactions on Machine Learning Research}, 2023.

\bibitem[Espeholt et~al.(2018)Espeholt, Soyer, Munos, Simonyan, Mnih, Ward, Doron, Firoiu, Harley, Dunning, Legg, and Kavukcuoglu]{espeholt2018impala}
Lasse Espeholt, Hubert Soyer, Remi Munos, Karen Simonyan, Vlad Mnih, Tom Ward, Yotam Doron, Vlad Firoiu, Tim Harley, Iain Dunning, Shane Legg, and Koray Kavukcuoglu.
\newblock {IMPALA}: Scalable distributed deep-{RL} with importance weighted actor-learner architectures.
\newblock In \emph{Proceedings of the 35th International Conference on Machine Learning}, volume~80 of \emph{Proceedings of Machine Learning Research}, pp.\  1407--1416. PMLR, 10--15 Jul 2018.

\bibitem[Ghosh et~al.(2020)Ghosh, Machado, and Le~Roux]{ghosh2020operator}
Dibya Ghosh, Marlos~C. Machado, and Nicolas Le~Roux.
\newblock {An Operator View of Policy Gradient Methods}.
\newblock In \emph{Advances in Neural Information Processing Systems}, volume~33, 2020.

\bibitem[Gulcehre et~al.(2023)Gulcehre, Paine, Srinivasan, Konyushkova, Weerts, Sharma, Siddhant, Ahern, Wang, Gu, Macherey, Doucet, Firat, and de~Freitas]{gulcehre2023reinforcedselftrainingrestlanguage}
Caglar Gulcehre, Tom~Le Paine, Srivatsan Srinivasan, Ksenia Konyushkova, Lotte Weerts, Abhishek Sharma, Aditya Siddhant, Alex Ahern, Miaosen Wang, Chenjie Gu, Wolfgang Macherey, Arnaud Doucet, Orhan Firat, and Nando de~Freitas.
\newblock Reinforced self-training (rest) for language modeling.
\newblock \emph{arXiv}, 2023.

\bibitem[Hu et~al.(2024)Hu, Wu, Zhu, Xianyu, Wang, Zhang, and Cao]{openrlhf}
Jian Hu, Xibin Wu, Zilin Zhu, Xianyu, Weixun Wang, Dehao Zhang, and Yu~Cao.
\newblock {OpenRLHF}: An easy-to-use, scalable and high-performance {RLHF} framework.
\newblock \emph{arXiv preprint arXiv:2405.11143}, 2024.

\bibitem[Kimi-Team(2025)]{kimiteam2025kimik15scalingreinforcement}
Kimi-Team.
\newblock Kimi k1.5: Scaling reinforcement learning with {LLMs}.
\newblock \emph{arXiv preprint arXiv:2501.12599}, 2025.

\bibitem[Kwon et~al.(2023)Kwon, Li, Zhuang, Sheng, Zheng, Yu, Gonzalez, Zhang, and Stoica]{kwon2023efficientmemorymanagementlarge}
Woosuk Kwon, Zhuohan Li, Siyuan Zhuang, Ying Sheng, Lianmin Zheng, Cody~Hao Yu, Joseph~E. Gonzalez, Hao Zhang, and Ion Stoica.
\newblock Efficient memory management for large language model serving with pagedattention.
\newblock \emph{arXiv}, 2023.

\bibitem[Laroche \& Tachet~des Combes(2021)Laroche and Tachet~des Combes]{laroche2021jekyll}
Romain Laroche and Remi Tachet~des Combes.
\newblock Dr jekyll \& mr hyde: the strange case of off-policy policy updates.
\newblock In \emph{Advances in Neural Information Processing Systems}, volume~34, pp.\  24442--24454, 2021.

\bibitem[Lattimore \& Szepesv{\'a}ri(2020)Lattimore and Szepesv{\'a}ri]{lattimore2020bandit}
Tor Lattimore and Csaba Szepesv{\'a}ri.
\newblock \emph{Bandit Algorithms}.
\newblock Cambridge University Press, 2020.

\bibitem[Li et~al.(2023)Li, Wei, Chi, and Chen]{li2023exponential}
Gen Li, Yuting Wei, Yuejie Chi, and Yuxin Chen.
\newblock Softmax policy gradient methods can take exponential time to converge.
\newblock \emph{Mathematical Programming}, 201\penalty0 (1):\penalty0 707--802, 2023.

\bibitem[Liu et~al.(2024)Liu, Li, and Wei]{liu2024elementaryanalysispolicygradient}
Jiacai Liu, Wenye Li, and Ke~Wei.
\newblock Elementary analysis of policy gradient methods.
\newblock \emph{arXiv}, 2024.

\bibitem[Lu et~al.(2024)Lu, Aghaei, Raj, and Vaswani]{lu2024towards}
Michael Lu, Matin Aghaei, Anant Raj, and Sharan Vaswani.
\newblock Towards principled, practical policy gradient for bandits and tabular mdps.
\newblock \emph{arXiv preprint arXiv:2405.13136}, 2024.

\bibitem[Mei \& Osband(2026)Mei and Osband]{mei2026delightful}
Jincheng Mei and Ian Osband.
\newblock {Delightful Gradients Accelerate Corner Escape}.
\newblock \emph{arXiv preprint arXiv:2605.11908}, 2026.

\bibitem[Mei et~al.(2020{\natexlab{a}})Mei, Xiao, Dai, Li, Szepesv{\'a}ri, and Schuurmans]{mei2020gravity}
Jincheng Mei, Chenjun Xiao, Bo~Dai, Lihong Li, Csaba Szepesv{\'a}ri, and Dale Schuurmans.
\newblock {Escaping the Gravitational Pull of Softmax}.
\newblock In \emph{Advances in Neural Information Processing Systems}, volume~33, 2020{\natexlab{a}}.

\bibitem[Mei et~al.(2020{\natexlab{b}})Mei, Xiao, Szepesv{\'a}ri, and Schuurmans]{mei2020global}
Jincheng Mei, Chenjun Xiao, Csaba Szepesv{\'a}ri, and Dale Schuurmans.
\newblock {On the Global Convergence Rates of Softmax Policy Gradient Methods}.
\newblock In \emph{Proceedings of the 37th International Conference on Machine Learning}, volume 119 of \emph{Proceedings of Machine Learning Research}, pp.\  6820--6829, 2020{\natexlab{b}}.

\bibitem[Mei et~al.(2023)Mei, Zhong, Dai, Agarwal, Szepesvari, and Schuurmans]{mei2023stochastic}
Jincheng Mei, Zixin Zhong, Bo~Dai, Alekh Agarwal, Csaba Szepesvari, and Dale Schuurmans.
\newblock Stochastic gradient succeeds for bandits.
\newblock In \emph{International Conference on Machine Learning}, pp.\  24325--24360. PMLR, 2023.

\bibitem[Nesterov(2013)]{nesterov2013introductory}
Yurii Nesterov.
\newblock \emph{Introductory lectures on convex optimization: A basic course}.
\newblock Springer Science \& Business Media, 2013.

\bibitem[{OpenAI}(2024)]{openai-o1}
{OpenAI}.
\newblock {OpenAI} o1 system card.
\newblock \emph{arXiv Preprint arXiv:2412.16720}, 2024.

\bibitem[Ouyang et~al.(2022)Ouyang, Wu, Jiang, Almeida, Wainwright, Mishkin, Zhang, Agarwal, Slama, Ray, Schulman, Hilton, Kelton, Miller, Simens, Askell, Welinder, Christiano, Leike, and Lowe]{ouyang2022training}
Long Ouyang, Jeffrey Wu, Xu~Jiang, Diogo Almeida, Carroll Wainwright, Pamela Mishkin, Chong Zhang, Sandhini Agarwal, Katarina Slama, Alex Ray, John Schulman, Jacob Hilton, Fraser Kelton, Luke Miller, Maddie Simens, Amanda Askell, Peter Welinder, Paul~F Christiano, Jan Leike, and Ryan Lowe.
\newblock Training language models to follow instructions with human feedback.
\newblock In \emph{Advances in Neural Information Processing Systems}, volume~35, pp.\  27730--27744. Curran Associates, Inc., 2022.

\bibitem[Pan et~al.(2025)Pan, Chen, Chen, Sun, Chen, Zhang, Xie, Huang, Zhang, Gao, Shi, Li, Ding, and Zhou]{trinity}
Xuchen Pan, Yanxi Chen, Yushuo Chen, Yuchang Sun, Daoyuan Chen, Wenhao Zhang, Yuexiang Xie, Yilun Huang, Yilei Zhang, Dawei Gao, Weijie Shi, Yaliang Li, Bolin Ding, and Jingren Zhou.
\newblock Trinity-{RFT}: A general-purpose and unified framework for reinforcement fine-tuning of large language models.
\newblock \emph{arXiv Preprint arXiv:2505.17826}, 2025.

\bibitem[Qin \& Springenberg(2025)Qin and Springenberg]{qin2025curated}
Chongli Qin and Jost~Tobias Springenberg.
\newblock {Supervised Fine Tuning on Curated Data is Reinforcement Learning (and can be improved)}.
\newblock \emph{arXiv preprint arXiv:2507.12856}, 2025.

\bibitem[Robertson et~al.(2025)Robertson, Chu, Dai, Schuurmans, Szepesvari, and Mei]{robertson2025reinforce}
Samuel Robertson, Thang Chu, Bo~Dai, Dale Schuurmans, Csaba Szepesvari, and Jincheng Mei.
\newblock Reinforce converges to optimal policies with any learning rate.
\newblock In \emph{Advances in Neural Information Processing Systems}, 2025.

\bibitem[Russo(2026)]{russo2026success}
Daniel Russo.
\newblock {Success Conditioning as Policy Improvement: The Optimization Problem Solved by Imitating Success}.
\newblock \emph{arXiv preprint arXiv:2601.18175}, 2026.

\bibitem[Schulman et~al.(2015)Schulman, Levine, Abbeel, Jordan, and Moritz]{schulman2015trust}
John Schulman, Sergey Levine, Pieter Abbeel, Michael Jordan, and Philipp Moritz.
\newblock Trust region policy optimization.
\newblock In \emph{International conference on machine learning}, pp.\  1889--1897. PMLR, 2015.

\bibitem[Schulman et~al.(2017)Schulman, Wolski, Dhariwal, Radford, and Klimov]{schulman2017proximal}
John Schulman, Filip Wolski, Prafulla Dhariwal, Alec Radford, and Oleg Klimov.
\newblock Proximal policy optimization algorithms.
\newblock \emph{arXiv preprint arXiv:1707.06347}, 2017.

\bibitem[Sheng et~al.(2024)Sheng, Zhang, Ye, Wu, Zhang, Zhang, Peng, Lin, and Wu]{verl}
Guangming Sheng, Chi Zhang, Zilingfeng Ye, Xibin Wu, Wang Zhang, Ru~Zhang, Yanghua Peng, Haibin Lin, and Chuan Wu.
\newblock {HybridFlow}: A flexible and efficient {RLHF} framework.
\newblock \emph{arXiv}, 2024.

\bibitem[Singh et~al.(2024)Singh, Co-Reyes, Agarwal, Anand, Patil, Garcia, Liu, Harrison, Lee, Xu, Parisi, Kumar, Alemi, Rizkowsky, Nova, Adlam, Bohnet, Elsayed, Sedghi, Mordatch, Simpson, Gur, Snoek, Pennington, Hron, Kenealy, Swersky, Mahajan, Culp, Xiao, Bileschi, Constant, Novak, Liu, Warkentin, Qian, Bansal, Dyer, Neyshabur, Sohl-Dickstein, and Fiedel]{singh2024humandatascalingselftraining}
Avi Singh, John~D. Co-Reyes, Rishabh Agarwal, Ankesh Anand, Piyush Patil, Xavier Garcia, Peter~J. Liu, James Harrison, Jaehoon Lee, Kelvin Xu, Aaron Parisi, Abhishek Kumar, Alex Alemi, Alex Rizkowsky, Azade Nova, Ben Adlam, Bernd Bohnet, Gamaleldin Elsayed, Hanie Sedghi, Igor Mordatch, Isabelle Simpson, Izzeddin Gur, Jasper Snoek, Jeffrey Pennington, Jiri Hron, Kathleen Kenealy, Kevin Swersky, Kshiteej Mahajan, Laura Culp, Lechao Xiao, Maxwell~L. Bileschi, Noah Constant, Roman Novak, Rosanne Liu, Tris Warkentin, Yundi Qian, Yamini Bansal, Ethan Dyer, Behnam Neyshabur, Jascha Sohl-Dickstein, and Noah Fiedel.
\newblock Beyond human data: Scaling self-training for problem-solving with language models.
\newblock \emph{arXiv}, 2024.

\bibitem[{\v{S}}trupl et~al.(2022){\v{S}}trupl, Faccio, Ashley, Srivastava, and Schmidhuber]{strupl2022rwr}
Miroslav {\v{S}}trupl, Francesco Faccio, Dylan~R Ashley, Rupesh~Kumar Srivastava, and J{\"u}rgen Schmidhuber.
\newblock {Reward-Weighted Regression Converges to a Global Optimum}.
\newblock \emph{Proceedings of the AAAI Conference on Artificial Intelligence}, 36\penalty0 (8):\penalty0 8361--8369, 2022.

\bibitem[Sutton \& Barto(1998)Sutton and Barto]{sutton1998reinforcement}
Richard~S Sutton and Andrew~G Barto.
\newblock \emph{Reinforcement learning: An introduction}.
\newblock MIT press Cambridge, 1998.

\bibitem[Sutton et~al.(1999)Sutton, McAllester, Singh, and Mansour]{sutton1999policy}
Richard~S Sutton, David McAllester, Satinder Singh, and Yishay Mansour.
\newblock Policy gradient methods for reinforcement learning with function approximation.
\newblock \emph{Advances in neural information processing systems}, 12, 1999.

\bibitem[Touvron et~al.(2023)Touvron, Martin, Stone, Albert, Almahairi, Babaei, Bashlykov, Batra, Bhargava, Bhosale, Bikel, Blecher, Ferrer, Chen, Cucurull, Esiobu, Fernandes, Fu, Fu, Fuller, Gao, Goswami, Goyal, Hartshorn, Hosseini, Hou, Inan, Kardas, Kerkez, Khabsa, Kloumann, Korenev, Koura, Lachaux, Lavril, Lee, Liskovich, Lu, Mao, Martinet, Mihaylov, Mishra, Molybog, Nie, Poulton, Reizenstein, Rungta, Saladi, Schelten, Silva, Smith, Subramanian, Tan, Tang, Taylor, Williams, Kuan, Xu, Yan, Zarov, Zhang, Fan, Kambadur, Narang, Rodriguez, Stojnic, Edunov, and Scialom]{touvron2023llama2openfoundation}
Hugo Touvron, Louis Martin, Kevin Stone, Peter Albert, Amjad Almahairi, Yasmine Babaei, Nikolay Bashlykov, Soumya Batra, Prajjwal Bhargava, Shruti Bhosale, Dan Bikel, Lukas Blecher, Cristian~Canton Ferrer, Moya Chen, Guillem Cucurull, David Esiobu, Jude Fernandes, Jeremy Fu, Wenyin Fu, Brian Fuller, Cynthia Gao, Vedanuj Goswami, Naman Goyal, Anthony Hartshorn, Saghar Hosseini, Rui Hou, Hakan Inan, Marcin Kardas, Viktor Kerkez, Madian Khabsa, Isabel Kloumann, Artem Korenev, Punit~Singh Koura, Marie-Anne Lachaux, Thibaut Lavril, Jenya Lee, Diana Liskovich, Yinghai Lu, Yuning Mao, Xavier Martinet, Todor Mihaylov, Pushkar Mishra, Igor Molybog, Yixin Nie, Andrew Poulton, Jeremy Reizenstein, Rashi Rungta, Kalyan Saladi, Alan Schelten, Ruan Silva, Eric~Michael Smith, Ranjan Subramanian, Xiaoqing~Ellen Tan, Binh Tang, Ross Taylor, Adina Williams, Jian~Xiang Kuan, Puxin Xu, Zheng Yan, Iliyan Zarov, Yuchen Zhang, Angela Fan, Melanie Kambadur, Sharan Narang, Aurelien Rodriguez, Robert Stojnic, Sergey Edunov, and Thomas Scialom.
\newblock Llama 2: Open foundation and fine-tuned chat models.
\newblock \emph{arXiv}, 2023.

\bibitem[Vaswani et~al.(2022)Vaswani, Bachem, Totaro, M\"uller, Garg, Geist, Machado, Samuel~Castro, and Le~Roux]{vaswani2022surrogates}
Sharan Vaswani, Olivier Bachem, Simone Totaro, Robert M\"uller, Shivam Garg, Matthieu Geist, Marlos~C. Machado, Pablo Samuel~Castro, and Nicolas Le~Roux.
\newblock A general class of surrogate functions for stable and efficient reinforcement learning.
\newblock In \emph{Proceedings of The 25th International Conference on Artificial Intelligence and Statistics}, volume 151 of \emph{Proceedings of Machine Learning Research}, pp.\  8619--8649. PMLR, 28--30 Mar 2022.

\bibitem[Williams(1992)]{williams1992simple}
Ronald~J Williams.
\newblock Simple statistical gradient-following algorithms for connectionist reinforcement learning.
\newblock \emph{Machine learning}, 8\penalty0 (3):\penalty0 229--256, 1992.

\bibitem[Zelikman et~al.(2022)Zelikman, Wu, Mu, and Goodman]{zelikman2022star}
Eric Zelikman, Yuhuai Wu, Jesse Mu, and Noah Goodman.
\newblock Star: Bootstrapping reasoning with reasoning.
\newblock In \emph{Advances in Neural Information Processing Systems}, volume~35, pp.\  15476--15488. Curran Associates, Inc., 2022.
\newblock \doi{10.52202/068431-1126}.

\bibitem[Zhang et~al.(2022)Zhang, des Combes, and Laroche]{zhang2022offpolicy}
Shangtong Zhang, Remi~Tachet des Combes, and Romain Laroche.
\newblock Global optimality and finite sample analysis of softmax off-policy actor critic under state distribution mismatch.
\newblock \emph{Journal of Machine Learning Research}, 23\penalty0 (343):\penalty0 1--91, 2022.

\bibitem[Zheng et~al.(2024)Zheng, Yin, Xie, Sun, Huang, Yu, Cao, Kozyrakis, Stoica, Gonzalez, Barrett, and Sheng]{2024sglangefficientexecutionofstructuredlanguagemodelprograms}
Lianmin Zheng, Liangsheng Yin, Zhiqiang Xie, Chuyue Sun, Jeff Huang, Cody~Hao Yu, Shiyi Cao, Christos Kozyrakis, Ion Stoica, Joseph~E. Gonzalez, Clark Barrett, and Ying Sheng.
\newblock Sglang: Efficient execution of structured language model programs.
\newblock In \emph{Conference on Neural Information Processing Systems (NeurIPS)}, 2024.

\end{thebibliography}
